\newcommand{\pill}[1]{\textbf{#1}}
\definecolor{Red}{RGB}{210, 0 , 30}
\definecolor{Blue}{RGB}{0, 30, 210}
\definecolor{Green}{RGB}{15, 210, 15}
\definecolor{Cyan}{RGB}{0, 190, 180}
\definecolor{Beryl}{RGB}{80, 170, 200}
\definecolor{Cerulean}{RGB}{42, 82, 190}
\definecolor{Azure}{RGB}{0, 127, 255}
\definecolor{Indigo}{RGB}{70, 0, 130}
\definecolor{Navy}{RGB}{0, 0, 128}
\definecolor{Sapphire}{RGB}{15, 82, 186}
\definecolor{Turquoise}{RGB}{64, 200, 210}
\definecolor{Ultramarine}{RGB}{20, 10, 150}
\definecolor{Bordeaux}{RGB}{120, 0, 5}
\newcommand{\notewarning}{%
\ifnum\totvalue{notecount}>0%
 \vspace{1ex}
\begin{center}
 \begin{tikzpicture}[baseline=(A.south)]
    \node (A) [] at (0,0){};
    \node [rounded corners=1pt,rectangle, draw=red, fill=red!20,text=black](B) at (0.1ex,0ex){
        \Large \raggedright {\bf Warning:} There are still some notes left!
    };
 \end{tikzpicture}
\end{center}
 \vspace{1ex}
\fi
}
\def\myaddcontentsline#1#2#3{%
  \addtocontents{#1}{\protect\contentsline{#2}{#3}{Section \thesubsection\ at p. \thepage}{}}}
\renewcommand{\@todonotes@addElementToListOfTodos}{%
    \if@todonotes@colorinlistoftodos%
        \myaddcontentsline{tdo}{todo}{{%
            \colorbox{\@todonotes@currentbackgroundcolor}%
                {\textcolor{\@todonotes@currentbackgroundcolor}{o}}%
            \ \@todonotes@caption}}%
    \else%
        \myaddcontentsline{tdo}{todo}{{\@todonotes@caption}}%
   \fi}%
\newcommand*\mylistoftodos{%
  \begingroup
       \setbox\@tempboxa\hbox{Section 9.9 at p. 99}%
       \renewcommand*\@tocrmarg{\the\wd\@tempboxa}%
       \renewcommand*\@pnumwidth{\the\wd\@tempboxa}%
       \listoftodos%
  \endgroup
}
\definecolor{lightgreen}{rgb}{0.86, 0.93, 0.78}
\definecolor{bordergreen}{rgb}{0.55, 0.76, 0.74}
\definecolor{lightblue}{rgb}{0.70, 0.90, 0.99}
\definecolor{borderblue}{rgb}{0.01, 0.66, 0.96}
\definecolor{lightamber}{rgb}{1, 0.93, 0.70}
\definecolor{borderamber}{rgb}{1, 0.76, 0.03}
\DeclareMathOperator*{\argmin}{\arg\!\min}
\newcommand\restr[2]{{% we make the whole thing an ordinary symbol
  \left.\kern-\nulldelimiterspace % automatically resize the bar with \right
  #1 % the function
  \littletaller % pretend it's a little taller at normal size
  \right|_{#2} % this is the delimiter
  }}
\newcommand{\littletaller}{\mathchoice{\vphantom{\big|}}{}{}{}}
\newtheorem{definition}{Definition}
\newtheorem{theorem}{Theorem}
\newtheorem{lemma}{Lemma}
\newtheorem{axiom}{Axiom}
\newtheorem*{example}{Example}
\newenvironment{reptheorem}[1]
  {\rthm}
  {\endrthm}
\gdef\@copyrightpermission{
  \begin{minipage}{0.2\columnwidth}
   \href{https://creativecommons.org/licenses/by/4.0/}{\includegraphics[width=0.90\textwidth]{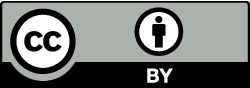}}
  \end{minipage}\hfill
  \begin{minipage}{0.8\columnwidth}
   \href{https://creativecommons.org/licenses/by/4.0/}{This work is licensed under a Creative Commons Attribution International 4.0 License.}
  \end{minipage}
  \vspace{5pt}
}
\title{Clone-Robust Weights in Metric Spaces: Handling Redundancy Bias in Benchmark Aggregation}
\author{Damien Berriaud}
\affiliation{
  \institution{ETHZ}
  \city{Z\"urich}
  \country{Switzerland}}
\email{dberriaud@ethz.ch}
\author{Roger Wattenhofer}
\affiliation{
  \institution{ETHZ }
  \city{Z\"urich}
  \country{Switzerland}}
\email{wattenhofer@ethz.ch}
\begin{abstract}
    We are given a set of elements in a metric space. The distribution of the elements is arbitrary, possibly adversarial. Can we weigh the elements in a way that is resistant to such (adversarial) manipulations?
    This problem arises in various contexts. For instance, the elements could represent data points, requiring robust domain adaptation. Alternatively, they might represent tasks to be aggregated into a benchmark; or questions about personal political opinions in voting advice applications.
    This article introduces a theoretical framework for dealing with such problems. We propose \textit{clone-proof weighting functions} as a solution concept. These functions distribute importance across elements of a set such that similar objects (``clones'') share (some of) their weights, thus avoiding a potential bias introduced by their multiplicity.
    Our framework extends the maximum uncertainty principle to accommodate general metric spaces and includes a set of axioms---symmetry, continuity, and clone-proofness---that guide the construction of weighting functions.
    Finally, we address the existence of weighting functions satisfying our axioms in the significant case of Euclidean spaces and propose a general method for their construction.

\end{abstract}
\keywords{Redundancy Bias; Near-Clones; Weight Sharing; Local Voting; Metric Space}
\newcommand{\BibTeX}{\rm B\kern-.05em{\sc i\kern-.025em b}\kern-.08em\TeX}
\begin{document}

%%% The following commands remove the headers in your paper. For final 
%%% papers, these will be inserted during the pagination process.

\pagestyle{fancy}
\fancyhead{}

%%% The next command prints the information defined in the preamble.

\maketitle 

%%%%%%%%%%%%%%%%%%%%%%%%%%%%%%%%%%%%%%%%%%%%%%%%%%%%%%%%%%%%%%%%%%%%%%%%

% Discuss Procaccia a little more

% Add reference to full-text

\section{Introduction}

Morpheus: ``You take the \textcolor{Blue}{\pill{blue}} pill and the story ends. You wake up in your bed and believe whatever you want to believe. 
You take the \textcolor{Red}{\pill{red}} pill, you stay in Wonderland and I show you how deep the rabbit hole goes.''
Before Neo can make his choice, Morpheus continues: ``Or you can take this \textcolor{Indigo}{\pill{indigo}} pill, and wake up in Wonderland with \$100 in your pocket. Or this \textcolor{Navy}{\pill{navy}} pills---with a different hair color.''
Why would Morpheus present these insignificant shades of blue? 
Neo already feels manipulated, but Morpheus continues excitedly and adds 
%even 
more pills colored in \textcolor{Bordeaux}{\pill{bordeaux}}, \textcolor{Cyan}{\pill{cyan}}, and \textcolor{Green}{\pill{green}}.

\begin{figure}[!tbh]
    \centering
    \begin{tikzpicture}[scale=7]
    \definecolor{PaperBlue}{HTML}{1f77b4}
    \definecolor{PaperOrange}{HTML}{ff7f0e}
    \definecolor{PaperGreen}{HTML}{2ca02c}
    \definecolor{PaperRed}{HTML}{d62728}
    \definecolor{PaperGrey}{HTML}{7f7f7f}
    \definecolor{PaperYellow}{HTML}{ffdd00}
    
    % Draw triangle 
    \coordinate (R) at (1, 0); % Red
    \coordinate (G) at (0, 0); % Green
    \coordinate (B) at (0.5, {sqrt(3)/2}); % Blue
    
    % Draw triangle
    \fill[PaperOrange, opacity=0.05] (R) -- (G) -- (B) -- cycle;
    
    % Draw and fill the arcs' intersections with the triangle
    % Centered at R
    \begin{scope}
        \clip (R) -- (G) -- (B) -- cycle; % Restrict to the triangle
        \fill[PaperRed, opacity=0.1] (0.7, 0) arc[start angle=180, end angle=120, radius=0.3];
    \end{scope}
    \fill[PaperRed, opacity=0.1] (1, 0) -- (0.85, 0.2598) -- (0.7, 0) -- cycle;

    % Centered at G
    \begin{scope}
        \clip (R) -- (G) -- (B) -- cycle; % Restrict to the triangle
        \fill[PaperGreen, opacity=0.1] (0.3, 0) arc[start angle=0, end angle=60, radius=0.3];
    \end{scope}
    \fill[PaperGreen, opacity=0.1] (0, 0) -- (0.15, 0.2598) -- (0.3, 0) -- cycle;
    
    % Centered at B
    \begin{scope}
        \clip (R) -- (G) -- (B) -- cycle; % Restrict to the triangle
        \fill[PaperBlue, opacity=0.1] (0.5 - 0.15, {sqrt(3)/2 - 0.259}) arc[start angle=-120, end angle=-60, radius=0.3];
    \end{scope}
    \fill[PaperBlue, opacity=0.1] (0.5, {sqrt(3)/2}) -- (0.35, {sqrt(3)/2 - 0.259}) -- (0.65, {sqrt(3)/2 - 0.259}) -- cycle;
    
    % Draw dashed arcs
    \draw[dashed, PaperRed] (0.7, 0) arc[start angle=180, end angle=120, radius=0.3];
    \draw[dashed, PaperGreen] (0.3, 0) arc[start angle=0, end angle=60, radius=0.3];
    \draw[dashed, PaperBlue] (0.5 - 0.15, {sqrt(3)/2 - 0.259}) arc[start angle=-120, end angle=-60, radius=0.3];
    
    % Function to normalize and calculate coordinates
    \newcommand{\plotcolor}[4]{
        % #1 = name, #2 = R, #3 = G, #4 = B
        \pgfmathsetmacro{\Sum}{#2 + #3 + #4}
        \pgfmathsetmacro{\RNorm}{#2 / \Sum}
        \pgfmathsetmacro{\GNorm}{#3 / \Sum}
        \pgfmathsetmacro{\BNorm}{#4 / \Sum}
        \pgfmathsetmacro{\x}{\RNorm * 1 + \GNorm * 0 + \BNorm * 0.5}
        \pgfmathsetmacro{\y}{\RNorm * 0 + \GNorm * 0 + \BNorm * sqrt(3)/2}
        \fill[#1] (\x, \y) circle (0.01) node[below right] {#1};
    }
    
    % Plot all the colors
    \plotcolor{Red}{210}{0}{30}
    \plotcolor{Bordeaux}{120}{10}{3}
    \plotcolor{Blue}{10}{30}{210}
    \plotcolor{Cyan}{0}{190}{180}
    \plotcolor{Indigo}{70}{0}{170}
    \plotcolor{Navy}{0}{0}{128}
    \plotcolor{Green}{5}{120}{3}
    
    \end{tikzpicture}

    \caption{Which weight should we give to each individual point?
    By symmetry, one would expect  
    the areas in \textcolor{Blue}{blue}, \textcolor{Red}{red} and \textcolor{Green}{green} to sum up to the same value, even though they contain different numbers of points. 
    How to deal with the addition of \textcolor{Cyan}{cyan} though?
    }
    \Description{The equilateral triangle represents the 2-simplex of normalized triples (R,G,B), where each point corresponds to a color obtained after normalization. The vertices R, G and B represent pure red, green, and blue, respectively.
    The shaded circular sectors at the vertices indicate symmetric neighborhoods of equal angular aperture, highlighting regions that are geometrically equivalent by symmetry. Although these regions contain different numbers of sampled colors, symmetry suggests they should contribute equal total weight.
    Interior points (e.g., cyan, indigo, bordeaux, navy) correspond to mixed colors positioned according to their barycentric coordinates. The figure illustrates the tension between geometric symmetry of the simplex and the non-uniform distribution of discrete samples within it, raising the question of how weights should be assigned, particularly when additional colors such as cyan lie between symmetric sectors.}
    \label{fig:intro_RGB_triangle}
\end{figure}

Is there an objective way to make a choice without being bamboozled 
(see Figure~\ref{fig:intro_RGB_triangle})?
This problem is at the heart of machine learning, but it can also be applied to various other areas, e.g., distributed systems or social choice. 
Let us set aside these related use cases and concentrate on our primary application: multi-task benchmark aggregation.

Consider a multi-task benchmark $\mathcal{B}$, e.g., GLUE~\cite{wang_glue_2019}.
The evaluation process of such a benchmark typically unfolds in two stages:
First, a set of benchmark tasks $T = \{T_i\}_{i \in [n]}$ is defined, where each task $T_i$ maps a model $m \in \mathcal{M}$ to a score $T_i(m) \in \mathbb{R}$;
Then, an aggregation rule $A$ combines the task-wise scores $(T_i(m))_{i \in [n]}$ into a single score $A(T(m)) \in \mathbb{R}.$ 

As discussed in prior works~\cite{colombo_what_2022,rofin_votenrank_2023,zhang_inherent_2024}, the choice of an aggregation rule $A$ is perhaps best understood through the lens of social choice theory, where each task acts as a voter. 
However, multi-task benchmarking diverges from traditional voting scenarios in a key way: \emph{anonymity}, or the equal treatment of votes, is not inherently required.
In fact, it is well recognized that some tasks may exhibit significant similarity, such as CoLA~\cite{warstadt_neural_2019} and SST-2~\cite{socher_recursive_2013} in the case of GLUE. 
Perhaps tasks, similarly to colors in Figure~\ref{fig:intro_RGB_triangle}, should share some of their weight? 
Indeed, the benchmark's outcome ought to remain unaffected by the inclusion of numerous highly similar tasks, as this could unfairly favor models that perform well on the original task over those excelling in other areas.

Although weighted aggregation rules have been proposed in the literature~\cite{rofin_votenrank_2023}, the weights are typically chosen arbitrarily by the benchmark designer. In contrast, we propose a principled method for determining tasks' weights. 
Specifically, we suggest that the designer: (i) settles on a relevant measure of similarity between tasks, and embeds them in a metric space $(E, d)$; and (ii) calculates the tasks' weights using a weighting function with desirable axiomatic properties.
In this article, we propose a set of axioms and weighting functions designed to be robust to noise and approximate clones, ensuring practical applicability in real-world settings. Using these weighting functions to assign task weights enables automatic scaling of the benchmark, i.e., the evaluation always benefits from adding new tasks, even if they are somewhat similar to previous tasks.

\paragraph{Contribution \& Outline}

This work proposes a mathematical framework for handling redundancy in a metric space. 
Specifically, we tackle the problem of determining the relative importance of elements in a finite set such that close-by elements, or ``clones,'' share some of their weight.
To extend the well-understood case of discrete metrics, where elements are either similar or equally dissimilar, we introduce in Section~\ref{sec:axioms} the concept of weighting function and propose a set of axioms that such functions should satisfy in general metric spaces. 
These properties can be broadly categorized into three key principles: symmetry, continuity, and clone-proofness.
Building on these foundations, we address in Section~\ref{sec:local_voting} the challenges of constructing functions that adhere to these axioms.
For the specific case of Euclidean spaces, we resolve the question of existence and construct, in Theorems~\ref{thm:local_vote_rep_func} and~\ref{thm:cont_convex_combi_gr}, a family of desirable weighting functions.
We discuss in Section~\ref{sec:computational_cons} the computational hurdles associated with their exact evaluation, and introduce Monte Carlo algorithms for their practical approximation.
We finally explore in Section~\ref{sec:discu} possible extensions of our construction to more general spaces.

\section{Related Work}

In a recent line of work \cite{colombo_what_2022,himmi_towards_2023,rofin_votenrank_2023,tatiana_how_2021,zhang_inherent_2024}, multi-tasks benchmarking practices have been scrutinized through the lenses of social choice theory. In particular, these works question the usage of the arithmetical mean to aggregate scores of different tasks in popular benchmarks \cite{koh_wilds_2021,wang_superglue_2020}
%wang_glue_2019,
and investigate different aggregates such as the Pythagorean means \cite{tatiana_how_2021}, the Bradley-Terry model \cite{peyrard_better_2021}, or classical voting rules \cite{colombo_what_2022,himmi_towards_2023,rofin_votenrank_2023}.

Contrary to usual voting scenarios where the equal treatment of voters is of utmost importance, there is apriori no requirement to treat each task equally in benchmark aggregation scenarios, and we may want to consider voting schemes with different weights, e.g., chosen arbitrarily by the benchmark creator.
These weights may however carry more information than arbitrary preference. In \cite{balduzzi_re-evaluating_2018}, researchers proposed to model the evaluation of agents on different tasks through a zero-sum meta-game played between an ``agent'' and a ``task'' player, each choosing a probability distribution over the corresponding set. Scores on different tasks are then aggregated with a weighted average, where the weights correspond to the probability of playing each task in the entropy-maximizing Nash Equilibrium. One of the desirable properties of this technique is that it is invariant under the addition of \emph{exact} copies of agents, a property which has been studied under the appellation \emph{false-name-proofness} in social choice theory \cite{conitzer_using_2010,nehama_manipulation-resistant_2022,todo_characterizing_2009}. 

Similarly, independence of clones and its stronger form, composition consistency \cite{tideman_independence_1987,laffond_composition-consistent_1996,brandl_consistent_2016}, have been proposed as desirable principles for handling the cloning of alternatives. 
Our approach differs in that, without an internal order to distinguish between clones, we choose to treat them all symmetrically. 
Furthermore, all the above properties are very brittle and offer no guarantees whenever minimal noise is added to a clone.

Shortly before our preprint was posted,   \citet{procaccia_clone-robust_2025} proposed robustness to approximate clones as a desirable property for preference aggregation in reinforcement learning with human feedback. They propose a clone-robust weighing of the alternatives based on Voronoi diagrams. 
However, we show in Appendix~\ref{sec:voronoi} that this weighting function has undesirable properties;
in particular, it is discontinuous in each perfect clone, and close-by elements may receive entirely dissimilar weights.

Importantly, our framework assumes that practitioners provide a suitable distance metric, as its selection lies beyond the scope of this work.
Identifying an appropriate distance metric between tasks or datasets is a critical prerequisite for applying our approach effectively. 
Fortunately, this challenge has been extensively explored 
%in prior works 
\cite{alvarez-melis_geometric_2020,gretton_kernel_2012,liu_wasserstein_2022}, particularly within the transfer learning literature \cite{achille_task2vec_2019,peng_domain2vec_2020}. 
This existing body of research complements our work and provides valuable guidance for practitioners seeking to use our framework for benchmark aggregation.

We further discuss in Appendix~\ref{sec:further_related_works}
potential applications of our framework, and highlight additional connections to related areas.

\section{Weighting Functions and Desirable Axioms}\label{sec:axioms}

In this section, we formally introduce weighting functions and propose a set of axioms that we consider essential for generalizing the well-understood case of discrete metrics.
Notations are introduced as they appear, a summary is however provided in Appendix~\ref{sec:def}.

Consider a metric space $(E,d)$, that is a set $E$ equipped with a notion of distance in the form of an operator $d:E\times E \mapsto \mathbb{R}_{\geq 0}$ satisfying \emph{separability}, \emph{symmetry} and \emph{triangular inequality}. 
We now formally define the object of interest of this work, called \emph{weighting functions of $(E, d)$}.

\begin{definition}[Weighting functions of $(E, d)$]
    A weighting function of $(E, d)$ is a function $f$ that maps finite sets of $E$ to probability distributions over their elements, i.e.,
    \begin{equation*}
        \begin{aligned}
            f: S \in \mathcal{P}(E) &\mapsto p_S \in \Delta(S),
        \end{aligned}
    \end{equation*}
    where $\mathcal{P}(E)$
    denotes the set containing all finite subsets of $E$ (outside the empty set),
    and $\Delta(S) = \big\{\ p_S : S \mapsto [0,1] \mid \sum_{x \in S} p
    _S(x) =1 \big\}$ denotes the  
    simplex over the elements of $S.$ 
    We moreover refer to the probability distribution $f(S): S \mapsto [0,1]$ as the weighting of $S.$
\end{definition}

Note that this definition encompasses the \emph{uniform distribution} as a particular case of weighting function.
Indeed, consider the discrete metric space $(E,\rho)$, where $\rho(x,y)$ is equal to one if $x \neq y$ and zero otherwise. Then the maximum entropy principle compels us to use the \emph{uniform weighting function} $\mathcal{U} : S \in \mathcal{P}(E) \mapsto \mathbb{1}_S(\cdot) / |S| \in \Delta(S).$

Drawing inspiration from the properties of this particular weighting function, we next introduce a few axioms 
that we argue are desirable for a general metric space $(E,d)$ and weighting function $f$ thereof.
The first desirable property that the uniform weighting $\mathcal{U}$ verifies is rather simple: it ensures that all elements of a finite set are represented with positive probability. This means that it never hurts to add new elements to a set as the support of the probability distribution given by the weighting function only increases.

\begin{axiom}[Positivity]\label{axi:pos}
Every element of a finite set is represented with positive probability, i.e.,
for all finite subset $S \in  \mathcal{P}(E)$ and element $x$ in  $S$, we have $f(S)(x) >0.$
\end{axiom}

The second property of $\mathcal{U}$ that we would want to extend to a generic $f$ is that of symmetry: 
when the distance is uninformative and some elements are isomorphic with respect to a distance preserving permutation $\sigma_S: S \mapsto S$, they receive similar weights. In particular, if all elements of a finite subset $S$ are equidistant, then $f(S)$ should be uniform over $S.$

\begin{axiom}[Symmetry]\label{axi:sym}
    Elements of a set that are symmetric with respect to the metric are equally represented, i.e., 
    for all finite subset $ S \in  \mathcal{P}(E)$ and self-isometry $\sigma_S: S \mapsto S$, it holds for all $x \in S$ that $f(S)(x) = f(S)(\sigma_S(x)). $
\end{axiom}

Importantly,  $\sigma_S$ need not be extendable to a full isometry on $E$ (c.f. Appendix~\ref{sec:def}). Moreover, determining the automorphism group of a set $S$ is an instance of the \emph{graph automorphism problem}, which is known to be 
solvable in quasi-polynomial time \cite{helfgott_graph_2017}, but is neither known to be in P nor to be NP-complete. 
Luckily, two symmetric elements $x$ and $\sigma_S(x)$ possess the same multi-set of distances $\{\{d(x,y)\}\}_{y\in S}$ and we only need to make sure that similar multi-sets lead to similar weightings.

Since perfect clones at distance precisely zero are always isomorphic with one another, Axiom~\ref{axi:sym} requires in particular that they get equal weights, and can hence be thought of as requiring \emph{fairness among perfect clones.} 
We may want to extend this fairness requirement beyond perfect clones to include approximate ones as well. Unequal treatment between the two could undermine robustness -- particularly in adversarial settings like data poisoning, where strategically crafted approximate clones could divert all the weight away from the original elements.

\begin{axiom}[Uniform Clone Fairness]\label{axi:clone_fair_uni}
Weighting is fair among approximate clones, i.e.,
for all $\varepsilon>0$, there exists $\delta>0$ such that, for all finite subset $S\in \mathcal{P}(E)$ and $x,y$ in $S$ satisfying $d(x,y)\leq \delta$, it holds that $\vert f(S)(x) - f(S)(y) \vert \leq \varepsilon.$
\end{axiom}

Finally, the third property that $\mathcal{U}$ trivially satisfies is that of continuity, since the topology induced by the metric $\rho$ is the discrete one. Intuitively, we would want to ensure that slightly perturbing each element of a subset $X$ does not result in large variations in weighting. 
Formally, we define, for two finite subsets $X$ and $Y$ in $\mathcal{P}(E)$ of cardinality $k\in \mathbb{N}$, the \emph{transport distance}  $d_\Pi(X, Y) = d_\Pi(Y,X) = \min_{\pi \in \mathrm{Bij}(Y, X)}\max_{y \in Y} d(y, \pi(y))$, where $\mathrm{Bij}(Y, X)$ denotes the set of bijections from $Y$ to $X$. We similarly define the set of \emph{minimal transport maps} from  $Y$ to $X$ as $\Pi(Y,X) = \argmin_{\pi \in\mathrm{Bij}(Y, X)} \max_{y \in Y} d(y, \pi(y)).$
Using $\pi \in \Pi(Y, X)$ to identify element $x$ in $X$ with a $\pi^{-1}(x)$ in $ Y$, we then require that both get similar weights in their respective sets.

\begin{axiom}[Uniform Individual Continuity]\label{axi:indiv_cont}
Weighting is \\element-wise continuous, i.e., for all $\varepsilon>0$ and $k\in \mathbb{N}$, there exists  $\delta>0$ such that, for all finite subsets $X, Y \in \mathcal{P}(E)$ of cardinality $\vert X \vert = \vert Y \vert = k$ such that $d_\Pi(X,Y) \leq \delta$, 
we have $\max_{x\in X} \vert f(X)(x) - f(Y)(\pi^{-1}(x)) \vert \leq \varepsilon$, where $\pi \in \Pi(Y,X).$
\end{axiom}

We show in Appendix~\ref{sec:metric_cont&axioms_disc} that Axiom~\ref{axi:indiv_cont} implies continuity with the Wasserstein metric on the codomain of $f.$
Though similar in formulation, note that Axiom~\ref{axi:clone_fair_uni} cannot be derived from Axiom~\ref{axi:indiv_cont} by simply plugging in $Y = X$, since the identity is always the unique minimal transport map (in the absence of perfect clones).

One might wonder why we restrict our continuity requirement to sets of the same cardinality. Indeed, for two finite subsets $X$ and $Y$ with cardinality $\vert Y \vert \geq \vert X \vert$, it is possible to extend the definition of $d_\Pi(Y,X)= d_\Pi(X,Y) = \min_{\pi \in\mathrm{Surj}(Y, X)} \max_{y \in Y} d(y, \pi(y))$ by requiring that $\pi \in \mathrm{Surj}(Y, X)$ is only a surjection (we similarly extend the definition of $\Pi(Y,X)$). We then show in Appendix~\ref{sec:metric_cont&axioms_disc} that $d_\Pi$ constitutes a metric on the whole domain $\mathcal{P}(E).$ 
Importantly, note that only sets $Y$ of cardinality greater or equal to that of $X$ satisfy $d_\Pi(X,Y) \leq \delta$ for small enough $\delta$; indeed, for $\delta$ smaller than $\underline{d}(X) = \min_{x \neq x' \in X} d(x,x') /2$, no element $y $ in $E$ can be simultaneously $\delta$-close to distinct $x$ and $x'$ in $X$ (c.f. Figure~\ref{fig:d_Pi}). 
For such $Y$, a surjection $\pi:Y\mapsto X \in\Pi(X,Y)$ still offers a natural way to identify elements of $Y$ with those of $X$, and we could think of each $\pi^{-1}(x) = \{ y\in Y \mid \pi(y) =x \}$ as a \emph{class of clones} since all $y, y'$ in $\pi^{-1}(x)$ are at distance at most $2\delta$ by the triangle inequality.

Summing weights locally over each class of clones,
we could then collapse some of the dimensions of the codomain $\Delta(Y)$ and identify it with $\Delta(X).$ 
With this intuition, we then define \emph{class continuity} as follows.

\begin{axiom}[Class Continuity]\label{axi:class_cont}
Weights are class-wise continuous, i.e.,
for a finite subset $X\in \mathcal{P}(E)$ and $\varepsilon>0$, there exists $ \delta$ verifying $\min_{x \neq x' \in X} d(x,x') /2 > \delta >0$ such that, for each finite subset $Y \in \mathcal{P}(E)$ satisfying $d_\Pi(X,Y) \leq \delta$, we have $\max_{x\in X} \big\vert f(X)(x) - \sum_{y \in \pi^{-1}(x)} f(Y)(y) \big\vert \leq \varepsilon$, where $\pi \in \Pi(X,Y).$ 
\end{axiom}

Note that Axiom~\ref{axi:class_cont} ensures a form of \emph{cloneproofness}, i.e., robustness of weighting under the addition of clones. 
Intuitively, a $\delta$-neighboring set $Y$ of $X$ with greater cardinality contains many $\delta$-clones, and Axiom~\ref{axi:class_cont}  ensures that both sets get ``similar weightings'' when summing probabilities locally over the redundancies in $Y.$

However, we argue that the local summation in Axiom~\ref{axi:class_cont}, although intuitive, is too strong of a requirement.
Indeed, we show hereafter that a weighting function $f$ 
verifying Axioms~\ref{axi:sym} and Axiom~\ref{axi:class_cont}
gives very different individual weights to points in nearby sets, and breaks Axiom~\ref{axi:clone_fair_uni}. 
Note that similar arguments explain the choice of $d_\Pi$ over a perhaps more standard \emph{Hausdorff distance} (c.f.  Appendix~\ref{sec:metric_id}).

\begin{figure*}[!htb]
     \centering
     \begin{subfigure}[t]{0.31\textwidth}
         \centering
         \begin{tikzpicture}[scale=0.6]
            \def\radius{0.7}
            \definecolor{PaperBlue}{HTML}{1f77b4}
            \definecolor{PaperHatch}{HTML}{d3d3d3}
            \definecolor{PaperOrange}{HTML}{ff7f0e}
            \definecolor{PaperGreen}{HTML}{2ca02c}
            \definecolor{PaperRed}{HTML}{d62728}
        
          % Define blue points
          \coordinate (x1) at (0, 0);
          \coordinate (x2) at (-0.75, 4);
          \coordinate (x3) at (0.75, 4);
        
          % Define red points near blue ones
          \coordinate (y1) at (0.2, -0.3);
          \coordinate (y2) at (-0.9, 3.7);
          \coordinate (y3) at (1.3, 4.2);
          \coordinate (y4) at (0.9, 3.5);

          \draw[-<, thin, black] (0, 0) -- (-0.55, -0.55);
          \draw[thin, black] (0, 0) -- (-0.75, -0.75);
          \node at (-1, -0.5) {\(\delta\)};

          \node[PaperBlue] at (-2, 1.2) {\(X = \{ x_i\}_{i \in [3]}\)};
          \node[PaperRed] at (2, 1.2) {\(Y = \{ y_i\}_{i \in [4]}\)};

          % Draw delta circles around blue points
          \draw[black, dashed] (x1) circle (\radius);
          \draw[black, dashed] (x2) circle (\radius);
          \draw[black, dashed] (x3) circle (\radius);
          \fill[PaperBlue, opacity =0.1] (x1) circle (\radius);
          \fill[PaperBlue, opacity =0.1] (x2) circle (\radius);
          \fill[PaperBlue, opacity =0.1] (x3) circle (\radius);

          % Draw blue points
          \fill[PaperBlue] (x1) circle (2pt) node[above ] {$x_1$};
          \fill[PaperBlue] (x2) circle (2pt) node[above ] {$x_2$};
          \fill[PaperBlue] (x3) circle (2pt) node[above ] {$x_3$};
        
          % Draw red points
          \fill[PaperRed] (y1) circle (2pt) node[below right] {$y_1$};
          \fill[PaperRed] (y2) circle (2pt) node[below left] {$y_2$};
          \fill[PaperRed] (y3) circle (2pt) node[above right] {$y_3$};
          \fill[PaperRed] (y4) circle (2pt) node[below right] {$y_4$};

          \draw[<->, thin] (-0.75, 2.5) -- (0.75, 2.5) node[midway, below] {\(\underline{d}(X)\)};
          \draw[thin, dotted] (-0.75, 2.5) -- (x2);
          \draw[thin, dotted] (0.75, 2.5) -- (x3);

        \end{tikzpicture}
        \caption{ Sets $X$ and $Y$ at distance $d_\Pi(X,Y) \leq \delta < \underline{d}(X) /2.$}
        \label{fig:d_Pi}
     \end{subfigure}
     \hfill
     \begin{subfigure}[t]{0.294\textwidth}
         \centering
         \begin{tikzpicture}[scale=2.4]
            \tdplotsetmaincoords{60}{135}  
            \begin{scope}[tdplot_main_coords]
                \def\alphaval{0.4}
                \def\gammaval{0.1}
                \def\axislen{0.7}
                \def\zaxislen{1.3}
                \definecolor{PaperBlue}{HTML}{1f77b4}
                \definecolor{PaperHatch}{HTML}{d3d3d3}
                \definecolor{PaperOrange}{HTML}{ff7f0e}
                \definecolor{PaperGreen}{HTML}{2ca02c}
                \definecolor{PaperRed}{HTML}{d62728}
                
                % 3D coordinates (x,y,z)
                \coordinate (O) at (0,0,0);
                \coordinate (U) at (\alphaval,0,1);
                \coordinate (Vplus) at (-\alphaval,\gammaval,1);   
                \coordinate (Vminus) at (-\alphaval,-\gammaval,1);
                
                \draw[PaperBlue, line width = 2pt, opacity=0.2] (Vplus) -- (Vminus) ;
                \draw[PaperBlue, line width = 2pt, opacity=0.2] (U) -- (-\alphaval,0,1) ;
                
                % Points 
                \fill[PaperBlue] (O) circle (0.7pt) node[tdplot_screen_coords, anchor=east, xshift=-1pt] {$o$};
                \fill[PaperBlue] (U) circle (0.7pt) node[tdplot_screen_coords, anchor=north east, yshift=1pt] { $u_\alpha$};
                \fill[PaperBlue] (Vplus) circle (0.7pt) node[tdplot_screen_coords, anchor=north west ] { $v^{+}_{\alpha,\gamma}$};
                \fill[PaperBlue] (Vminus) circle (0.7pt) node[tdplot_screen_coords, anchor=south , xshift=-2pt, yshift = 2pt] { $v^{-}_{\alpha,\gamma}$};
                
                % dotted reference axes 
                \draw[dotted,very thin] (-\axislen,0,0) -- (\axislen,0,0); % x-axis
                \draw[dotted,very thin] (0,-\axislen,0) -- (0,\axislen,0); % y-axis
                \draw[dotted,very thin] (0,0,-0.13*\zaxislen) -- (0,0,\zaxislen); % z-axis
                
                % axis labels (projected)
                \node at (\axislen + 0.1,0,0) {$x$};
                \node at (0,\axislen + 0.1,0) {$y$};
                \node at (0,0,\zaxislen + 0.1) {$z$};

                \node[align=center] at ( { 0.5}, {0 }, {0.6} )
                { $p\big(u_\alpha\big)\approx\frac{1}{4}$};
                \node[align=center] at ( { -0.8}, {0 }, {0.4} )
                { $p\Big(v^{+}_{\alpha,\gamma}\Big)=p\Big(v^{-}_{\alpha,\gamma}\Big)\approx\frac{1}{8}$};
            \end{scope}
            \end{tikzpicture}
         \caption{$S_{\alpha, \alpha, \gamma}$ for $1 \gg \alpha \gg \gamma >0.$}
         \label{fig:visualization_power_two}
     \end{subfigure}
     \hfill
     \begin{subfigure}[t]{0.32\textwidth}
        \centering
        \begin{tikzpicture}[scale=2.4]
            \tdplotsetmaincoords{60}{135} 
            \begin{scope}[tdplot_main_coords]
                \def\alphaval{0.4}
                \def\axislen{0.7}
                \def\zaxislen{1.3}
                \definecolor{PaperBlue}{HTML}{1f77b4}
                \definecolor{PaperHatch}{HTML}{d3d3d3}
                \definecolor{PaperOrange}{HTML}{ff7f0e}
                \definecolor{PaperGreen}{HTML}{2ca02c}
                \definecolor{PaperRed}{HTML}{d62728}
                
                % 3D coordinates (x,y,z)
                \coordinate (O) at (0,0,0);
                \coordinate (U) at (\alphaval,0,1);
                \coordinate (Vplus) at (-\alphaval/2,{0.86602540378*\alphaval},1);   
                \coordinate (Vminus) at (-\alphaval/2,{-0.86602540378*\alphaval},1);
                
                % triangle
                \fill[PaperBlue,opacity=0.18] (U) -- (Vplus) -- (Vminus) -- cycle;
            
                % Points 
                \fill[PaperBlue] (O) circle (0.7pt) node[tdplot_screen_coords, anchor=east, xshift=-1pt] {$o$};
                \fill[PaperBlue] (U) circle (0.7pt) node[tdplot_screen_coords, anchor=north east, yshift=1pt] { $u_\alpha$};
                \fill[PaperBlue] (Vplus) circle (0.7pt) node[tdplot_screen_coords, anchor=north west, xshift=0pt] { $v^{+}_{\alpha/2,\sqrt{3}\alpha/2}$};
                \fill[PaperBlue] (Vminus) circle (0.7pt) node[tdplot_screen_coords, anchor=south east, xshift=0pt] { $v^{-}_{\alpha/2,\sqrt{3}\alpha/2}$};
                
                % dotted reference axes 
                \draw[dotted,very thin] (-\axislen,0,0) -- (\axislen,0,0); % x-axis
                \draw[dotted,very thin] (0,-\axislen,0) -- (0,\axislen,0); % y-axis
                \draw[dotted,very thin] (0,0,-0.13*\zaxislen) -- (0,0,\zaxislen); % z-axis
                
                % axis labels 
                \node at (\axislen + 0.1,0,0) {$x$};
                \node at (0,\axislen + 0.1,0) {$y$};
                \node at (0,0,\zaxislen + 0.1) {$z$};
                
                % formula 
                \node[align=center] at ( { 0}, {0 }, {0.4} )
                { $p\big(u_\alpha\big)=p\Big(v^{+}_{\alpha/2,\sqrt{3}\alpha/2}\Big)=p\Big(v^{-}_{\alpha/2,\sqrt{3}\alpha/2}\Big)\approx\frac{1}{6}$};
            \end{scope}
            \end{tikzpicture}
        \caption{$S_{\alpha, \alpha/2, \sqrt{3} \alpha/ 2 }$ for $1\gg \alpha>0$}
        \label{fig:visualization_equilateral}
     \end{subfigure}
     
        \caption{Visualization of the neighborhoods of $d_\Pi$, and of the divergence of individual weightings under Axioms~\ref{axi:sym} and~\ref{axi:class_cont}. The edges in%Figure
        ~\ref{fig:visualization_power_two} highlight the symmetries of $S_{\alpha, \alpha, \gamma}$ in the limit $\gamma \to 0$; the equilateral triangle in%Figure
        ~\ref{fig:visualization_equilateral} displays the symmetry of $S_{\alpha, \alpha/2, \sqrt{3} \alpha/ 2 }.$}
        \Description{Subfigure a shows illustrate that there are only sets of higher cardinality in a small enough vicinity of a finite set X. Subfigures b and c illustrate the construction for diverging individual weights in the example.}
        \label{fig:visualization_param_families}
\end{figure*}

\begin{example}[Diverging Individual Weights.]

Let $f$ be a weighting function on the three-dimensional Euclidean space $(\mathbb{R}^3, d_2)$ satisfying both Axioms~\ref{axi:sym} and~\ref{axi:class_cont}, and define the parametric family $S_{\alpha,\beta, \gamma} = \big\{o, u_\alpha, v^+_{\beta,\gamma}, v^-_{\beta,\gamma} \big\}$, where $o=(0,0,0)$, $u_\alpha = (1,\alpha, 0)$, $v^+_{\beta,\gamma} = (1,-\beta, \gamma)$ and $v^-_{\beta,\gamma} = v^+_{\beta,-\gamma}.$ Figure~\ref{fig:visualization_param_families} summarizes our construction.

On one hand, consider the set $S_{\alpha, \alpha, \gamma}$, where $\alpha>0$ is fixed and $\gamma>0$ is much smaller than $\alpha.$
Since $S_{\alpha, \alpha, \gamma}$ converges to the set $S_{\alpha} = \{o, u_\alpha, u_{-\alpha} \}$ in $(\mathcal{P}(\mathbb{R}^3), d_\Pi)$ when $\gamma$ goes to zero, Axiom~\ref{axi:class_cont} implies $\lim_{\gamma\to 0} f(S_{\alpha, \alpha, \gamma})(u_\alpha) = f(S_{\alpha})(u_\alpha).$
Moreover, $S_{\alpha}$ converges in turn to the symmetric set $S_0 = \{o, u_0\}$, 
%when $\alpha$ goes to zero,
and Axioms~\ref{axi:sym} and~\ref{axi:class_cont} together imply that $\lim_{\alpha\to 0} f(S_\alpha)(u_\alpha) = f(S_0)(u_0) /2 = 1/4.$ 
Combining these two results, we  get
\begin{equation*}
\lim_{\alpha\to 0} \lim_{\gamma\to 0} f(S_{\alpha, \alpha, \gamma})(u_\alpha) =1/4.
\end{equation*}

On the other hand, consider the set $S_{\alpha, \alpha/2, \sqrt{3} \alpha/ 2 }.$ Note that the points $u_\alpha$, $v^+_{\alpha/2, \sqrt{3} \alpha/ 2}$ and $v^-_{\alpha/2, \sqrt{3} \alpha/ 2}$ form an equilateral triangle centered in $u_0$
%$(1,0,0)$
and orthogonal to the origin, hence by Axiom~\ref{axi:sym} they must receive similar weights. 
As $S_{\alpha, \alpha/2, \sqrt{3} \alpha/ 2 }$ also converges to the symmetric $S_0$ when $\alpha$ goes to zero, Axioms~\ref{axi:sym} and~\ref{axi:class_cont} finally imply 
\begin{equation*}
\lim_{\alpha\to 0} f \big(S_{\alpha, \alpha/2, \sqrt{3} \alpha/ 2 }\big) (u_\alpha) = 1/6.
\end{equation*}

We hence constructed two arbitrarily close sets of the same cardinality whose individual weightings differ. Moreover, 
$u_\alpha$ and $v^+_{\alpha,\gamma}$ receive vastly different weights in the limit $\alpha\to 0, \gamma\to 0$, although their distance tends to zero, i.e.,
$$\frac{1}{4} = \lim_{\alpha\to 0} \lim_{\gamma\to 0} f(S_{\alpha, \alpha, \gamma})(u_\alpha) \neq \lim_{\alpha\to 0} \lim_{\gamma\to 0} f(S_{\alpha, \alpha, \gamma})(v^+_{\alpha,\gamma}) = \frac{1}{8} ,$$
and Axiom~\ref{axi:clone_fair_uni} breaks.

\end{example}

Given this incompatibility, we preserve Axiom~\ref{axi:clone_fair_uni} and opt for the restriction to sets of similar cardinality in Axiom~\ref{axi:indiv_cont}. 
However, note that Axiom~\ref{axi:indiv_cont},  unlike Axiom~\ref{axi:class_cont}, does not directly address the addition of clones. To account for this, we introduce another weakening of Axiom~\ref{axi:class_cont}, essentially requiring continuity of weighting everywhere except in the vicinity of the newly added clone. This relaxation allows for greater flexibility in how the mass is redistributed locally.

\begin{axiom}[Uniform $\alpha$-Locality under Addition of Clones]\label{axi:alpha_clone_locality}
The addition of a clone only changes the weights of points in the $\alpha$-neighborhood of the clone, i.e.,
for all $\varepsilon >0$, there exists $\delta>0$ such that, for each finite subset $S \in \mathcal{P}(E)$ and elements $x\in S$ and $x' \in E \setminus S$ satisfying $d(x,x') \leq \delta$, we have for all $z\in S$ such that $d(x,z)\geq \alpha$ that $\vert f(S)(z) - f(S\cup\{x'\})(z) \vert \leq \varepsilon.$
\end{axiom}

Note that Axioms~\ref{axi:clone_fair_uni} and~\ref{axi:alpha_clone_locality} provide orthogonal restrictions in the presence of clones: the former dictates how to shift weights around the recently introduced clone, while the latter ensures weights do not change away from it.
We further discuss the relationship between these axioms in Appendix~\ref{sec:metric_cont&axioms_disc}, and refer the interested reader to Appendix~\ref{sec:metric_id} for the treatment of perfect clones.

\smallbreak

We finally denote by $\mathcal{R}_\alpha(E,d) $ the set of weighting functions on $(E,d)$ satisfying Axioms~\ref{axi:pos}, \ref{axi:sym}, \ref{axi:clone_fair_uni}, \ref{axi:indiv_cont} and~\ref{axi:alpha_clone_locality} with parameter $\alpha>0.$ 
The burning question is now this: does the set $\mathcal{R}_\alpha(E,d)$ contain any elements at all?

\paragraph{Back to the Motivating Example.}

Let's now examine the guarantees provided by our axioms in the context of multi-task benchmark aggregation. First, Axiom~\ref{axi:pos} ensures that each task has a contribution to the final aggregated score (in fact, we show a stronger guarantee for $g_r$ in the proof of Theorem~\ref{thm:local_vote_rep_func}).  
Axioms~\ref{axi:sym} and~\ref{axi:clone_fair_uni} both reflect principles of anonymity in social choice. Axiom~\ref{axi:sym} ensures some level of \emph{isotropy} in the embedding space, meaning that tasks which are indistinguishable under the symmetry of the embedding space receive equal weights. Axiom~\ref{axi:clone_fair_uni} guarantees that tasks that are deemed similar based on their embedding will be assigned similar weights.  
Axiom~\ref{axi:indiv_cont} addresses the \emph{continuity of the weighting}. It ensures that small changes in a task --such as adding a few elements to a test set-- should not lead to large changes in the assigned weight.  
Finally, Axiom~\ref{axi:alpha_clone_locality} enforces \emph{weight sharing for similar tasks}. It ensures that tasks with sufficiently different characteristics remain unaffected by the introduction of new yet partially redundant tasks to the benchmark.

Philosophically, our framework offers insight into the assumptions about the space of tasks that are baked into a given choice of weights.
For instance, giving equal weight to each task could reflect various possible scenarios: it could suggest that tasks are either all equally dissimilar, all similar, or simply symmetric within the space. The continuity in Axiom~\ref{axi:indiv_cont} is also of philosophical significance. While the construction of a benchmark is inherently a discrete process, one may hope that small changes around the benchmark (such as adding noise) do not result in completely different outcomes.

\section{Local Voting Approach}\label{sec:local_voting}

To construct weighting functions that satisfy our axioms, the first step is to identify invariant objects under the addition of clones: we argue that the open balls around an element $x \in E$, that is $B_r (x) := \{y \in E \mid d(x,y) < r \}$ for some radius $r>0$, are natural invariants for our problem. 
% the open balls of the topology, that is the $B_r (x) := \{y \in E \mid d(x,y) < r \}$ for some element $x \in E$ and radius $r>0$, are natural invariants for our problem. 
Indeed, they are stable under the addition of clones, in the sense that for some \emph{$\delta$-clones} $x,y$ in  $E$ satisfying $d(x,y) \leq \delta$, the triangle inequality ensures that $B_r(x) \subseteq B_r(x) \cup B_r(y) \subseteq B_{r+\delta}(x).$ 
If we then equip our space with a measure $\mu$ defined on the open balls of the space\footnote{I.e., on the Borel $\sigma$-algebra.} 
and associate with each finite subset $X\subseteq E$ its neighborhood $B_r(X):= \bigcup_{x\in X} B_r(x)$, we obtain a map invariant under clone addition. Indeed, for each neighboring  finite set $Y \subseteq B_\delta(X)$ with $r>\delta>0$, we have $\mu\big(B_{r- \delta}(X)\big) \leq \mu\big(B_r(Y)\big) \leq \mu\big(B_{r+ \delta}(X)\big)$ and the map $X\in \mathcal{P}(E) \mapsto \mu\big(B_r(X)\big)$ is continuous with respect to the  distance $d_\Pi$.\footnote{At least when $\mu$ is locally finite.}

Note however that further requirements are needed to satisfy the symmetry in Axiom~\ref{axi:sym}, essentially regarding the homogeneity and the isotropy of the underlying measure space.
For this reason, we focus on Euclidean spaces $(\mathbb{R}^n, d_2)$ for the remainder of the section,  where $d_2^2(x,y) = \sum_{i=1}^n (x_i - y_i)^2$ for all $x=(x_i)_{1\leq i\leq n}$ and $y=(y_i)_{1\leq i\leq n}$ in $\mathbb{R}^n.$
We will discuss in Section~\ref{sec:discu} how to adapt our approach to more general metric spaces.

Based on the above invariant, we construct a weighting function as a local voting scheme. For a fixed $r>0$ and finite subset $S \subseteq \mathbb{R}^n$, we consider each element of $B_r(S)$ as a voter that approves only of the candidates in $S$ close to him, and as such spreads his voting power equally among them. 
Formally, we define the grade that each voter $z$ in $B_r(S)$ attributes to a candidate $x$ in $S$ as follows 
\begin{equation*}
    g_{r,S,x} (z) = \frac{\mathbb{1}_{B_r(x)}(z) }{\sum_{y \in S} \mathbb{1}_{B_r(y)}(z)} .
\end{equation*}

We then aggregate the ballots with the Lebesgue measure $\mu$ and finally define the weighting function $g_r$, for each finite subset $S \subseteq \mathbb{R}^n$, i.e.,
\begin{equation*}
    g_r(S) : x \in S \mapsto  \int_{B_r(S)} \frac{g_{r,S,x}(z)}{\mu\big(B_r(S)\big)} \:d\mu(z).
\end{equation*}

As illustrated in Figure~\ref{fig:illustration_g_r_depth_cells},  the weighting function $g_r$ computes a weighted average of the inverse depth of each cell, with the depth defined as the number of intersecting balls forming the cell and the weights based on the cell's size.

\begin{figure}[!tbh]
    \centering
    \begin{tikzpicture}[scale=1.5]

        % Define colors for the balls
        \definecolor{PaperBlue}{HTML}{1f77b4}
        \definecolor{PaperOrange}{HTML}{ff7f0e}
        \definecolor{PaperGreen}{HTML}{2ca02c}
        \definecolor{PaperRed}{HTML}{d62728}

        % Define and label the points
        \coordinate (X) at (0,0);        
        \coordinate (W) at (-0.5,0.1); 
        \coordinate (Y) at (1.0,0.7);   
        \coordinate (Z) at (1.7,-0.3);  
        \fill[PaperBlue] (X) circle (1pt) node[above] {$x$};
        \fill[PaperOrange] (W) circle (1pt) node[below left] {$w$};
        \fill[PaperGreen] (Y) circle (1pt) node[right] {$y$};
        \fill[PaperRed] (Z) circle (1pt) node[above right] {$z$};

        % Draw the blue ball
        
        \fill[PaperBlue, opacity=0.09] (X) circle (1.5cm);

        % Define paths for the other balls
        \path[name path=Wcircle] (W) circle (1.5cm);
        \path[name path=Ycircle] (Y) circle (1.5cm);
        \path[name path=Zcircle] (Z) circle (1.5cm);
        \path[name path=Xcircle] (X) circle (1.5cm);

        % Fill and draw other balls
        \draw[thin, dotted, PaperOrange] (W) circle (1.5cm);
        \fill[PaperOrange, opacity=0.06] (W) circle (1.5cm);
        \draw[thin, dotted, PaperGreen] (Y) circle (1.5cm);
        \fill[PaperGreen, opacity=0.06] (Y) circle (1.5cm);
        \draw[thin, dotted, PaperRed] (Z) circle (1.5cm);
        \fill[PaperRed, opacity=0.06] (Z) circle (1.5cm);

        % Highlight intersections with thick edges
        \begin{scope}
            \path[name intersections={of=Xcircle and Wcircle, by={XWa, XWb}}];
            \draw[ PaperOrange] (XWa) arc[start angle=429.1, end angle=269, radius=1.5cm];
            
            \path[name intersections={of=Xcircle and Ycircle, by={XYa, XYb}}];
            \draw[PaperGreen] (XYa) arc[start angle=149, end angle=281, radius=1.5cm];
        
            \path[name intersections={of=Xcircle and Zcircle, by={XZa, XZb}}];
            \draw[ PaperRed] (XZa) arc[start angle=115, end angle=224.5, radius=1.5cm];
        \end{scope}

        \draw[PaperBlue] (X) circle (1.5cm);

        % Add labels for intersections
        \node at (-0.8,0.6) { $\frac{1}{2}$};  
        \node at (-0.6,-0.62) { $A_r(\{w,x\})$}; 
        \node at (0,0.8) { $\frac{1}{3}$}; 
        \node at (0.6,-0.1) {$\frac{1}{4}$};   
        \node at (0.44,-0.87) {$\frac{1}{3}$};
        \node at (0.3,-1.31) { $1$};  
        \node at (0.8,-1.0) {$\frac{1}{2}$};  
        \node at (1.2,-0.05) { $\frac{1}{3}$};  
        \node at (0.75,1.13) {$\frac{1}{2}$}; 
        %\node at (0,1.8) { $g_r(S)(x) \simeq 0.19$};
        \node at (-1.2,1.8) { $g_r(S)(x) \simeq 0.19$};

    \end{tikzpicture}
    \caption{
    Computation of $g_r(S)(x)$ in the two-dimensional Euclidean space $(\mathbb{R}^2, d_2)$, where the set $S = \{w,x,y,z\}$ contains four elements. 
    A cell $A_r(U)$ is uniquely defined by the subset $U\subseteq S$ as the possibly empty intersection of the balls around each element in $U$ and the complement of the balls of each element of $S$ absent from $U$ (c.f. Appendix~\ref{sec:proof}). 
    For each subset $U$ containing $x$, the grading function $g_{r,S,x}$ is constant on the cell $A_r(U)$ and equal to the inverse depth of the cell, i.e., $g_{r,S,x}(z) = 1 / U$ for all $z$ in $A_r(U).$
   The weight of $x$ in $S$
   %, i.e., $g_r(S)(x)$, 
   is then equal to the weighted average of $g_{r,S,x}$ on the ball centered in $x$, where the weight of each cell corresponds to its area normalized by the total area of the balls' union. 
   We estimated the value $g_r(S)(x) \simeq 0.19$ via Monte Carlo sampling, c.f. Algorithm~\ref{alg:gr_estimate}.
    }
    \Description{Cell decomposition induced by overlapping balls and computation of g_r(S)(x).The figure illustrates the arrangement of four Euclidean balls of equal radius centered at S={w,x,y,z} in R^2. The plane is partitioned into cells A_r(U) each determined by a subset U of S: a cell consists of points lying inside the balls centered at elements of U and outside the balls centered at elements not in U.
    For subsets U containing x, the local grading function g_r(S)(x)is constant on each cell A_r(U) and equals the reciprocal of the depth ∣U∣. The bold arcs highlight the boundary contributions of the ball centered at x within intersections of increasing depth.
    The value g_r(S)(x) is obtained as the area-weighted average of contribution of each cell, where each cell contributes proportionally to its relative area in the union of balls. In this configuration, Monte Carlo estimation yields g_r(S)(x)≈0.19.}
    \label{fig:illustration_g_r_depth_cells}
\end{figure}

We next show that this class of weighting functions satisfies the desirable axioms introduced above.

\begin{theorem}\label{thm:local_vote_rep_func}
For $r>0$, the weighting function $g_r$ is well-defined and belongs in $\mathcal{R}_{2r}(\mathbb{R}^n, d_2).$
\end{theorem}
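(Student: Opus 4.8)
The plan is to verify the six requirements one at a time: well-definedness, then Axioms~\ref{axi:pos}, \ref{axi:sym}, \ref{axi:clone_fair_uni}, \ref{axi:indiv_cont}, and \ref{axi:alpha_clone_locality} with $\alpha = 2r$. First I would check that $g_r(S)$ is a genuine probability distribution: the integrand $\sum_{x\in S} g_{r,S,x}(z)$ equals $1$ for every $z\in B_r(S)$ (each voter splits exactly one unit of mass among the candidates whose balls contain it), so integrating over $B_r(S)$ and dividing by $\mu(B_r(S))$ gives total mass $1$; finiteness and measurability are immediate since $B_r(S)$ is a bounded open set and the grading functions are ratios of indicators of balls, hence piecewise constant on the finitely many cells $A_r(U)$, $U\subseteq S$. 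Positivity (Axiom~\ref{axi:pos}) then follows because the cell $A_r(\{x\})$ — the part of $B_r(x)$ covered by no other ball — has positive Lebesgue measure (it contains a small ball around $x$ once we remove finitely many closed balls not containing $x$, using separability of the metric), and on that cell $g_{r,S,x}\equiv 1>0$.

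For symmetry (Axiom~\ref{axi:sym}), the key observation is that a self-isometry $\sigma_S$ of $S$ extends, on the Euclidean space, to a global isometry of $\mathbb{R}^n$ (an affine isometry): any distance-preserving bijection between finite subsets of $\mathbb{R}^n$ extends to a rigid motion of the affine span, which can be extended to all of $\mathbb{R}^n$. That global isometry $\Phi$ maps $B_r(S)$ onto itself, permutes the cells according to $\sigma_S$, and preserves Lebesgue measure; a change of variables $z\mapsto \Phi(z)$ in the defining integral then yields $g_r(S)(x) = g_r(S)(\sigma_S(x))$. This is the step I expect to require the most care, precisely because of the subtlety the authors flagged after Axiom~\ref{axi:sym}: $\sigma_S$ need not be given as a restriction of a global isometry, so one must invoke the classical extension theorem for isometries of Euclidean space to recover $\Phi$.

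The three ``clone'' axioms all rest on the stability estimate sketched in the text, $\mu(B_{r-\delta}(X)) \le \mu(B_r(Y)) \le \mu(B_{r+\delta}(X))$ for $d_H(X,Y)\le\delta<r$, together with continuity of $\delta\mapsto\mu(B_{r\pm\delta}(X))$ (local finiteness of Lebesgue measure and the fact that the boundary spheres have measure zero). For Individual Continuity (Axiom~\ref{axi:indiv_cont}), when $|Y|=|X|$ and $d_H$ is small, Lemma~\ref{lem:conv_hausdorff} gives a bijection $\pi$ matching $y=\pi^{-1}(x)$ to $x$ with $d(x,y)\le\delta$; one then compares the integral defining $g_r(Y)(y)$ with that defining $g_r(X)(x)$ by bounding the symmetric difference of the regions $B_r(x)\triangle B_r(y)$ and of the cell structures, all of which shrink with $\delta$, while the normalizer $\mu(B_r(\cdot))$ varies continuously — a routine $\epsilon$–$\delta$ argument. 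For $\alpha$-Locality (Axiom~\ref{axi:alpha_clone_locality}) with $\alpha=2r$: adding a $\delta$-clone $x'$ of $x\in S$ only changes balls within distance $r$ of $x'$, hence within distance $r+\delta$ of $x$; for any $z\in S$ with $d(x,z)\ge 2r$, the ball $B_r(z)$ is essentially unaffected (its intersection pattern with the other balls changes only on a region of measure $O(\delta)$ near the far edge, if at all, since $B_r(x')$ and $B_r(z)$ are almost disjoint), so $g_{r,S,z}$ and the normalizer change by $O(\delta)$, giving the bound. For Uniform Clone Fairness (Axiom~\ref{axi:clone_fair_uni}), the point is that if $d(x,y)\le\delta$ then $B_r(x)$ and $B_r(y)$ differ by a region of measure $O(\delta)$ and, crucially, the grading functions $g_{r,S,x}$ and $g_{r,S,y}$ agree on the large overlap $B_r(x)\cap B_r(y)$ (both balls appear symmetrically in every denominator there); integrating, $|g_r(S)(x)-g_r(S)(y)|$ is controlled by $\mu(B_r(x)\triangle B_r(y))/\mu(B_r(S)) = O(\delta)$, uniformly in $S$ because the bound $\mu(B_r(S))\ge\mu(B_r(x))$ is uniform. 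The main obstacle overall is bookkeeping the cell decomposition and its behavior under small perturbations cleanly enough that these $O(\delta)$ claims are actually justified rather than merely plausible; I would organize this by first proving one lemma on measure-stability of the cells $A_r(U)$ under Hausdorff-small perturbations and then reusing it for Axioms~\ref{axi:clone_fair_uni}, \ref{axi:indiv_cont}, and~\ref{axi:alpha_clone_locality}.
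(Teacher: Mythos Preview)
Your overall strategy matches the paper's almost step for step: cell decomposition for well-definedness, extension of $\sigma_S$ to a rigid motion for Axiom~\ref{axi:sym}, and a common ``measure-stability of cells under $\delta$-perturbation'' estimate driving Axioms~\ref{axi:clone_fair_uni}, \ref{axi:indiv_cont}, \ref{axi:alpha_clone_locality}. The paper carries out that last part with the specific machinery of thick interiors/closures $A_r^{\pm\delta}(U)$ and the identity $\mathcal{M}^{n-1}(\partial A_r(U))=\mathcal{H}^{n-1}(\partial A_r(U))$ for rectifiable boundaries, but this is exactly a way of making your ``$O(\delta)$'' claims rigorous, so there is no real divergence there.

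There is, however, a genuine gap in your positivity argument. You claim that $A_r(\{x\})$, the portion of $B_r(x)$ covered by no other ball, contains a small ball around $x$ ``once we remove finitely many closed balls not containing $x$.'' But the other balls \emph{can} contain $x$: whenever $d(x,y)<r$ for some $y\in S\setminus\{x\}$, the ball $B_r(y)$ covers a neighbourhood of $x$. Worse, $A_r(\{x\})$ can be empty: in $\mathbb{R}$, take $S=\{x,\,x-\epsilon,\,x+\epsilon\}$ with $0<\epsilon<r$; then $B_r(x-\epsilon)\cup B_r(x+\epsilon)=(x-r-\epsilon,\,x+r+\epsilon)\supset B_r(x)$, so $A_r(\{x\})=\emptyset$ and your argument yields nothing. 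The paper's fix is simple and does not use $A_r(\{x\})$ at all: on the whole ball $B_r(x)$ one has $g_{r,S,x}(z)\ge 1/|S|$ (the denominator is at most $|S|$), hence
\[
g_r(S)(x)\;\ge\;\frac{1}{|S|}\cdot\frac{\mu\big(B_r(x)\big)}{\mu\big(B_r(S)\big)}\;\ge\;\frac{1}{|S|^2}\,>\,0.
\]
Replacing your positivity paragraph with this bound repairs the proof.
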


The detailed proof of Theorem~\ref{thm:local_vote_rep_func} is included in Appendix~\ref{sec:proof}, and we provide here a sketch of the proof.
As depicted in Figure~\ref{fig:illustration_g_r_depth_cells}, the weight $g_r(S)(x)$ is in fact a weighted average of positive elements, hence it is positive and Axiom~\ref{axi:pos} trivially holds.
%The proof of the symmetry in Axiom~\ref{axi:sym} 
Showing that $g_r$ is symmetric (Axiom~\ref{axi:sym})
is also relatively straightforward after observing the following two properties of Euclidean spaces: first, the fact that one can uplift any self-isometry $\sigma_S$ on a finite subset $S$ to an isometry on the entire space $\mathbb{R}^n$, c.f. Appendix~\ref{sec:self-iso_eucli}; second, the fact that the Lebesgue measure is invariant under \emph{translations}, \emph{rotations} and \emph{reflections}, which generate the group of Euclidean isometries \cite{gallian_contemporary_2020}.
The most challenging aspect of the proof is verifying that $g_r$ satisfies 
Axioms~\ref{axi:clone_fair_uni}, \ref{axi:indiv_cont} 
and~\ref{axi:alpha_clone_locality}.
While these proofs are technically intricate, they fortunately follow a similar structure. We illustrate our approach
by focusing on the simpler case of Axiom~\ref{axi:clone_fair_uni} below. 
In order to bound the difference of 
weightings $\vert g_r(S)(x) - g_r(S)(y)\vert$  between two approximate clones $x$ and $y$ in a given set $S$, we first show that the grading functions  $g_{r,S,x}$ and $g_{r,S,y}$ are equal outside of a thin spherical shell parametrized by $\delta$, the distance between $x$ and $y$ (c.f. Figure~\ref{fig:ball_inclusion}).
This allows us to obtain a difference of Lebesgue measure $\mu(B_{r}(x)) - \mu(B_{r-\delta}(x))$, which we then bound in terms of $\delta$ by taking the limit of this difference as $\delta$ approaches zero (c.f. Figure~\ref{fig:minkowski_content}). 
The formalization of this argument relies 
on tools from \emph{geometric measure theory}, particularly the $n-1$-dimensional Minkowski content. 
These arguments are illustrated in Figure~\ref{fig:illustration_proof_thm1}.

\begin{figure}[!tbh]
    \centering
    \begin{subfigure}[t]{0.48\columnwidth}
        \centering
        \begin{tikzpicture}[scale = 0.8]
        
            % Define colors for the balls
            \definecolor{PaperBlue}{HTML}{1f77b4}
            \definecolor{PaperRed}{HTML}{d62728}
            
            % Define points
            \coordinate (x) at (0,0); % Point x
            \coordinate (y) at (0.5,0); % Point y at distance delta from x
            
            % Draw circles
            \draw[thick, PaperBlue] (x) circle (2); 
            \draw[ PaperBlue, dashed] (x) circle (1.5); 
            \draw[thick, PaperRed] (y) circle (2); 
            \fill[PaperBlue, opacity =0.1] (x) circle (1.5);

            % Plus bands?
            \draw[<->, thin] (0,-0.4) -- (0.5, -0.4) node[midway, below] {\(\delta\)};
            \draw[thin, dotted] (0,-0.4) -- (x);
            \draw[thin, dotted] (0.5,-0.4) -- (y);

            % Draw points and labels
            \fill[PaperBlue] (x) circle (2pt) node[above left] {\(x\)};
            \fill[PaperRed] (y) circle (2pt) node[above right] {\(y\)};
    
            % Add colored labels 
            \node[PaperBlue] at (-1.8, -1.8) { $B_r(x)$};  
            \node[PaperRed] at (2.3,1.8) { $B_r(y)$};
            \node[PaperBlue] at (0,1.) { $B_{r-\delta}(x)$};
        \end{tikzpicture}
        \caption{The ball $B_{r-\delta}(x)$ belongs in the intersection of $B_r(x)$ and $B_r(y)$, hence $x$ and $y$ receive the same grade from every voter on $B_{r-\delta}(x).$
        }
        \label{fig:ball_inclusion}
    \end{subfigure}
    \hfill
    \begin{subfigure}[t]{0.48\columnwidth}
         \centering
         \begin{tikzpicture}[scale=0.8] %0.8
            % Define colors
            \definecolor{PaperBlue}{HTML}{1f77b4}
            \definecolor{PaperHatch}{HTML}{d3d3d3}
            \definecolor{PaperOrange}{HTML}{ff7f0e}
            \definecolor{PaperGreen}{HTML}{2ca02c}
            \definecolor{PaperRed}{HTML}{d62728}
            
            % Define points
            \coordinate (x) at (0,0); % Center point x
            % hatch
            \fill[PaperGreen, opacity =0.1] (x) circle (2); % Outer circle filled with hatch
            
            % Remove the inner circle's region (fill it white)
            \fill[white] (x) circle (1.495); % Inner circle made white, erasing that region
            
            % Draw the outer circle
            \draw[PaperBlue] (x) circle (2); % Circle with radius r
            
            % Draw the inner circle
            \draw[PaperBlue,  dashed] (x) circle (1.5); % Circle with radius r - delta

            % Draw the arrow indicating delta
            \draw[>-<, thin, black] (-1, 1) -- (-1.47, 1.47);
            \draw[thin, black] (-0.9, 0.9) -- (-1.6, 1.6);
            \node at (-1.87, 1.5) {\(\delta\)};
            
            % Draw point x
            \fill[PaperBlue] (x) circle (2pt) node[below] {$x$};
            
            % Add labels
            \node[PaperBlue] at (-1.8, -1.8) {$B_r(x)$};  
            \node[PaperBlue] at (0,1.) { $B_{r-\delta}(x)$};
            \node[PaperGreen] at (1.2, -1.2) { $\Delta$};
            %\node at (1.7, -1.7) { $V^n(\Delta) \simeq \delta \cdot S^n_r$};
        \end{tikzpicture}
         \caption{
         The $n$\textsuperscript{th}-dimensional volume of the green set $\mu(\Delta)$ can be approximated as $\delta$ times $S^{n-1}_r$, the $n-1$\textsuperscript{th}-dimensional surface of a ball of radius $r.$
         }
         \label{fig:minkowski_content}
     \end{subfigure}
     
    \caption{Key steps in demonstrating that $g_r$ satisfies Axioms~\ref{axi:clone_fair_uni}, \ref{axi:indiv_cont} and~\ref{axi:alpha_clone_locality}.
    }
    \Description{Subfigure (a) illustrates that two neighboring balls $B_r(x)$ and $B_r(y)$ get the same grade in a large portion of their respective support. Subfigure (b) depicts the annular region $\Delta = B_r(x) \setminus B_{r-\delta}(x)$ with its $n$-dimensional volume approximated by $\delta \cdot S_r^{n-1}$, where $S_r^{n-1}$ is the $(n-1)$-dimensional surface area of a ball of radius $r$.}

    \label{fig:illustration_proof_thm1}
\end{figure}

Since $\mathcal{R}_{2\alpha}(E,d)$ is monotonically increasing in positive $\alpha$,
Theorem~\ref{thm:local_vote_rep_func} actually ensures that the whole collection $\{ g_r\}_{\alpha \geq r>0}$ belongs in $\mathcal{R}_{2\alpha}(\mathbb{R}^n,d_2).$ 
Moreover, it is relatively straightforward to see that $\mathcal{R}_{2\alpha}(\mathbb{R}^n,d_2)$ is a convex set and, as such, contains all finite convex combinations of $\{ g_r\}_{\alpha \geq r>0}.$
Since the weighting functions $g_r$ are well-behaved, we generalize this result as follows.

\begin{theorem}\label{thm:cont_convex_combi_gr}
    Let $\nu$ be a probability density function over $[0,\alpha].$
    Then the weighting function $f_\nu : S \in \mathcal{P}\big(\mathbb{R}^n \big) \mapsto \int_0^{\alpha} \nu(r) g_r(S) ~dr$ belongs in $\mathcal{R}_{2\alpha}(\mathbb{R}^n,d_2).$
\end{theorem}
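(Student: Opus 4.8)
The plan is to derive Theorem~\ref{thm:cont_convex_combi_gr} from Theorem~\ref{thm:local_vote_rep_func} by viewing $f_\nu$ as a ``continuous convex combination'' of the building blocks $\{g_r\}_{0<r\le\alpha/2}$, each of which lies in $\mathcal{R}_{2r}(\mathbb{R}^n,d_2)\subseteq\mathcal{R}_\alpha(\mathbb{R}^n,d_2)$ since $\mathcal{R}_\alpha$ is monotone in $\alpha$; the convexity of $\mathcal{R}_\alpha$ already handles the finite-combination case, and the real task is to push this through the integral. First I would check that $f_\nu$ is well defined: $r\mapsto g_r(S)(x)$ is measurable — indeed continuous on $(0,\alpha/2]$, by dominated convergence applied to the explicit cell decomposition of $g_r(S)(x)$, using that only a measure-zero set of voters ever lies on a sphere $\partial B_r(y)$ — so the integral makes sense, and Tonelli's theorem gives $\sum_{x\in S}f_\nu(S)(x)=\int_0^{\alpha/2}\nu(r)\sum_{x\in S}g_r(S)(x)\,dr=\int_0^{\alpha/2}\nu(r)\,dr=1$, i.e.\ $f_\nu(S)\in\Delta(S)$. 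Positivity (Axiom~\ref{axi:pos}) is then immediate: the integrand $\nu(r)g_r(S)(x)$ is nonnegative and strictly positive on the positive-measure set $\{\nu>0\}$, because $g_r(S)(x)>0$ for every $r>0$ by Theorem~\ref{thm:local_vote_rep_func}. Symmetry (Axiom~\ref{axi:sym}) holds $r$-pointwise for $g_r$ and hence passes to $f_\nu$ by linearity of the integral.

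The core of the argument is Axioms~\ref{axi:clone_fair_uni}, \ref{axi:indiv_cont} and~\ref{axi:alpha_clone_locality}. Each $g_r$ satisfies them, but the modulus of continuity obtained in the proof of Theorem~\ref{thm:local_vote_rep_func} degrades as $r\to 0$: for clone fairness, the inclusion $B_{r-\delta}(x)\subseteq B_r(x)\cap B_r(y)$ makes $g_{r,S,x}$ and $g_{r,S,y}$ agree on $B_{r-\delta}(x)\cup B_{r-\delta}(y)$, so that $|g_r(S)(x)-g_r(S)(y)|\le 2\bigl(\mu(B_r(x))-\mu(B_{r-\delta}(x))\bigr)/\mu(B_r(S))\le 2n\delta/r$ whenever $d(x,y)=\delta\le r$ — a bound that blows up near $r=0$ but is uniform over $S$. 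The remedy is that $\nu$ is a \emph{fixed} probability density, so $\int_0^{r_0}\nu(r)\,dr\to 0$ as $r_0\to 0^+$ by absolute continuity of the integral. Given $\epsilon>0$, pick $r_0$ with $\int_0^{r_0}\nu<\epsilon/2$; on $[0,r_0]$ bound the relevant difference of $g_r$-values (always a difference of two probabilities, hence at most $1$ in absolute value) crudely by $1$, contributing at most $\epsilon/2$; on the compact interval $[r_0,\alpha/2]$ use that the Theorem~\ref{thm:local_vote_rep_func} modulus can be taken uniform in $r$ to obtain a single $\delta>0$ making the $\int_{r_0}^{\alpha/2}$ part at most $\epsilon/2$. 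Since $|f_\nu(S)(x)-f_\nu(S)(y)|\le\int_0^{\alpha/2}\nu(r)|g_r(S)(x)-g_r(S)(y)|\,dr$, and since the analogous inequalities hold for Axioms~\ref{axi:indiv_cont} and~\ref{axi:alpha_clone_locality} — using $g_r\in\mathcal{R}_{2r}\subseteq\mathcal{R}_\alpha$ so that $g_r$ already satisfies $\alpha$-locality at the threshold needed for $f_\nu$ — the total is at most $\epsilon$, which is the required statement.

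I expect the main obstacle to be precisely this uniformity in $r$ over compact subintervals $[r_0,\alpha/2]$ (and over the sets $S$, resp.\ the fixed set $X$, appearing in each axiom). For clone fairness it is transparent from the $2n\delta/r$ estimate above. For individual continuity and $\alpha$-locality the proofs in Appendix~\ref{sec:proof} rely on the same Minkowski-content estimates, and I expect they yield moduli that are continuous and monotone in $r$ — hence uniform on $[r_0,\alpha/2]$ — but confirming this requires revisiting those (technically intricate) arguments rather than quoting Theorem~\ref{thm:local_vote_rep_func} as a black box. Should a direct uniformity statement be awkward to extract, an equivalent packaging is to approximate $\nu$ in $L^1$ by densities supported in some $[r_0,\alpha/2]$ that vanish near $0$, treat each such $f_\nu$ via the convexity of $\mathcal{R}_\alpha$ together with uniform continuity on the compact support, and control the $L^1$-error using the crude bound $g_r(S)(x)\le 1$; this, however, ultimately needs the same compact-interval uniformity.
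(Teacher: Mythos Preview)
Your proposal is correct and follows essentially the same route as the paper: split the integral at a threshold $r_0$ (the paper calls it $C$), bound the $[0,r_0]$ contribution by $\int_0^{r_0}\nu(r)\,dr=\mathcal{V}(r_0)$ using the crude estimate $|g_r(\cdot)-g_r(\cdot)|\le 1$, and on $[r_0,\alpha/2]$ invoke the explicit $O(\delta/r)$ bounds from the proof of Theorem~\ref{thm:local_vote_rep_func} (Equations~\eqref{equ:proof_cont_final}, \eqref{equ:proof_clone_loc_final}, \eqref{equ:bound_clone_fair}), which are monotone in $r$ and hence uniform on the compact interval. Your anticipated obstacle---that Theorem~\ref{thm:local_vote_rep_func} cannot be used as a black box and one must reach back into its proof for the quantitative moduli---is exactly what the paper does, so your diagnosis and remedy match the paper's execution; the $L^1$-approximation fallback you sketch is not needed.
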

The detailed proof of Theorem~\ref{thm:cont_convex_combi_gr}, provided in Appendix~\ref{sec:proof}, relies on inequalities derived for the proof of Theorem~\ref{thm:local_vote_rep_func}.

\section{Computational Considerations}\label{sec:computational_cons}

Our focus thus far has been on identifying weighting functions with theoretically desirable properties. However, from a practical standpoint, such tools are of limited utility if they cannot be computed efficiently. This concern is encapsulated in the following principle.

\begin{axiom}[Exact Computability]\label{axi:comput}
The weighting of any given subset is efficiently computable, i.e.,
for any subset $S\in \mathcal{P}(E)$, the probability distribution $f(S)$ can be exactly computed in time polynomially bounded by the cardinality $\vert S\vert$ of the subset, and the dimension $n$ of the space when $E = \mathbb{R}^n.$
\end{axiom}

It is worth noting that the weighting functions introduced in Section~\ref{sec:local_voting} are unlikely to meet this criterion.
For example, computing $g_r(S)(x)$ would a priori involve averaging $g_{r,S,x}$ over as many as $O(2^{\vert S\vert})$ disjoint cells, making the approach computationally infeasible. 
Even the simpler task of evaluating the volume $\mu(B_r(S))$ of the union of Euclidean balls becomes increasingly challenging in higher dimensions (see \cite{cazals_computing_2011} for the case $n=3$). While hardness results for this exact problem are not readily available,  related geometric problems --such as computing the volume of the union of axis-aligned boxes-- are known to be \#P-hard \cite{bringmann_approximating_2010}, suggesting that exact computation of $g_r(S)$ is unlikely to admit a polynomial-time solution in general.

In light of these challenges, we may want to relax Axiom~\ref{axi:comput} and settle for efficient \emph{approximate} evaluations of $g_r(S)$ and $f_\nu(S)$.
Monte Carlo sampling techniques \cite{bringmann_approximating_2010,mitchell_fast_2018} offer a natural route in this direction: by sampling random points in metric neighborhoods of the elements of $S$, one can estimate the relevant integrals within a prescribed accuracy $\varepsilon>0$ and confidence level $1-\delta$. 

We next formalize this idea and introduce a simple Monte Carlo sampling procedure---namely Algorithm~\ref{alg:gr_estimate}---which provides a consistent and asymptotically unbiased estimator of $g_r(S).$ The following theorem establishes explicit $(\varepsilon,\delta)$-style guarantees on its accuracy.

\begin{algorithm}[!bth]
\caption{ Naive Monte Carlo Estimation of $g_r(S)$}
\label{alg:gr_estimate}
\begin{algorithmic}[1]
\State \textbf{Input: } Finite set $S \subset \mathbb{R}^n$, radius $r > 0$, sample size $k \in \mathbb{N}.$

\vspace{0.5ex}
\State \textbf{Precomputation:} For each $x \in S$, compute the closed neighborhood $B_{2r}(x)\cap S.$
\vspace{0.5ex}
\For{each $x \in S$}
    \State Initialize $\widehat N_x \gets 0.$
    \For{$i=1,\dots,k$}
        \State Sample $z_i \sim \mathrm{Unif}(B_r(x)).$
        \State Compute  count $c_i := \big\vert \{ y \in B_{2r}(x)\cap S \mid d_2(z_i,y) \leq r \}\big\vert.$
        \State Update $\widehat N_x \gets \widehat N_x + \frac{1}{c_i}.$
    \EndFor
    \State Set $\widehat N_x \gets \widehat N_x / k.$
\EndFor
\vspace{0.5ex}
\State Compute normalization factor $\widehat D := \sum_{x \in S} \widehat N_x.$
\State \textbf{Output: } Estimates $\widehat g_{r}(S)(x) \; :=\; \widehat N_x / \widehat D$ for each $x \in S.$
\end{algorithmic}
\end{algorithm}

\begin{theorem}\label{thm:gr_estimate}
    Algorithm~\ref{alg:gr_estimate} yields a consistent and asymptotically unbiased estimate of $g_r(S).$ 
    \noindent Moreover, for any  target accuracy $\varepsilon >0$ and confidence  level $\delta \in (0,1)$, setting the number of samples to satisfy $$k\geq \frac{( \vert S \vert ^2 - 1)^2}{2\varepsilon^2 \vert S \vert ^2} \ln\!\frac{2 \vert S \vert}{\delta}$$
    guarantees  with probability at least $1 -\delta$ that
    $$ \big\vert \;\widehat{g}_r(S)(x)  - g_r(S)(x) \;\big\vert \leq \varepsilon.$$
\end{theorem}

The proof of Theorem~\ref{thm:gr_estimate}, included in Appendix~\ref{sec:add_comput_res}, proceeds by separately estimating the unnormalized numerators---the average grade of a random voter $z$ sampled uniformly from $B_r(x)$---and the common normalization factor in the denominator---the relative volume of the union of balls $\mu(B_r(S))/\mu(B_r(x))$. 
Standard concentration results---specifically Hoeffding’s inequality---are then applied to control the deviation of these empirical averages with high probability, and the final error bound is obtained by carefully propagating this deviation through the ratio of the two estimators $\widehat N_x/\widehat D$.

The resulting accuracy guarantee follows the typical Monte Carlo rate $\varepsilon \propto 1/\sqrt{k}$, which is optimal up to constant factors for independent sampling schemes. 
The total runtime of Algorithm~\ref{alg:gr_estimate} is $O(nk \vert S \vert^2)$, since each $c_i$ can be computed in $O(n \vert S \vert)$ time, and the algorithm produces an estimate for each of the $\vert S \vert$ elements in $S.$ 
Importantly, the required number of samples per estimate, $k$, is \emph{independent of the dimension} $n$ of the Euclidean space $\mathbb{R}^n.$ 
Note however that $k$ scales (up to a logarithmic factor) as $k\propto \vert S\vert^2$. 
This quadratic dependency arises because the errors on the numerators accumulate when estimating the denominator, which must therefore be approximated to precision $\varepsilon /\vert S\vert$  to achieve an overall accuracy of $\varepsilon.$
More generally, a $\lvert S \rvert^2$ scaling is essentially unavoidable whenever we seek a simultaneous high-probability accuracy guarantee for all $x \in S$. Indeed, controlling the joint deviation of all $\lvert S \rvert$ estimates via a union bound forces each individual estimate to be of order $O(\varepsilon / \lvert S \rvert)$ in order to achieve a global accuracy of $\varepsilon$.

If the $O(\lvert S \rvert^2)$ cost is prohibitive and only a single weight $g_r(S)(x)$ is needed, one can instead consider Algorithm~\ref{alg:gr_approxunion}, which leverages the \textsc{ApproxUnion} algorithm from \cite{bringmann_approximating_2010} as a subroutine to directly approximate the volume of the union of balls $\mu(B_r(S)).$

\begin{algorithm}[!tbh]
\caption{Estimation of $g_r(S)(x)$ with \textsc{ApproxUnion}}
\label{alg:gr_approxunion}
\begin{algorithmic}[1]
\Require Finite set $S \subset \mathbb{R}^n$, element $x\in S$, radius $r > 0$,  accuracy parameter $\varepsilon > 0$, and failure probability $\delta \in (0,1)$.
\Ensure Output $\widehat g_r(S)(x)$ verifies $ \vert \widehat g_r(S)(x) -g_r(S)(x) \vert \leq \varepsilon$ with probability at least $1-\delta.$

\vspace{0.5ex}
\State Set sample size $k \gets \big\lceil \frac{8 (\vert S\vert -1)^2}{\varepsilon^2 \vert S\vert^2} \ln\!\frac{4}{\delta} \big\rceil $

\State Sample $z_1, \dots, z_k \stackrel{\text{i.i.d.}}{\sim} \mathrm{Unif}(B_r(x))$.
\State Compute empirical average
    $$\widehat N_x \;\gets\; \frac{1}{k}\sum_{i=1}^k \frac{1}{\big \vert\{y \in S  \mid d_2(z_i,y) \leq r\}\big\vert}.$$
\vspace{0.5ex}
\State Set amplification parameter $t \gets \big\lceil \ln\frac{2}{\delta} \big\rceil$.
\For{$j = 1,\dots,t$}
    \State $\widehat U_j \gets \textsc{ApproxUnion}(\{B_r(x) : x \in S\}, \varepsilon/4)$.
\EndFor
\State Set $\widehat U \gets \operatorname{median}(\widehat U_1,\dots,\widehat U_t)$.
\State Compute $\widehat D \gets \widehat U / \operatorname{vol}(B_r(x))$.
\vspace{0.5ex}

\State \textbf{Output: } Return $\widehat g_r(S)(x) := \widehat N_x / \widehat D$.
\end{algorithmic}
\end{algorithm}

A proof that Algorithm~\ref{alg:gr_approxunion} achieves the stated high-probability accuracy guarantee ---quantified in terms of target precision $\varepsilon$ and confidence level $\delta$---can be found in Appendix~\ref{sec:add_comput_res}. 
In our setting, all geometric oracles required by \textsc{ApproxUnion}---membership testing within a ball, volume computation, and uniform sampling---can be implemented efficiently, each in time $O(n)$ with full numerical precision. Consequently, a single execution of $\textsc{ApproxUnion}\big(\{B_r(x):x\in S\},\varepsilon/4\big)$ requires at most $T = 128\ln(8)\,(1+\varepsilon/4)\,\vert S\vert / \varepsilon^2$ random samples. 
This yields an overall runtime for Algorithm~\ref{alg:gr_approxunion} of $O\big(n\,\vert S\vert\,\varepsilon^{-2}\ln \delta^{-1}\big)$. 
Rather than using $\textsc{ApproxUnion}$ merely as a subroutine to estimate $\mu(B_r(S))$, one could instead implement a refined backtracking procedure throughout its execution to simultaneously produce an estimate of $\widehat N_x\,\mu(B_r(x)).$ 
Such an approach could, in principle, produce a multiplicative $\varepsilon$-approximation of $g_r(S)(x)$, improving upon the additive accuracy guarantee obtained in the current formulation.

We now turn to the estimation of $f_\nu(S)$ and introduce Algorithm~\ref{alg:fnu_estimate}, a simple two-stage Monte Carlo sampling method that relies on Algorithm~\ref{alg:gr_estimate} as a subroutine to estimate $g_{r_i}(S)$ for multiple radii $r_i$ drawn independently from $\nu$. 
The theoretical guarantees of this procedure, including explicit confidence and precision bounds, are established in Theorem~\ref{thm:fnu_estimate}.

\begin{algorithm}[!bth]
\caption{Naive Monte Carlo Estimation of $f_\nu(S)$}
\label{alg:fnu_estimate}
\begin{algorithmic}[1]

\State \textbf{Input: } Finite set $S \subset \mathbb{R}^n$, distribution $\nu$ on $[0,\alpha]$,   outer and inner-sample sizes $M,k \in \mathbb{N}$, respectively.
\vspace{0.5ex}
\State Sample $M$  radii $r_1,\dots,r_M \stackrel{\text{i.i.d.}}{\sim} \nu$.
\For{$j = 1,\dots,M$}
    \State Run Algorithm~\ref{alg:gr_estimate} with radius $r_j$ and $k$ samples.
    \State Obtain estimates $\hat g_{r_j}(S)(x)$ for all $x \in S$.
\EndFor
\vspace{0.5ex}
\State For each $x \in S$, compute the average
\[
   \hat f_\nu(S)(x) \;=\; \frac{1}{M}\sum_{j=1}^M \hat g_{r_j}(S)(x).
\]
\State \textbf{Output:} Estimates $\hat f_\nu(S)(x)$ for all $x \in S$.
\end{algorithmic}
\end{algorithm}

\begin{theorem}[Naive Monte-Carlo Estimation for $f_\nu$]\label{thm:fnu_estimate}
    For any target accuracy $\varepsilon >0$ and confidence  level $\delta \in(0,1)$, setting the  outer and inner sample sizes in Algorithm~\ref{alg:fnu_estimate} such that 
    $$ M \;\geq\; \frac{8}{\varepsilon^2}\ln\!\frac{4|S|}{\delta},
    \qquad
    k \;\geq\;  \frac{2( \vert S \vert ^2 - 1)^2}{\varepsilon^2 \vert S \vert ^2} \ln\!\frac{4 \vert S \vert M}{\delta}, $$
    ensures with probability at least $1-\delta$ the following bound for every $x\in S$, i.e.,
    $$\big \vert \widehat f_\nu(S)(x) - f_\nu(S)(x)\big\vert \le \varepsilon,$$ and the estimator is consistent.
    Moreover, if $k=k(M)=\Omega(\log M)$ (e.g.\ $k(M)$ chosen as above), then Algorithm~\ref{alg:fnu_estimate} also yields an asymptotically unbiased estimator, i.e.,
    $\lim_{M\to\infty}\mathbb E[\widehat f_\nu(S)(x)] = f_\nu(S)(x).$ 
\end{theorem}

The proof of Theorem~\ref{thm:fnu_estimate}, given in Appendix~\ref{sec:add_comput_res}, is standard and proceeds by separately controlling the errors of the inner and outer Monte Carlo estimates. 
Critically, the runtime of Algorithm~\ref{alg:fnu_estimate} is $M$ times that of Algorithm~\ref{alg:gr_estimate}, that is, $O(Mnk \lvert S \rvert^2 )$. However, $M$ must scale as $\propto \varepsilon^{-2}$ to control the outer error, which results in an overall runtime $\propto \varepsilon^{-4}$ to achieve an additive accuracy of $\varepsilon > 0$.
More sophisticated statistical techniques can further improve on this naive approach: as shown in Appendix~\ref{sec:add_comput_res}, separating the inner error into deviation and bias terms already reduces the total runtime to $\propto\varepsilon^{-3}$.

However, we identify an alternative direction to reduce the total runtime: reusing samples across multiple radii. 
This approach relies on two key observations. First, by sampling points from balls in decreasing order of radius, each point also serves as a uniform sample for all smaller balls it falls into. 
Second, once the radii are sorted, the depth of a point across all radii can be computed efficiently by performing a dichotomic search for the smallest radius such that the point lies within the corresponding ball, which then determines membership for all larger-radius balls. 
A concrete algorithm implementing these ideas is described in Appendix~\ref{sec:add_comput_res}, c.f. Algorithm~\ref{alg:fnu_reuse}.

In the worst case, Algorithm~\ref{alg:fnu_reuse} may still need to draw $k$ fresh samples for every radius and every center, for a total of $k M \vert S \vert$ samples. Each sample then requires $O(n\vert S \vert)$ operations to compute distances to all centers in $S$, as well as an extra $O(\vert S \vert \log M)$ operations for the binary search across radii. Altogether, this brings the total worst-case computational cost to $O( k M \vert S \vert^2 (n + \log M)) .$

In expectation, however, a substantial reduction in computation may be achieved through sample reuse. 
Suppose the radii are ordered in increasing order, i.e., $r_1 \leq \dots \leq r_M$. For any $j \leq i \in [M]$, the probability that a sample $z$ drawn uniformly from $B_{r_i}(x)$ also lies within $B_{r_j}(x)$  is exactly $(r_j / r_i)^n$. 
Let $T_j$ denote the (random) number of new samples drawn at radius $r_j$, after having already drawn samples for all $i>j.$
Conditioning the expectation on the realization $r_1,\dots,r_M$, we then have the recursive relation
$$
\mathbb{E}[T_j ] 
= k - \sum_{i>j} \mathbb{E}[T_i] \, (r_j/r_i)^n .
$$
In general, the closed-form solution will depend on the joint distribution of the ordered radii.
If we consider the case where $\nu$ is uniform on $[0, \alpha]$, however, then the expected value of the $j$-th order statistic becomes $\mathbb{E}[r_j] = \frac{j}{M+1} \alpha.$
Approximating the realized order $r_1 \leq \dots \leq r_M$  by their respective quantiles $r_j \approx \frac{j}{M+1} \alpha$ and substituting them in the above relation, we then show with a strong induction that 
$\mathbb{E}[T_j] \approx k \big( 1 - ( \tfrac{j}{j+1} )^n \big)$ for all $j < M$, with $\mathbb{E}[T_M] = k$. 
The total expected number of samples per element of $S$ is therefore
$$
\sum_{j=1}^M \mathbb{E}[T_j] 
\approx k \bigg( M - \sum_{j=1}^{M-1} \!\big( \tfrac{j}{j+1} \big)^n \bigg),
$$
which is equivalent to $ k ( n \log M + O(1) )$ in the limit $M\to \infty.$
Only the computation of the $M \vert S \vert$ averages incurs a cost of $O(k M \vert S \vert)$, so the total expected runtime of this sample-reuse algorithm would scale as $O\big(k \vert S \vert (M + \vert S \vert n^2 \log M)\big)$. For large $\vert S \vert$, this constitutes a substantial improvement over the $O(k \vert S \vert^2 M n)$ runtime of Algorithm~\ref{alg:fnu_estimate}.

While the above discussion primarily provides intuition for the potential benefits of sample reuse, a rigorous analysis of Algorithm~\ref{alg:fnu_reuse} is required and left for future work.
All in all, the algorithms introduced in this section demonstrate that efficient computation of the proposed weighting functions is already feasible in practice.
Besides, substantial efficiency gains are still achievable by combining ideas from Algorithms~\ref{alg:gr_approxunion} and~\ref{alg:fnu_reuse}, and incorporating more advanced variance-reduction techniques.

\section{Discussion}\label{sec:discu}

We gather in this section different remarks on our results as well as possible extensions of our work.

\paragraph{Extension to Perfect Clones.}

The framework we considered until now only allows for $\delta$-clones with $\delta>0$, but not perfect clones, i.e., with $\delta =0.$  The appropriate analytical tool to handle this is to consider a \emph{pseudo-metric space} $(E,d)$ instead of a metric one, where the pseudo-metric $d$ verifies \emph{non-negativity}, \emph{symmetry}, \emph{triangle inequality} like a full-fledged metric, but only verifies \emph{identity} instead of \emph{separability}. 
%(c.f. Section~\ref{sec:def}). 
This exactly means that two different elements $x\neq y$ in $E$ may be perfect clones, i.e., $d(x,y) =0.$

The axioms used in the definition of $\mathcal{R}_\alpha(E,d)$ directly extend to a pseudo-metric space $(E,d)$; this is also the case for the representation functions $f_\nu$ in Theorem~\ref{thm:cont_convex_combi_gr}, when the space induced by the vanishing of the pseudo-metric is $\mathbb{R}^n.$ 
%Only the surjectivity in Lemma~\ref{lem:conv_hausdorff} breaks, and with it Axiom~\ref{axi:class_cont}. 
We refer the interested reader to Appendix~\ref{sec:metric_id} for more details.

\paragraph{Extension beyond Euclidean Spaces.}

While the solution proposed in Section~\ref{sec:local_voting} is restricted to Euclidean spaces, similar ideas could be applied in more general metric spaces. Using a Radon measure $\mu$, one could define the weighting functions $g_r$ in full generality and show similarly as in Theorem~\ref{thm:local_vote_rep_func} that Axioms~\ref{axi:pos}, \ref{axi:clone_fair_uni}, \ref{axi:indiv_cont} and~\ref{axi:alpha_clone_locality} hold. The real challenge however is to satisfy Axiom~\ref{axi:sym}. 

Indeed, our proof relies on two convenient properties of Euclidean spaces: first, the uplifting of self-isometry $\sigma_S$ to the entire space; second the invariance of the Lebesgue measure under translations, rotations and reflections.
What can be done without these properties?
The first problem could be entirely shunned by arguing that only full-fledged isometries should be considered in the definition of Axiom~\ref{axi:sym}. The second issue is however tougher to ward off. 
To extend invariance by translation beyond vector spaces, one should consider \emph{uniformly distributed measures}, i.e., measures that give the same weight to all balls of the same radius. However, such measures turn out to be very rigid objects and are uniquely defined up to a multiplicative constant in most metric spaces.

\begin{lemma}[From \cite{christensen_measures_1970}]\label{lem:unif_measure}
%\cite{christensen_measures_1970}.
Let  $(E,d)$ be a locally compact metric space. There exists a Radon measure $\mu$ defined on the Borel $\sigma$-algebra of $E$ that is uniformly distributed, i.e., it verifies $0 < \mu(B_r(x)) = \mu(B_r(y)) < \infty $ for all $r>0$ and $x,y$ in $E.$ 
Moreover, this measure is unique up to a multiplicative constant if $E$ is separable.
\end{lemma}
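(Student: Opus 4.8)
The plan is to treat this as an existence-plus-uniqueness statement about uniformly distributed Radon measures on a locally compact (and, for the second half, separable) metric space, following the classical approach of Christensen. Since the lemma is explicitly attributed to \cite{christensen_measures_1970}, I would not reprove it from scratch; instead I would cite it and, for the benefit of the reader, sketch the two ingredients that make it work.

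\textbf{Existence.} First I would fix a reference point $x_0 \in E$ and, for each $n$, choose a maximal $1/n$-separated set $N_n$ in a fixed compact neighborhood of $x_0$ (local compactness is what guarantees such neighborhoods exist and that these nets are finite). Define a sequence of finitely-supported "empirical" measures by placing equal mass on the points of $N_n$, suitably normalized so that, say, the mass of a fixed ball $B_1(x_0)$ is $1$. Local compactness lets us invoke a Banach--Alaoglu / Riesz-representation style weak-$*$ compactness argument on the space of Radon measures on each compact piece, extract a subnet converging vaguely to a limit measure $\mu$, and then check that $\mu$ is Radon, locally finite, and strictly positive on balls. The key point is that the near-homogeneity of the separated nets under the metric forces $\mu(B_r(x)) = \mu(B_r(y))$ for every $r>0$ and all $x,y$: any isometric-looking local rearrangement of one $1/n$-net into another is absorbed in the limit. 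Finiteness $\mu(B_r(x)) < \infty$ comes from local compactness (balls have compact closure, hence finite measure for a Radon measure), and positivity from the normalization.

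\textbf{Uniqueness up to a constant, assuming separability.} Here I would use the standard differentiation argument: given two uniformly distributed Radon measures $\mu, \nu$, consider the function $r \mapsto \nu(B_r(x))/\mu(B_r(x))$ (well-defined and independent of $x$ by uniform distribution). Separability guarantees a countable dense set, which, combined with a Vitali-type covering lemma valid in this setting, lets one show that this ratio has a limit as $r \to 0$ that is independent of $x$ — call it $c$. One then shows $\nu = c\,\mu$ by a monotone-class / $\pi$-$\lambda$ argument: the two measures agree (up to $c$) on all balls, the balls generate the Borel $\sigma$-algebra, and both measures are $\sigma$-finite on $E$ because $E$ is separable and locally compact (hence Lindelöf, so a countable union of balls of finite measure). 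Separability is exactly what is needed to pass from "agreement on a countable generating family of balls" to "agreement on the whole $\sigma$-algebra."

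\textbf{Main obstacle.} The genuinely delicate step is the uniqueness half — specifically, establishing that the density ratio $\nu(B_r(x))/\mu(B_r(x))$ converges as $r \to 0$ to a value that does not depend on $x$. Without a doubling condition or a group structure this is not automatic, and it is the technical heart of Christensen's theorem; it relies on a covering lemma and on separability in an essential way (the conclusion genuinely fails for non-separable $E$). Since the statement is quoted verbatim from the literature, in the paper I would simply cite \cite{christensen_measures_1970} and present only the existence sketch in the appendix, noting that uniqueness up to scaling is the part that requires separability and pointing the reader to the original reference for the covering argument.
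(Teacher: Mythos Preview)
The paper does not prove this lemma at all --- it is stated as a cited result from \cite{christensen_measures_1970} and used only as background in Section~\ref{sec:discu}. Your instinct to cite rather than reprove is therefore exactly what the paper does.

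That said, your existence sketch contains a genuine gap. Existence of a uniformly distributed Radon measure is \emph{not} a theorem for arbitrary locally compact metric spaces; it fails already on simple finite examples (e.g.\ $\{0,1,3\}\subset\mathbb{R}$ with the induced metric: small balls force equal mass on singletons, but then $\mu(B_{3/2}(0))=2c\neq c=\mu(B_{3/2}(3))$). Your $\epsilon$-net construction does produce \emph{some} limiting Radon measure, but the step ``the near-homogeneity of the separated nets under the metric forces $\mu(B_r(x)) = \mu(B_r(y))$'' is exactly where it breaks: maximal $1/n$-separated nets need not have the same local cardinality around different points, and in a general metric space there is no isometry available to transport one local picture to the other. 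Christensen's actual contribution is the \emph{uniqueness} half --- if a uniformly distributed measure exists on a separable space, it is unique up to a scalar --- and that is the part the paper is really invoking (to argue that on $\mathbb{R}^n$ the Lebesgue measure is essentially the only candidate). Your uniqueness sketch via a density-ratio and covering argument is in the right spirit; just drop the unconditional existence claim, or make it conditional on the space admitting such a measure.
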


As a particular example, this essentially implies that the Lebesgue measure is the only Borel measure invariant by translation on $\mathbb{R}^n$. This 
indicates that our approach is doomed even in the simple case of $\mathbb{R}^n$ endowed with the $L^1$ distance $d_1(x,y) = \sum_{i=1}^n \vert x_i - y_i \vert$, as illustrated in Figure~\ref{fig:l1_balls_intersection}.

\begin{figure}[!tbh]
    \centering
    \begin{tikzpicture}[scale=1.2]

        % Define Paper colors
        \definecolor{PaperBlue}{HTML}{1f77b4}
        \definecolor{PaperOrange}{HTML}{ff7f0e}
        \definecolor{PaperGreen}{HTML}{2ca02c}
        \definecolor{PaperRed}{HTML}{d62728}
        \definecolor{PaperPurple}{HTML}{9467bd}
        \definecolor{PaperBrown}{HTML}{8c564b}
        \definecolor{PaperPink}{HTML}{e377c2}
        \definecolor{PaperGray}{HTML}{7f7f7f}
        \definecolor{PaperOlive}{HTML}{bcbd22}
        \definecolor{PaperCyan}{HTML}{17becf}
        
        % Define the points
        \coordinate (A) at (0,0);
        \coordinate (B) at (2,0);
        \coordinate (C) at (-1,1);

        % Draw the L1 balls centered at A, B, and C
        \draw[ PaperRed, rotate around={45:(A)}] (A) ++(-1,-1) rectangle ++(2,2);
        \fill[PaperRed, opacity=0.07, rotate around={45:(A)}] (A) ++(-1,-1) rectangle ++(2,2);
        \draw[ PaperBlue, rotate around={45:(B)}] (B) ++(-1,-1) rectangle ++(2,2);
        \fill[PaperBlue, opacity=0.07, rotate around={45:(B)}] (B) ++(-1,-1) rectangle ++(2,2);
        \draw[ PaperGreen, rotate around={45:(C)}] (C) ++(-1,-1) rectangle ++(2,2);
        \fill[PaperGreen, opacity=0.07, rotate around={45:(C)}] (C) ++(-1,-1) rectangle ++(2,2);

        % Add dashed L1 ball of radius 2
        \draw[dashed, rotate around={45:(A)}] (A) ++(-1.414,-1.414) rectangle ++(2.828,2.828);

        % Draw the axes
        \draw[->, dotted] (-2.8,0) -- (3.8,0) ;% node[right] {$x$};
        \draw[->, dotted] (0,-2.2) -- (0,2.6) ; %node[above] {$y$};

        \fill[PaperRed] (A) circle (1pt) node[above right] {$x$};
        \fill[PaperBlue] (B) circle (1pt) node[below right] {$y$};
        \fill[PaperGreen] (C) circle (1pt) node[above left] {$z$};

    \end{tikzpicture}
    \caption{The weighting function $g_r$ does not satisfy Axiom~\ref{axi:sym} in $(\mathbb{R}^2, d_1).$ As illustrated by the dashed $L^1$ ball centered in $x$, points $y$ and $z$ are indeed at the same distance of $x$, thus belong in a common isometry class in $S=\{x,y,z\}$ and should receive similar weights under Axiom~\ref{axi:sym}. Note however that the Lebesgue measure, i.e., the area, of the intersection between the red and the green ball differs from that of the intersection between the red and the blue ball, hence $g_r(S)(y) \neq g_r(S)(z).$ }
    \Description{Asymmetry of $g_r$ in the $(\mathbb{R}^2,d_1)$ metric. 
    The figure shows three points $S=\{x,y,z\}$ together with their $L^1$-balls of equal radius, represented as diamonds (rotated squares). 
    The dashed diamond centered at $x$ indicates that $y$ and $z$ lie at the same $L^1$-distance from $x$, hence belong to the same isometry class in $S$ and should receive identical weights under Axiom~\ref{axi:sym}. 
    However, the Lebesgue measure of the intersection between the balls centered at $x$ and $y$ differs from that between the balls centered at $x$ and $z$. 
    Since $g_r(S)(\cdot)$ depends on these area overlaps, this geometric imbalance implies $g_r(S)(y)\neq g_r(S)(z)$, illustrating the failure of the symmetry axiom in $(\mathbb{R}^2,d_1)$.}
    \label{fig:l1_balls_intersection}
\end{figure}

Such metric spaces thus require developing techniques different from the ones introduced in this work. 
\emph{Topologically independent} weighting functions would provide an elegant solution to this issue, i.e., functions that do not rely on the topological properties of $(E,d)$, but rather solely depend on the distance matrices associated with each finite set.

\begin{axiom}[Topological Invariance]\label{axi:topo_inv}
Weighting only depends on the distance matrix associated with each finite set, i.e., 
there exists a family $(h_n)_{n\geq1}$ with $h_n: \mathbb{R}^{n\times n} \mapsto \Delta(n)$ such that, for all $S \in \mathcal{P}(E)$ of cardinality $\vert S\vert =n$, we have $f(S) = h_n(M(S))$, where $M = (d(x,y))_{x,y \in S} \in \mathbb{R}^{n\times n}$ denotes the distance matrix associated to $S$ and $d$, unique up to permutations.
\end{axiom}

Identifying weighting functions within $\mathcal{R}_\alpha(E,d)$ that adhere to Axiom~\ref{axi:topo_inv} is a promising direction for future work.

\section*{CRediT Author Statement}
\balance
\textbf{Damien Berriaud}:  Conceptualization, Methodology, Formal Analysis, Investigation, Writing – Original Draft.
\textbf{Roger Wattenhofer}: Writing – Review \& Editing, Supervision, Funding Acquisition.

\newpage

\begin{acks}
A preliminary version of this paper was presented at GAIW'25; we thank the workshop attendees for their interesting questions and remarks. We thank the anonymous reviewers for their constructive feedback and useful suggestions, which helped improve the paper.
\end{acks}

\bibliographystyle{ACM-Reference-Format}
\bibliography{Cloneproofness}

\appendix

\section{Further Related Work}\label{sec:further_related_works}

The framework introduced in this work may prove useful in a range of settings; we introduce hereafter two additional potential applications.

(i) In machine learning, one may want to tackle class imbalance in multi-label classification problems, e.g., by giving weights to the individual contribution of each sample to the loss.
%to the loss of each sample according to a clone-proof representation.
More generally, reweighing a loss function
according to a clone-proof weighting
can be thought of as a distribution-agnostic importance sampling technique. As such, it could be beneficial whenever the most informative samples become increasingly difficult to obtain,
either because of the \emph{computational cost} associated with computing the associated true label, e.g., for protein folding \cite{jumper_highly_2021}, Graph Neural Networks-based SAT solvers \cite{wang_neuroback_2024}, climate modeling \cite{eyring_pushing_2024};
%,kochkov_neural_2024
or simply because of their \emph{relative rarity}, e.g., for rare disease diagnosis in medical image analysis \cite{banerjee_machine_2023},
%,li_role_2024}
fraud detection in financial systems \cite{motie_financial_2024}, low-resources languages in Natural Language Processing \cite{hedderich_survey_2021}.
%,magueresse_low-resource_2020}.

(ii) Clone-proof weightings offer a novel set of tools to mitigate Sybil attacks in distributed systems. While reputation mechanisms are designed to promote cooperation, significant research has focused on preventing adversaries from exploiting these systems by creating fake identities that mutually reinforce one another \cite{resnick_sybilproof_2009,seuken_sybil-proof_2014,stannat_achieving_2021}.
%seuken_sybil-proofness_2011,cheng_sybilproof_2005,
Similarly, researchers have sought to design mechanisms that incentivize information diffusion within social networks without encouraging the creation of fraudulent identities \cite{babaioff_bitcoin_2016,chen_sybil-proof_2023,zhang_sybil-proof_2020}. In both scenarios, clone-proof representations could help regulate the influence of Sybils once detected, effectively shifting the challenge to identifying these artificially generated identities.

We now highlight connections with several related bodies of literature.

\paragraph{Domain Adaptation and Samples Reweighting.}

In traditional learning setups, training and testing data are assumed to follow the same distribution. \emph{Domain adaptation} \cite{wang_deep_2018}, however, addresses scenarios where this assumption is violated, such as in the presence of class imbalance or label noise \cite{torralba_unbiased_2011}.

One approach to handle biases in training datasets involves assigning weights to individual samples and minimizing a weighted loss. Classical algorithms, such as AdaBoost \cite{freund_decision-theoretic_1997}, hard-negative mining \cite{chang_active_2018},
%,malisiewicz_ensemble_2011,lin_focal_2018
self-paced learning \cite{jiang_self-paced_2015}, adapt these weights dynamically during training based on the observed training loss. In contrast, the \emph{meta-learning} framework \cite{jamal_rethinking_2020,ren_learning_2019,shu_meta-weight-net_2019}
iteratively optimizes the weighting 
%weights 
to minimize loss on a small, unbiased validation dataset. Diverging from these methods, we consider a one-shot scenario, where the sample weighting is determined a priori and remains fixed.

In the context of imbalanced classification and long-tailed datasets, reweighting techniques have been explored extensively \cite{cao_learning_2019,dong_class_2017,gebru_fine-grained_2017}. These methods typically assign weights inversely proportional to the number of instances in each class. Recent approaches, such as those proposed in \cite{cui_class-balanced_2019}, go further by accounting for data overlap. They suggest weighting samples based on the effective number of samples, under the intuition that the marginal benefit of adding a new sample diminishes as the sample count grows. Specifically, they expand each data point to include its surrounding neighborhood and define the informativeness of a sample as the additional coverage %volume
it contributes compared to the scenario where the sample is excluded.
Our theoretical framework, in particular the locality axiom (see Axiom~\ref{axi:alpha_clone_locality}) and the volume-based construction 
%parametrized by $\alpha$ 
(see Section~\ref{sec:local_voting}), draws a close parallel to their total volume of sampled data, but extends this intuition beyond simple classification problems.

More generally, reweighting is a key technique in addressing \emph{covariate shift} within domain adaptation. Covariate shift occurs when the input distribution differs between training and evaluation datasets, i.e., $P_{train}(x) \neq P_{test}(x)$, but the conditional distribution $P_{train}(y \vert x) = P_{test}(y \vert x)$ remains consistent. Originating in importance sampling -- a technique commonly used to reduce variance in Monte Carlo estimation -- different methods \cite{y_covariate_2019} tackle covariate shift by reweighting samples with the ratio $P_{test} (x) / P_{train}(x).$ Approaches such as \emph{Kernel Density Estimation} \cite{hardle_nonparametric_2004} approximate these distributions using Gaussian kernels but suffer from the curse of dimensionality. Alternatively, \emph{Kernel Mean Matching} \cite{gretton_covariate_2009} minimizes the discrepancy between training and test distributions by aligning their means in a reproducing kernel Hilbert space, effectively estimating $P_{test} (x) / P_{train}(x) $ directly.
In contrast to these statistical methods, our approach does not rely on assumptions about the stochasticity of the sampling process. Instead, we adopt an axiomatic framework that provides robustness guarantees even when samples are adversarially selected. This makes our method more resilient to challenges like dataset poisoning \cite{carlini_poisoning_2024}.

\paragraph{Metric Learning and Hierarchical clustering.}

Metric learning and clustering techniques adopt fundamentally opposing philosophies. On the one hand, metric learning generally assumes access to ground-truth labels of similarity, e.g., whether points belong to the same class, and seeks to derive a distance metric, often within the class of generalized Mahalanobis metrics, that best separates dissimilar points while bringing similar ones closer \cite{ghojogh_spectral_2022}. 
On the other hand, clustering assumes some ground truth distance metric and aims to recover a notion of class by grouping similar points into clusters.
Our framework aligns more closely with clustering techniques through its shared starting assumption -- the availability of an informative distance metric. However, it diverges in its objective, focusing instead on the implications of similarity for 
%achieving 
unbiased weighting of 
%the 
data points.

Still, clustering techniques may represent a useful step toward this goal: one could first group points into clusters, assign clusters equal weights, and then share these weights uniformly within each cluster.
Such ``hard'' clusters may however lack the smooth properties we aim for. These limitations can be mitigated by adopting hierarchical clustering techniques, where points can belong to multiple clusters arranged in a tree structure (dendrogram) \cite{murtagh_algorithms_2017,ran_comprehensive_2023}. This approach enables contributions across different scales, akin to the role of the probability distribution $\nu$  in Theorem~\ref{thm:cont_convex_combi_gr}.

One may even consider more flexible structures, such as \emph{Fuzzy Hierarchical Clustering (FHC)} or \emph{Overlapping Hierarchical Clustering (OHC)}.
FHC \cite{varshney_pifhc_2022} 
%varshney_improved_2020,
allows points to have partial membership in several clusters at once, with the sum of memberships normalized across clusters. 
OHC \cite{berthold_overlapping_2020}, on the other hand, constructs directed acyclic graphs of clusters (quasi-dendrograms) instead of traditional trees, enabling a soft merging process. This approach allows points to have full membership in multiple clusters simultaneously, letting clusters overlap without the need for fuzzy memberships.

In any case, transitioning from clusters of points to individual weights becomes a non-trivial task for ``soft'' clusters. In this work, we move away from the concept of clusters and allow for non-transitive similarity relations.

\section{Preliminaries \& Notations}\label{sec:def}

In this section, we provide an overview of the notations used throughout the paper and introduce the key tools necessary for our demonstrations.

\paragraph{Metric spaces and Transport norm.}

Let $E$ be a set and let $d: E\times E \mapsto \mathbb{R}_{\geq0}$ be a metric on $E$, that is an operator satisfying for all $x,y,z \in E$
\begin{enumerate}
    \item (\emph{Non-negativity}) $d(x,y) \geq 0$ ;
    %for all $x,y \in E$;
    \item (\emph{Symmetry}) $d(x,y) = d(y,x)$;
    %for all $x,y \in E$;
    \item (\emph{Triangle inequality}) $d(x,z) \leq d(x,y) + d(y,z)$ ;
    %for all $x,y,z \in E$;
    \item (\emph{Separability}) $d(x,y) =0 \iff x =y.$ 
    %$d(x,x) =0$ for all $x\in E.$ %, but $d(x,y) = 0$ does not necessarily imply that $x = y$.
\end{enumerate}

For $k \in \mathbb{N}$, let $\mathcal{P}_k(E) := \{ S \subseteq E \mid \vert S \vert = k \}$ denote the powerset of subsets of $E$ of cardinality $k$; we further denote by $\mathcal{P}(E) := \bigcup_{k\geq1} \mathcal{P}_k(E)$ the set of finite subsets of $E.$ In particular,  $\mathcal{P}(E)$ does not contain the empty-set.

\smallskip
We equip $\mathcal{P}(E)$ with the \emph{transport distance} $d_\Pi$, defined for two finite subsets $X, Y$ in $\mathcal{P}(E)$ with cardinality $\vert Y \vert \geq \vert X\vert$ as
$$d_\Pi(X,Y)= d_\Pi(Y,X) = \min_{\pi \in \mathrm{Surj} (Y, X)} \max_{y \in Y} d(y, \pi(y)),$$
where $\mathrm{Surj}(Y, X)$ denotes the set of surjections from $Y$ over $X.$
We similarly denote by 
$$\Pi(Y,X)= \Pi(X,Y) = \argmin_{\pi \in\mathrm{Surj}(Y, X)} \max_{y \in Y} d(y, \pi(y))$$ the set of \emph{minimal transport maps} from $Y$ to $X$ that achieve the distance in $d_\Pi.$
We prove in Section~\ref{sec:metric_cont&axioms_disc} that $d_\Pi$ constitutes a metric on $\mathcal{P}(E).$

A metric space is finally defined as an ordered pair where the first element is a set and the second is a metric on this set; $(E,d)$ and $(\mathcal{P}(E), d_H)$ constitute the two most prominent examples of metric spaces in this work.

\paragraph{Isometries and self-isometries.}

In the metric space $(E,d)$, an isometry is defined as a map $\sigma: E \mapsto E$ that preserves distances, i.e., that verifies for all $x,y$ in $E$, 
$$d(\sigma(x),\sigma(y)) = d(x,y).$$

For $X \in \mathcal{P}(E)$ a finite subset of $E$, we moreover refer to an isometry in the subspace induced by $X$ as a \emph{self-isometry on $X$}, i.e., a permutation $\sigma_X: X\mapsto X$ such that $d(\sigma_X(x),\sigma_X(y)) = d(x,y)$ for all $x,y$ in $X.$

\smallskip
While restricting an isometry $\sigma$ to a subset $X \subseteq E$ gives a self-isometry on $X$, note that the converse may not hold in general. For example, consider the discrete metric space $(E,d_\text{path})$, where $E =\{v_1, v_2,v_3\}$ and  $d_\text{path}$ is the shorest path distance on the path $P = (v_1, v_2, v_3).$ Note that the transposition $\tau_S$ that swaps $v_1$ and $v_2$ is a self-isometry on  $S = \{v_1,v_2\}$, but its extension to $E$ is not an isometry.

However, we show in Appendix~\ref{sec:self-iso_eucli} that the converse holds in the particular case of Euclidean spaces, and that a self-isometry $\sigma_X$ on $X \subseteq E$ may be extended to a full-fledged isometry $\sigma$ on the entire space $E.$

\paragraph{Topology.}

In a metric space $(E,d)$, we denote by $B_r(x)$ the open ball of radius $r>0$ centered in $x \in E$, that is the set $B_r(x) := \{y \in E \mid d(x,y) < r\}.$ When $X$ is a set of points in $E$, we extend this definition and write $B_r(X) := \bigcup_{x\in X} B_r(x)$ for the union of the open balls of radius $r>0$ centered at each element of $X.$

\smallbreak
A set $X\subseteq E$ is then said to be \emph{open} if it contains a ball of positive size centered in each of its elements, i.e., for all $x \in X$, there exists $r>0$ such that $B_r(x) \subseteq X.$ 
On the contrary, we say that a set $X$ is \emph{closed} when its complement $X^c = E \setminus X$ is open. 
We next define the \emph{closure} $\overline{X}$ of a set $X$ as the smallest closed set that contains $X$, and its \emph{interior} $\mathring{X}$ as the largest open set contained within $X$.
We finally define the \emph{interior} of a set $X$ as $\partial X:= \overline{X} \setminus \mathring{X}.$

\paragraph{Hausdorff measure.}

A \emph{$\sigma$-algebra on $E$} is a non-empty collection of subsets of $E$ closed under complement, countable union and countable intersections. In particular, the \emph{Borel $\sigma$-algebra} $\Sigma$ is the smallest $\sigma$-algebra by set inclusion containing all open sets of $E.$

We refer to the ordered pair $(E, \Sigma)$ as a \emph{measurable space}, and define a \emph{measure} as a function $\mu : \Sigma \mapsto \mathbb{R}\cup\{ \pm \infty \}$ that verifies 
\begin{enumerate}
    \item \emph{Non-negativity:} $\mu(X) \geq 0$ for all $X\in \Sigma$;
    \item  $\mu(\emptyset) =0$;
    \item \emph{Countable additivity:} $\mu\big ( \bigcup_{k\in\mathbb{N}} X_k \big) = \sum_{k\in\mathbb{N}} \mu(X_k)$ for all countable collection $\{X_k\}_{k\in\mathbb{N}}$ of pairwise disjoint sets in $\Sigma.$
\end{enumerate}

%We now introduce a certain number of tools from geometric measure theory.
The \emph{Hausdorff measure} is particularly important, we recall its definition hereafter.
For $U$ a subset of $E$, we first define the \emph{diameter of $U$} as 
$$\operatorname{diam}(U) = \sup \{ d(x,y) \mid x,y, \in U \}.$$
We moreover adopt the convention $\operatorname{diam}(\emptyset)=0.$
For two positive real number $\delta$ and  $m$, as well as a subset $X \in \Sigma$, we further write
\begin{equation*}
\begin{aligned}
    \mathcal{H}^m_\delta(X) = \inf \Bigg\{ \sum_{k\in\mathbb{N}} \operatorname{diam}(U_k)^m \mid &X\subseteq \bigcup_{k\in\mathbb{N}} U_k, 
    \operatorname{diam}(U_k)<\delta \Bigg\},
\end{aligned}
\end{equation*}
where the infimum is taken over countable collections $\{U_k\}_{k\in\mathbb{N}}$ that cover $X$ with sets of diameter smaller than $\delta.$
The \emph{$m$-dimensional Hausdorff measure} is then finally defined as 
$$ \mathcal{H}^m(X) = 2^{-m} \alpha_m \cdot  \lim_{\delta \to 0}  \mathcal{H}^m_\delta(X) ,$$
where $\alpha(m) = \frac{\pi^{m/2}}{\Gamma(m/2+1)} $ denotes the volume of the unit $m$-ball, and $\Gamma$ represents Euler's gamma function. 

When $m$ is an integer, the scaling $2^{-m} \alpha_m$ ensures that the $m$-dimensional Hausdorff measure coincides with the classical Lebesgue measure on the Borel sets of an $m$-dimensional Euclidean space.

\smallbreak
For an Euclidean space $(E,d) = (\mathbb{R}^n, d_2) $, we can therefore write the $n-1$-dimensional surface of a ball of radius $r>0$  as 
$$ S^{n-1}_r = \mathcal{H}^{n-1} \big(B_r(x) \big),$$
where $x$ may be any point in $E$ since the Lebesgue measure, hence also the Hausdorff measure, are invariant by translation.
Denoting by $V^n_r$ the $n$-dimensional volume of a ball of radius $r>0$, we moreover have $V^n_r = \alpha(n) r^n $ as well as the relation $S^{n-1}_r = \frac{n}{r} V^n_r. $

\paragraph{Geometric measure theory.}

For two subsets $X,Y$ of $E = \mathbb{R}^n$, we define the \emph{Minkowski sum $X+Y$} as follows, i.e., $X+Y = \{ x+y \mid x \in X, y \in Y\}.$

\smallbreak
Let $o = (0,\dots,0)$ denote the origin and let $m$ be an integer verifying $0\leq m \leq n.$ We next define the \emph{$m$-dimensional Minkowski content of $X$} as
$$\mathcal{M}^{*m} (X) = \limsup_{\delta \to 0} \frac{ \mu\big( X + B_\delta(o) \big)}{\alpha(n-m) \delta^{n-m}},$$
where $\mu$ denotes the $m$-dimensional Lebesgue measure, and $X + B_\delta(o))$ is a Minkowski sum.
The \emph{$m$-dimensional lower Minkowski content } $\mathcal{M}_*^{m} (X)$ is similarly defined by replacing the $\sup$ by an $\inf$ in the definition of  $\mathcal{M}^{*m} (X).$

When the upper and lower $m$-dimensional Minkowski contents are equal, we call their common value $\mathcal{M}^{m} (X)$ the \emph{ $m$-dimensional Minkowski content of $X.$}

In particular for $m=n-1$, we have $\alpha(1)=2$ and we get
$$\mathcal{M}^{n-1} (X)  = \lim_{\delta \to 0} \frac{ \mu \big( X + B_\delta(o) \big)}{2\delta}.$$

A set $X$ is said to be \emph{$m$- rectifiable} if and only if there exists a Lipschitz function mapping some bounded subset of $\mathbb{R}^m$ onto $X.$ 
\smallbreak
With these definitions in place, we can now present a fundamental result in geometric measure theory that establishes a connection between the Minkowski content of well-behaved sets and their Hausdorff measure.

\begin{theorem}[From \cite{federer_geometric_1996}, Thm 3.2.39]
    If $X$ is a closed $m$-rectifiable set of $\mathbb{R}^n$, then 
    $\mathcal{M}^{m} (X) = \mathcal{H}^m(X).$
\end{theorem}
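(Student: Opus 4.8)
This statement is a classical theorem of geometric measure theory, quoted here from \cite{federer_geometric_1996}; the plan to reprove it is to establish the two inequalities $\mathcal{M}_*^{m}(X)\ge\mathcal{H}^m(X)$ and $\mathcal{M}^{*m}(X)\le\mathcal{H}^m(X)$ separately, which together with the trivial bound $\mathcal{M}_*^{m}(X)\le\mathcal{M}^{*m}(X)$ forces the Minkowski content $\mathcal{M}^{m}(X)$ to exist and coincide with $\mathcal{H}^m(X)$. First I would make two reductions. Since $m$-rectifiability already forces $\mathcal{H}^m(X)<\infty$ (a Lipschitz image of a bounded subset of $\mathbb{R}^m$ has finite $\mathcal{H}^m$-measure), and since writing $X=\bigcup_k\big(X\cap\overline{B_k(o)}\big)$ lets one pass to the limit using monotone convergence on the Minkowski side and inner regularity on the Hausdorff side, it suffices to treat $X$ compact. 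Then, by the structure theorem for rectifiable sets, I would decompose $X$, up to an $\mathcal{H}^m$-null set, as a countable disjoint union $\bigsqcup_j X_j$ where each $X_j$ is a compact subset of an embedded $m$-dimensional $C^1$ submanifold of $\mathbb{R}^n$; fixing $\epsilon>0$ and truncating to a finite sub-union with $\mathcal{H}^m$-error below $\epsilon$ isolates a ``smooth part'' and a small ``residual'' $R$, with $X+B_\delta(o)\subseteq\big(\bigcup_{j\le J}X_j+B_\delta(o)\big)\cup\big(R+B_\delta(o)\big)$.

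For the upper bound, on each smooth piece $X_j$ I would use compactness and the $C^1$ structure to cover $X_j$ by finitely many charts where the submanifold is a graph over its tangent $m$-plane with small Lipschitz constant $\eta$, so that the elementary tube/coarea estimate gives $\mu\big(X_j+B_\delta(o)\big)\le(1+c\eta)\,\alpha(n-m)\,\delta^{n-m}\,\mathcal{H}^m(X_j)$ for $\delta$ small. The residual is then controlled by exploiting that $R$ lies inside the closed, $\mathcal{H}^m$-finite rectifiable set $X$, whose upper $m$-density is at most $1$ at $\mathcal{H}^m$-a.e.\ point: a Besicovitch covering of $R$ by balls $B_{r_i}(x_i)$ with $r_i$ of order $\delta$, bounded overlap, and $\sum_i\alpha(m)r_i^m$ of order $\epsilon$ yields $\mu\big(R+B_\delta(o)\big)\le\sum_i\mu\big(B_{r_i+\delta}(x_i)\big)=O(\epsilon\,\delta^{n-m})$. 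Summing over $j\le J$ and sending $\delta\to0$, then $\eta\to0$, then $\epsilon\to0$ gives $\mathcal{M}^{*m}(X)\le\mathcal{H}^m(X)$. For the lower bound, which needs less structure, I would take a maximal family of disjoint balls $B_\delta(x_i)$ centered on $X$: disjointness gives $\sum_i\alpha(n)\delta^n\le\mu\big(X+B_\delta(o)\big)$ and maximality gives $X\subseteq\bigcup_iB_{2\delta}(x_i)$, which already yields the inequality up to a dimensional constant; replacing the crude packing by a Vitali selection that uses the density-$1$ property of rectifiable sets (each chosen small ball carrying $\mathcal{H}^m$-mass essentially $\alpha(m)\delta^m$) sharpens the constant to $1$, so $\mathcal{M}_*^{m}(X)\ge\mathcal{H}^m(X)$.

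The hard part is the upper bound, and specifically the residual estimate $\mu\big(R+B_\delta(o)\big)=O(\epsilon\,\delta^{n-m})$: Minkowski content is not countably subadditive in the way the Hausdorff measure is, so the smallness of $\mathcal{H}^m(R)$ does not by itself bound the $(n{-}m)$-normalized volume of its $\delta$-neighborhood — and the conclusion genuinely fails without the closedness hypothesis, since a dense $\mathcal{H}^m$-null subset of a flat $m$-disk is $m$-rectifiable yet has full Minkowski content. The argument must therefore couple closedness (so that the residual stays inside a set on which the a.e.\ density bound $\Theta^{*m}\le1$ holds) with a Besicovitch/Vitali covering of $R$ performed at scale $\delta$, and tracking the overlap and the $\mathcal{H}^m$-mass captured by these balls is where the bookkeeping is genuinely delicate. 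The second subtle point, present in both inequalities, is extracting the \emph{sharp} constant $1$ rather than a mere dimensional constant; this is exactly where the approximate-tangent-plane structure of rectifiable sets — flatness of $X$ at $\mathcal{H}^m$-a.e.\ point together with density $1$ — is indispensable. A fully rigorous treatment is given in \cite{federer_geometric_1996}.
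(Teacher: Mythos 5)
This statement is not proved in the paper at all: it is imported verbatim from Federer's \emph{Geometric Measure Theory} (Theorem 3.2.39) and used as a black box in the proofs of Theorems~\ref{thm:local_vote_rep_func} and~\ref{thm:cont_convex_combi_gr}, so there is no internal argument to compare yours against. Your sketch follows the standard textbook route (split into $\mathcal{M}^{*m}\leq\mathcal{H}^m$ and $\mathcal{M}_*^{m}\geq\mathcal{H}^m$, reduce to compact sets, decompose the rectifiable set into $C^1$ pieces plus a small residual, tube estimates on the smooth pieces, covering estimates on the residual), and you correctly identify where closedness is indispensable (the upper bound; your dense-null-set counterexample is the right one). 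In that sense the outline is sound and consistent with how the result is actually proved.

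Two points in the sketch would not survive as written, though. First, the reduction step invokes ``monotone convergence on the Minkowski side'' for the exhaustion $X=\bigcup_k\big(X\cap\overline{B_k(o)}\big)$: Minkowski content is not a measure and does not pass to limits along increasing unions, so this step is unjustified --- it is also unnecessary, since in Federer's terminology an $m$-rectifiable set is the Lipschitz image of a \emph{bounded} subset of $\mathbb{R}^m$, hence bounded, and closed plus bounded already gives compactness. Second, your lower-bound argument as described (maximal disjoint balls, then a Vitali selection ``using density $1$'' to sharpen the constant) cannot by itself produce the constant $1$: disjointness plus density counting compares $\alpha(n)\delta^n$ against $\alpha(m)\delta^m$, which loses the factor $\alpha(n)/\big(\alpha(m)\alpha(n-m)\big)\neq 1$. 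The sharp constant requires the approximate-tangent-plane structure and a projection/Fubini-type tube estimate (the $\delta$-neighborhood of an almost-flat piece has volume close to $\alpha(n-m)\delta^{n-m}$ times its $\mathcal{H}^m$-measure), a point you acknowledge only in the closing remarks rather than building into the argument. Since the paper treats the theorem as a citation, deferring full rigor to Federer is acceptable, but these two steps should be flagged as gaps in the sketch rather than routine bookkeeping.
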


\section{Proofs of the Main Results}\label{sec:proof}

This section gathers the proofs of Theorems~\ref{thm:local_vote_rep_func} and~\ref{thm:cont_convex_combi_gr}.
Most of the technical tools required for the demonstrations are introduced in Appendix~\ref{sec:def}. Before delving into the proof of Theorem~\ref{thm:local_vote_rep_func}, let us first recall its formulation.

\begin{reptheorem}{thm:local_vote_rep_func}
For $r>0$, the weighting function $g_r$ is well-defined and belongs in $\mathcal{R}_{2r}(\mathbb{R}^n, d_2).$
\end{reptheorem}

\begin{proof}
     Let $r>0$ be a positive radius, we first verify that $g_r$ is a well-defined weighting function. 
    Let $S$ be a finite subset of $\mathbb{R}^n$, and $x$ be an element of $S$, we show that $ g_{r,S,x}$ is measurable with respect to the Lebesgue measure $\mu$. 
    Indeed, consider the following partition of its domain
    \begin{equation*}
        \begin{aligned}
            B_r(S) 
            &= \bigcup_{U \subseteq S} A_r(U),
        \end{aligned}
    \end{equation*}
    where $A_r(U) = \bigcap_{u\in U} B_r(u) \bigcap_{v \in S\setminus U} \big( B_r(S) \setminus B_r(v) \big)$ belongs in the Borel $\sigma$-algebra for all choice of $U \subseteq S$. 
    Note that this partition is finite since there are at most $2^{\vert S \vert }$ choices for the subset $U.$ 

    Moreover,  $g_{r,S,x}$ is null on $B_r(S)  \setminus B_r(x)$ and, for all $U \subseteq S$ with $x\in U$ and all $y$ in $A_r(U)$, we have $g_{r,S,x}(y) = \frac{1}{\vert U \vert}.$ Using the above partition, we rewrite 
    \begin{equation}\label{equ:grade_simple_f}
        g_{r,S,x}: y \in B_r(S) \mapsto \sum_{ \{x\} \subseteq U \subseteq S } \frac{\mathbb{1}_{A_r(U)}(y) } { \vert U \vert},
    \end{equation}
     and recognize a simple non-negative function. As such,  $g_{r,S,x}$ is both measurable and integrable, and $g_r(S)$ is well-defined. 
     Moreover, we verify that $g_r(S)$ is normalized.
    \begin{equation*}
        \begin{aligned}
            \sum_{x\in S} g_r(S)(x)
            &= \sum_{x\in S} \int_{B_r(S)} \frac{g_{r,S,x}}{\mu(B_r(S))} \:d\mu ,\\
            &= \int_{B_r(S)} \frac{1}{\mu(B_r(S))}    \sum_{x\in S} \frac{ \mathbb{1}_{B_r(x)}(y) }{\sum_{z \in S} \mathbb{1}_{B_r(z)}(y) }\: d\mu(y), \\
            &= \int_{B_r(S)} \frac{ 1}{\mu(B_r(S))} \: d\mu =1.
        \end{aligned}
    \end{equation*}
    %where $(a)$ uses Fubini's theorem since $g_{r,S,x}$ is integrable.
    
    Since $g_r(S)$ is also non-negative, it is a probability distribution over $S$, hence $g_r$ is indeed a well-defined weighting function over $\mathcal{P}(\mathbb{R}^n)$.

    \medskip
     \textbf{Axiom~\ref{axi:pos}.} Let $S$ be a finite subset of $\mathbb{R}^n$, and $x$ be an element of $S.$ Using the expression of $g_{r,S,x}$ in Equation~\eqref{equ:grade_simple_f}, we get for each $y$ in $B_r(x)$ that $g_{r,S,x}(y) \geq \frac{1}{\vert S\vert}.$ Since $\mu\big(B_r(S)\big) \leq \vert S \vert \cdot \mu\big(B_r(x)\big) $ by countable additivity and uniformity of $\mu$, we finally obtain
     \begin{equation}\label{equ:proof_gr_pos}
         \begin{aligned}
             g_r(S)(x) 
             &\stackrel{(a)}{=} \int_{B_r(x)} \frac{g_{r,S,x}}{\mu\big(B_r(S)\big)} \:d\mu, \\
             &\geq \frac{1}{\vert S \vert} \int_{B_r(x)} \frac{1}{\mu\big(B_r(S)\big)} \:d\mu, \\
             &= \frac{1}{\vert S \vert} \frac{\mu\big(B_r(x)\big)}{\mu\big(B_r(S)\big)} \geq \frac{1}{\vert S \vert^2}.
         \end{aligned}
     \end{equation}
     Equality $(a)$ holds since  $g_{r,S,x}$ is null on $B_r(S)  \setminus B_r(x).$
      Hence we have $g_r(S)(x) >0$ for all $x \in S$, and Axiom~\ref{axi:pos} holds.
    
    \medskip
    \textbf{Axiom~\ref{axi:sym}.} Let $S\subset \mathbb{R}^n$ be a finite subset,  $\sigma_S :S \mapsto S$ be a self-isometry on $S$, and $x$ be an element of $S.$ 
    Since $g_{r,S,x}$ is a simple function, we deduce from Equation~\eqref{equ:grade_simple_f} the following, i.e., 
    \begin{equation*}
        \begin{aligned}
            g_r(S)(\sigma(x))
            &= \frac{1}{\mu\big(B_r(S)\big)} \sum_{\{\sigma(x)\} \subseteq U \subseteq S} \frac{\mu\big(A_r(U)\big) } { \vert U \vert},\\
            &\stackrel{(a)}{=}  \frac{1}{\mu\big(B_r(S)\big)} \sum_{\{x\} \subseteq V \subseteq S } \frac{\mu\big(A_r(\sigma_S(V))\big) } { \vert \sigma_S(V) \vert },\\
            &\stackrel{(b)}{=}  \frac{1}{\mu\big(B_r(S)\big)} \sum_{ \{x\} \subseteq V \subseteq S} \frac{\mu\big(T\circ L(A_r(V))\big) } { \vert V \vert},\\
            &\stackrel{(c)}{=}  \frac{1}{\mu\big(B_r(S)\big)} \sum_{\{x\} \subseteq V \subseteq S} \frac{\mu\big(A_r(V))\big) } { \vert V \vert },\\
            &= g_r(S)(x).
        \end{aligned} 
    \end{equation*}
    Equality $(a)$ holds by writing $V = \sigma_S^{-1}(U) = \{ \sigma_S^{-1}(u) \mid u \in U\}$ and noting that $V$ contains $ \{x\}$.
    Equality $(b)$ uses that $\vert \sigma_S(V) \vert = \vert V \vert $ and the decomposition of the isometry $\sigma_S$ as a linear transformation $L: x\mapsto Qx$ and a translation $T:x \mapsto x + t$, where $Q$ and $t$ are the orthogonal matrix and the vector of Lemma~\ref{lem:isometry_rigid_transformation}. Using that $T\circ L$ is an isometry on $\mathbb{R}^n$, we then rewrite $A_r(\sigma_S(V)) = T\circ L (A_r(V)).$
    
    Finally, Equality $(c)$ follows from the invariance of the Lebesgue measure by translation $\mu\big( T\circ L (A_r(V))\big)= \mu\big( L (A_r(V))\big)$, as well as its behavior under linear transformation $\mu\big( L (A_r(V))\big) = \vert \text{det}(Q)\vert \cdot \mu\big( A_r(V)\big) = \mu\big( A_r(V)\big) $, where we used that $Q$ is orthogonal.
    Hence Axiom~\ref{axi:sym} holds.

    \medskip
    Since the proofs of Axioms~\ref{axi:clone_fair_uni}, \ref{axi:indiv_cont} and~\ref{axi:alpha_clone_locality} use similar techniques, we first introduce the necessary tools in the more complex case of Axiom~\ref{axi:indiv_cont}, and later use similar arguments to show Axiom~\ref{axi:alpha_clone_locality}.

    \medskip 
    \textbf{Axiom~\ref{axi:indiv_cont}.}

    Let $k \in \mathbb{N}$ be a positive integer and let $\delta$ satisfy $r > \delta > 0.$
    Let $X,Y$ be two subsets of $\mathbb{R}^n$ of cardinality $\vert X \vert = \vert Y \vert = k$ such that $d_\Pi(X,Y) \leq \delta$, and let $\pi \in \Pi(X,Y)$; note that we have $d(y, \pi(y)) \leq \delta$  for all $y$ in $Y.$ 

    For $U$ a subset of $X$, we denote by $\partial A_r(U)$ the boundary of $A_r(U).$ We moreover associate with $A_r(U)$ its ``thick interior'' $A_r^{-\delta}(U) = \big\{ x \in S \mid d(x, \partial A_r(U)) > \delta \big\} $, as well as its ``thick closure'' $A_r^{+\delta}(U) = \big\{ x \in \mathbb{R}^n \mid d(x, A_r(U)) \leq \delta \big\}.$ Note that these definitions allow to obtain the ``thick boundary'' of $A_r(U)$ by set difference, i.e., $ \partial A_r(U) + \overline{B}_\delta(o) = A_r^{+\delta}(U) \setminus A_r^{-\delta}(U) .$

     \smallskip
     We next show that $A_r^{-\delta}(U) \subseteq  A_r(\pi(U))$; let $z$ be an element of $A_r^{-\delta}(U).$
     \begin{itemize}
         \item  Let $u$ be an element of $U$ and let $z'$ be the projection of $z$ on the boundary of $B_r(u).$ Note in particular that $z$ belongs to the segment $[z',u].$
        This implies that $d(z,u) = d(z',u) - d(z,z') < r -\delta$ by definition of $A_r^{-\delta}(U).$ By triangle inequality, we then get $d(z,\pi(u)) \leq d(z,u) + d(u,\pi(u)) \leq \delta.$  Note that this holds for all choices of $u$ in $U$, hence also for all $\pi(u)$ in $ \pi(U).$ 

        \item Now let $v$ be an element of $X\setminus U$ and let $z'$ be the projection of $z$ on the boundary of $B_r(v).$ Note this time that $z'$ belongs in the segment $[z,v]$ and we similarly obtain $d(z,v) = d(z',z) + d(z',v) > r+\delta > r-\delta$ using the definition of  $A_r^{-\delta}(U).$ The triangle inequality again gives $d(z,\pi(v)) \geq  d(z,v) - d(v,\pi(v)) \geq \delta$ for all $\pi(v) \in Y \setminus \pi(U).$ 
     \end{itemize}
     Together, this implies that $z$ lies in  $A_r(\pi(U))$ and we conclude that $A_r^{-\delta}(U) \subseteq  A_r(\pi(U)).$

     \smallskip
    Since $A_r^{-\delta}(U)$ is also a subset of $A_{r}(U)$,
    Equation~\eqref{equ:grade_simple_f} implies, for all $x$ in $X$ and $z$ in $A_r^{-\delta}(U)$, that $g_{r,X,x}(z) = \mathbb{1}_U(z) /  \vert U \vert  = g_{r,Y,\pi(x)}(z).$
    Since this holds for all choices of $U \subseteq X$, we finally get that $g_{r,X,x}  = g_{r,Y,\pi(x)}$ on $A_r^{-\delta,X} := \bigcup_{U \subseteq X} A_r^{-\delta}(U).$

    \smallskip
    We now bound the difference of weighting between $x$ in $X$ and $\pi(x)$ in $Y$ using the partition of $B_r(X \cup Y)$ induced by $A_r^{-\delta,X}.$

    \begin{equation}\label{equ:bound_rep_diff_X&Y}
        \begin{aligned}
            &\big\vert g_r(X)(x) - g_r(Y)(\pi(x)) \big\vert \\
            &= \bigg\vert \int_{B_r(X)} \frac{g_{r,X,x}} {\mu\big(B_r(X)\big)} d\mu  - \int_{B_r(Y)} \frac{g_{r,Y,\pi(x)}} {\mu\big(B_r(Y)\big)} d\mu \bigg\vert ,\\
            &\stackrel{(a)}{\leq} \bigg \vert \frac{\mu\big(B_r(X)\big)}{\mu\big(B_r(Y)\big)} -  1 \bigg\vert \cdot \bigg 
             \vert\int_{A_r^{-\delta,X}} \frac{g_{r,X,x}} {\mu\big(B_r(X)\big)} d\mu \bigg\vert \\
            &+  \int_{ B_r( X\cup Y )\setminus A_r^{-\delta,X}} \frac{1} {\mu\big(B_r(x)\big)}  d\mu , \\
            &\stackrel{(b)}{\leq}   \frac{\mu\big(B_{r + \delta}(X)\big) - \mu\big(B_{r}(X)\big)} {\mu\big(B_r(\pi(x))\big)}  
            +  \frac{ \mu\big(B_{r + \delta}( X )\big) -   \mu\big(A_r^{-\delta,X}\big)}{\mu\big(B_r(x)\big)} ,\\
            &\stackrel{(c)}{\leq}  \frac{ 2 \cdot \mu\big( \partial B_{r}(X) + \overline{B}_\delta(o)\big) }{\mu\big(B_r(x)\big)}  ,\\
            &\stackrel{(d)}{\leq}   \frac{ 2 \cdot \vert X \vert \cdot \mu\big( \partial B_{r}(o) + \overline{B}_\delta(o)\big) }{V^n_r}   .\\
        \end{aligned}
    \end{equation}
    Inequality $(a)$ follows from the functional equality $g_{r,X,x}  = g_{r,Y,\pi(x)}$ on $A_r^{-\delta,X}$ and the fact that the two functions have their image in $[0,1]$ otherwise; it also uses that both $\mu\big(B_r(X)\big)$ and $\mu\big(B_r(Y)\big)$ are greater than $\mu\big(B_r(x)\big).$
    We establish inequality $(b)$ by noting that $B_r(\pi(x)) \subseteq B_r(Y) \subseteq B_{r+\delta}(X)$ and that the integral in the first term is bounded by one.
    Inequality $(c)$ uses the invariance of the Lebesgue measure by translation, its non-negativity and countable additivity, as well as the inclusion $ B_r(X) \supseteq A_r^{-\delta,X}$ and the set equality $\overline{B}_{r+\delta}(X) \setminus A_r^{-\delta, X} = \partial B_r(X) + \overline{B}_\delta(o).$
    Finally, inequality $(c)$ holds by writing $V_r^n$ for the volume of the $n$-dimensional Euclidean ball of radius $r$, and using Boole's inequality with $ \partial B_r(X) + \overline{B}_\delta(o) = \bigcup_{x\in X} \Big(\partial B_r(x) + \overline{B}_\delta(o) Big)$, as well as the translation invariance of the Lebesgue measure.

     Remark that the boundary $ \partial B_{r}(o)$ is a smooth $n-1$ manifold, therefore it is $n-1$ rectifiable. 
    By \cite[Theorem 3.2.39]{federer_geometric_1996}, the Minkovski content of the boundary $ \mathcal{M}^{n-1}\big(\partial B_{r}(o)\big)$ exists and is equal to its $n-1$ dimensional Hausdorff measure $ \mathcal{H}^{n-1}\big(\partial B_{r}(o) \big)$, i.e.,

    \begin{equation}\label{equ:Minkowski}
    \begin{aligned}
        \mathcal{M}^{n-1}\big(\partial B_{r}(o) \big)
         &=\lim_{\delta \to 0} \frac{ \mu\big(\partial B_{r}(o) + B_\delta(o)\big)  }{2 \delta} 
         = \mathcal{H}^{n-1} \big(\partial B_{r}(o)\big).
    \end{aligned}
    \end{equation}
    
    We can hence take $\delta$ small enough, but independent of $X$, so as to satisfy

    \begin{equation}\label{equ:cont_bound_diff_with_Hausdorff}
        \mu\big( \partial B_{r}(o) + \overline{B}_\delta(o)\big) \leq 4 \delta \mathcal{H}^{n-1}(\partial B_{r}(o)).
    \end{equation}

    %Since each component of dimension $n-1$ that constitutes the boundary $\partial A_{r}(U)$  is counted at most twice over all subsets $U\subseteq X$, 
    With such a choice of $\delta$, Equation~\eqref{equ:bound_rep_diff_X&Y} then becomes, i.e.,
    \begin{equation}\label{equ:proof_cont_final}
        \begin{aligned}
            \big\vert g_r(X)(x) - g_r(Y)(\pi(x)) \big\vert
            &\leq  \frac{8 \delta \vert X\vert \mathcal{H}^{n-1}(\partial B_{r}(o)) }{ V^n_r} 
            \stackrel{(a)}{=}  \frac{8\delta k n}{ r} ,
        \end{aligned}
    \end{equation}
    where equality $(a)$ uses the relation between the surface and the volume of an $n$-dimensional sphere of radius $r$, i.e.,
    $\mathcal{H}^{n-1} \big(\partial B_{r}(o) \big)= S^{n-1}_r = \frac{n}{r} V^n_r.$

    \smallskip
    For a fixed radius $r > 0$ and an arbitrary $\varepsilon > 0$, we can finally choose $\delta \leq \min \{r, \varepsilon r / \big( 8kn \big)\}$ small enough but independent of $X$, and Axiom~\ref{axi:indiv_cont} holds.

    \medskip
    \textbf{Axiom~\ref{axi:alpha_clone_locality}.} 
    Let $\alpha \geq 2r$, let $\delta$ satisfy $r> \delta >0$ and let $S \subseteq \mathbb{R}^n$ be a finite subset.
    Consider $x\in S$ and $x' \in \mathbb{R}^n \setminus S$ such that $d(x,x') \leq \delta.$
    We moreover denote by $S'$ the union $S \cup \{x'\}.$

    \smallskip
    The proof follows the same structure as that of Axiom~\ref{axi:indiv_cont}: for $y$ an element of $S$ such that $d(x,y) \geq  \alpha \geq 2r$, we first show that $ g_{r,S,y}$ and $g_{r,S',y}$ are equal on a carefully chosen set.
    Indeed for $z$ an element of $B_{r - \delta}(y)$, the triangle inequality gives $d(z,x') \geq d(x,y) - d(x,x') - d(x',y) > 2r - \delta - (r- \delta) = r $, and we obtain that $ g_{r,S,y}(z) = g_{r,S',y}(z).$ 

    \smallskip
    We then bound the difference of weighting between $S$ and $S'$ in a similar fashion as in Equation~\eqref{equ:bound_rep_diff_X&Y}.
    \begin{equation}\label{eq:bound_rep_diff_local_simple}
        \begin{aligned}
             &\big\vert g_r(S)(y) - g_r(S')(y) \big\vert \\
             &\stackrel{(a)}{\leq}  \bigg(\frac{\mu\big(B_r(S')\big)}{\mu\big(B_r(S)\big)} -  1 \bigg) \cdot \bigg\vert  \int_{B_{r - \delta}(y)} \frac{g_{r,S',y}} {\mu\big(B_r(S')\big)} d\mu \bigg\vert ,\\
             &+  \int_{B_r(y)\setminus B_{r - \delta}(y)} \frac{1} {\mu\big(B_r(S)\big)}  d\mu ,\\
             &\stackrel{(b)}{\leq}   \frac{\mu\big(B_{r}( x') \big) - \mu\big( B_r(x) \big)}{\mu\big(B_r(S)\big)}  + \frac{ \mu\big(B_r(y)\big) - \mu\big(B_{r - \delta}(y)\big)}{\mu\big(B_r(S)\big)} ,\\
             &\stackrel{(c)}{\leq}   \frac{ \mu\big(B_{r+\delta}(o)\big) - \mu\big(B_{r-\delta}(o)\big)} {V^n_r} 
             \stackrel{(d)}{\leq} \frac{4 \delta n}{  r} .
             %\stackrel{(d)}{\leq} \frac{2\delta (2 \vert X \vert +1) n }{ r}.
        \end{aligned}
    \end{equation}
    Inequalities $(a)$ follows from similar arguments as Equation~\eqref{equ:bound_rep_diff_X&Y}$(a)$, and inequality $b$ uses the countable additivity and the non-negativity of $\mu$ with $B_r(S') = B_r(S) \cup B_r(x') \setminus B_r(S)$ and $B_r(S) \supset B_r(x).$
    Inequality $(c)$ holds since the Lebesgue measure is translation invariant and because $\mu\big(B_r(S)\big) \geq \mu\big(B_r(o)\big) = V^n_r.$
    Finally, inequality $(d)$ holds for $\delta$ small enough, but independent of $S$, by combining arguments from Equations~\ref{equ:cont_bound_diff_with_Hausdorff} and~\ref{equ:proof_cont_final}.

    For a fixed radius $r>0$ and an arbitrary $\varepsilon>0$, we can finally choose $\delta \leq \min\{r, \varepsilon r / (4n)\}$ small enough but independent of $S$, and Axiom~\ref{axi:alpha_clone_locality} holds with $\alpha = 2r.$

    \medskip
    \textbf{Axiom~\ref{axi:clone_fair_uni}.} 
    Let $\delta$ be a positive number satisfying $r> \delta >0$, and $S$ be a finite subset of $\mathbb{R}^n.$ We moreover let $x,y$ be two elements of $S$ such that $d(x,y) \leq \delta.$

    \smallskip
    Similarly as for the proof of Axiom~\ref{axi:indiv_cont}, we first show that $B_{r-\delta}(x) \subset B_{r}(x) \cap B_r(y).$ Indeed, for $z$ an element of $B_{r-\delta}(x)$, the triangle inequality gives $d(z,y) \leq d(z,x) + d(x,y) < r -\delta + \delta = r.$ Equation~\eqref{equ:grade_simple_f} then implies the functional equality $g_{r,S,x} = g_{r,S,y}$  on $B_{r-\delta}(x).$

    \smallskip
    We then bound the difference of weighting between $x$ and $y$ in a similar fashion as in Equation~\eqref{equ:bound_rep_diff_X&Y}.
    \begin{equation}\label{equ:bound_clone_fair}
        \begin{aligned}
             \big\vert g_r(S)(x) - g_r(S)(y) \big\vert
             &\stackrel{(a)}{\leq}   \int_{B_r(x)\setminus B_{r - \delta}(x)} \frac{1} {\mu\big(B_r(S)\big)}  d\mu ,\\
             &\stackrel{(b)}{\leq}  \frac{ \mu\big(B_r(x)\big) - \mu\big(B_{r - \delta}(x)\big)}{\mu\big(B_r(x)\big)}  \\
             &\stackrel{(c)}{\leq} \frac{4 \delta S_r^{n-1}}{  V^n_r} = \frac{4 \delta n}{r},
             %\stackrel{(d)}{\leq} \frac{2\delta (2 \vert X \vert +1) n }{ r}.
        \end{aligned}
    \end{equation}
    where inequality $(a)$ and $(b)$ use similar arguments as for Equation~\eqref{equ:bound_rep_diff_X&Y}$(a)$ and $(b)$ respectively; inequality $(c)$ follows from combining the arguments of Equations~\eqref{equ:cont_bound_diff_with_Hausdorff} and~\eqref{equ:proof_cont_final}. 
    In particular, note that 
    $ \mu\big(B_{r}(x)\big) - \mu\big(B_{r -\delta}(x)\big) \leq \mu\big(B_{r+\delta}(x)\big) - \mu\big(B_{r -\delta}(x)\big) \leq 4 \delta S_r^{n-1} $
    holds for a value of $\delta$ small enough but independent of $S$ since the Lebesgue measure is invariant by translation.
    
    \smallskip
    Then for an arbitrary $\varepsilon>0$, a choice of $\delta \leq  \min\{r, \varepsilon  r / (4n) \}$ small enough ensures that Axiom~\ref{axi:clone_fair_uni} holds.

   \medskip \textbf{Conclusion.} 
   Since $g_r$ is a weighting function on $(\mathbb{R}^n, d_2)$ satisfying Axioms~\ref{axi:pos}, \ref{axi:sym}, \ref{axi:clone_fair_uni}, \ref{axi:indiv_cont} and \ref{axi:alpha_clone_locality} with $\alpha =2r$, we conclude that $g_r$ belongs in $\mathcal{R}_{2r}(\mathbb{R}^n,d_2).$

\end{proof}

We now recall the formulation of Theorem~\ref{thm:cont_convex_combi_gr} before attacking its proof.

\begin{reptheorem}{thm:cont_convex_combi_gr}
    Let $\nu$ be a probability density function over $[0,\alpha].$
    Then the weighting function $f_\nu : S \in \mathcal{P}\big(\mathbb{R}^n \big) \mapsto \int_0^{\alpha} \nu(r) g_r(S) ~dr$ belongs in $\mathcal{R}_{2\alpha}(\mathbb{R}^n,d_2).$
\end{reptheorem}

\begin{proof}
Let $\nu$ be a probability density function over $[0,\alpha]$, that is a non-negative Lebesgue-integrable function satisfying $\int_0^{\alpha} \nu(r) dr =1$. Let $S$ be a finite subset of $\mathbb{R}^n$ and $x$ be an element of $S.$

\smallskip
First, note that $r \mapsto g_r(S)(x)$ is  non-negative and bounded by one over $\mathbb{R}_{>0}$, hence  $r \mapsto \nu(r) \cdot g_r(S)(x)$ is Lebesgue-integrable and $f_\nu(S)(x)$ is non-negative. Moreover, we have 
\begin{equation*}
\begin{aligned}
    \sum_{x \in X} f_\nu(S)(x) 
    &=  \int_{(0,\alpha]} \nu(r) \sum_{x \in X} g_r(S)(x) ~dr 
    \stackrel{(a)}{=}  \int_{(0,\alpha]} \nu(r) ~dr =1 ,
\end{aligned}
\end{equation*}
where equality $(a)$ uses that $g_r$ is a weighting function.
This ensures that $f_\nu$ is  also a weighting function of $(\mathbb{R}^n, d_2).$
 
\smallskip

We now verify that Axioms~\ref{axi:pos}, \ref{axi:sym}, \ref{axi:clone_fair_uni}, \ref{axi:indiv_cont} and \ref{axi:alpha_clone_locality} are implied for $f_\nu$ from the fact that $g_r$ belongs in $\mathcal{R}_{2\alpha}(\mathbb{R}^n,d_2)$ for all $\alpha\geq r>0.$
On the one hand, Axiom~\ref{axi:pos} is directly implied from Equation~\eqref{equ:proof_gr_pos}, i.e,
\begin{equation*}
\begin{aligned}
    f_\nu(S)(x) 
    &=  \int_{(0,\alpha]} \nu(r)  g_r(S)(x) ~dr 
    \geq \int_{(0,\alpha]} \frac{\nu(r)}{|S|^2}  dr 
    = \frac{1}{|S|^2} >0.
\end{aligned}
\end{equation*}
%We similarly get that Axiom~\ref{axi:sym} holds for $f_\nu$: for an isometry $\sigma:S\mapsto S$, we have 
Furthermore, for an isometry $\sigma_S : S\mapsto S$, we have
\begin{equation*}
\begin{aligned}
    f_\nu(S)(\sigma_S(x)) 
    &=  \int_{(0,\alpha]} \nu(r)  g_r(S)(\sigma_S(x)) ~dr ,\\
    &\stackrel{(a)}{=} \int_{(0,\alpha]} \nu(r)  g_r(S)(x) ~dr 
    = f_\nu(S)(x),
\end{aligned}
\end{equation*}
where equality $(a)$ uses that Axiom~\ref{axi:sym} holds for $g_r$, hence it also holds for $f_\nu.$
\smallbreak
On the other hand, Axioms~\ref{axi:clone_fair_uni}, \ref{axi:indiv_cont} and~\ref{axi:alpha_clone_locality} require a little more work; we hereafter focus on Axiom~\ref{axi:indiv_cont}.
%Let $C$ be a positive constant in $(0,\alpha/2)$, let $\delta>0$ be strictly smaller than $C$,
Let $k\in \mathbb{N}$ be a positive integer and let $\varepsilon>0$ be an arbitrary constant.
Let  $\mathcal{V}(\cdot)$ denote the cumulative distribution associated with the density $\nu(\cdot)$; by continuity of $\mathcal{V}$, there exists a positive $C \in (0,\alpha)$ such that $\mathcal{V}(C) = \varepsilon /2.$ 
Let $\delta$ verify $C>\delta>0$, and let $X,Y$ be two subsets of $\mathbb{R}^n$ of cardinality $\vert Y \vert = \vert X \vert = k$ such that $d_\Pi(X,Y) \leq \delta.$ Take a bijection $\pi \in  \Pi(X,Y)$, note that $d(y, \pi(y)) \leq \delta$  holds for all $y$ in $Y.$ 

For $x\in X$, the following then holds, i.e.,
\begin{equation*}
    \begin{aligned}
        &\big\vert f_\nu(X)(x) - f_\nu(Y)(\pi^{-1}(x)) \big\vert \\
        &\stackrel{(a)}{\leq} \int_{(0,\alpha]} \nu(r) \big\vert g_r(X)(x) - g_r(Y)(\pi^{-1}(x)) \big\vert dr,\\
        &\stackrel{(b)}{\leq} \int_0^C \nu(r)  dr + \int_{C}^{\alpha} 2  \vert X \vert  \nu(r) \frac{  \mu\big( \partial B_{r}(o) + \overline{B}_\delta(o)\big) }{V^n_r} dr.\\
        &\stackrel{(c)}{\leq} \mathcal{V}(C) + 4  k \delta \sup_{r\in [C,\alpha]} h(r,\delta) .
    \end{aligned}
\end{equation*}
Inequality $(a)$ uses the non-negativity of $\nu$ and the triangle inequality;
inequality $(b)$ holds by bounding the difference $\big\vert g_r(X)(x) - g_r(Y)(\sigma(x)) \big\vert $ by one in the first term, and using the arguments of Equation~\eqref{equ:bound_rep_diff_X&Y} for the second term. 
We finally define the function $h(r,\delta) := \mu\big( \partial B_{r}(o) + \overline{B}_\delta(o)\big) / (2 \delta V^n_r) = (V^n_{r+\delta} - V^n_{r-\delta} ) / (2 \delta V^n_r)$, and we establish inequality $(c)$ by bounding $h(r, \delta)$ by its supremum on $[C,\alpha]$, which is finite since the function $h(\cdot, \delta): r\in [C,\alpha] \mapsto h(r,\delta)$ is continuous on the closed interval $[C,\alpha].$

For a fixed $r\in [C,\alpha]$, we perform a Taylor-Expansion of $h(r,\delta)$ in $\delta = 0$ as follows, i.e.,
\begin{equation*}
    \begin{aligned}
        h(r,\delta) 
        &=  \frac{1}{2 \delta} \bigg( 2 n\frac{\delta}{r} + o \Big(\frac{\delta^2}{r^2}\Big)\bigg) 
        = \frac{n}{r} + o \Big(\frac{\delta}{r^2}\Big)
        = \frac{n}{r} + o \Big(\frac{\delta}{C^2}\Big),
    \end{aligned}
\end{equation*}
where the last equality uses that $r$ belongs to $[C,\alpha].$ This shows that $\lim_{\delta \to 0} h(r,\delta) = n/r$ uniformly in $r$.
Moreover,  we have the point-wise convergence $\lim_{r \to C} h(r,\delta)  = (V^n_{C+\delta} - V^n_{C-\delta} ) / (2 \delta V^n_C)$ for each $C/2>\delta >0.$
Moore-Osgood's Theorem then implies that the double limit exists and we have, i.e.,
\begin{equation*}
    \begin{aligned}
        \limsup_{\delta \to 0} \sup_{r\in [C,\alpha]} h(r,\delta) 
        =\sup_{r\in [C,\alpha]} \limsup_{\delta \to 0} h(r,\delta)
        = \sup_{r\in [C,\alpha]} \frac{n}{r} = \frac{n}{C},
    \end{aligned}
\end{equation*}

This implies that there exists $\delta_0(C) >0$ small enough and independent of $X$ such that $\sup_{r\in [C,\alpha]} h(r,\delta) \leq \frac{2n}{C}$ holds for all $C>\delta_0(C)\geq\delta > 0.$ For such a $\delta$, we finally get, i.e.,
\begin{equation*}
    \begin{aligned}
        \big\vert f_\nu(X)(x) - f_\nu(Y)(\pi^{-1}(x)) \big\vert 
        \leq \frac{\varepsilon}{2} + \frac{8  k \delta}{C} \leq \varepsilon,
    \end{aligned}
\end{equation*}
where the last inequality holds for $\delta \leq \min \{ \delta_0(C), \varepsilon C / (16 k n) \}$ small enough but independent of $X$, hence $f_\nu$ verifies Axiom~\ref{axi:indiv_cont}.

\medskip
Similar arguments combined with Equations~\eqref{eq:bound_rep_diff_local_simple} and~\eqref{equ:bound_clone_fair} respectively show that  $f_\nu$ verifies Axioms~\ref{axi:alpha_clone_locality} and~\ref{axi:clone_fair_uni}.
We then conclude that $f_\nu$ belongs in $\mathcal{R}_{2\alpha}(\mathbb{R}^n,d_2).$

\end{proof}

\section{Computational Proofs and Supplements}\label{sec:add_comput_res}

This section presents the proofs of Theorems~\ref{thm:gr_estimate} and~\ref{thm:fnu_estimate}, the high-probability accuracy guarantee for Algorithm~\ref{alg:gr_approxunion} (c.f. Lemma~\ref{lem:gr_approxunion}), a refined sample-complexity analysis of Algorithm~\ref{alg:fnu_estimate} (c.f. Lemma~\ref{lem:fnu_bias_bound}), and introduces Algorithm~\ref{alg:fnu_reuse}, discussed in Section~\ref{sec:computational_cons}, which estimates $f_\nu$ by reusing samples across radii.

We start with recalling  Theorem~\ref{thm:gr_estimate} before attacking its proof.

\begin{reptheorem}{thm:gr_estimate}
    Algorithm~\ref{alg:gr_estimate} yields a consistent and asymptotically unbiased estimate of $g_r(S).$ 
    \noindent Moreover, for any  target accuracy $\varepsilon >0$ and confidence  level $\delta \in (0,1)$, setting the number of samples to satisfy $$k\geq \frac{2(\vert S\vert-1)^2}{\varepsilon^2} \log\frac{2 \vert S \vert}{\delta}$$
    guarantees  with probability at least $1 -\delta$ that
    $$ \big\vert \;\widehat{g}_r(S)(x)  - g_r(S)(x) \;\big\vert \leq \varepsilon.$$
\end{reptheorem}

\begin{proof}
First, the precomputation step is valid. If $z_i\in B_r(x)$ and some $y\in S$ contributes to the local grading $g_{r,S,x}(z_i)$, then
$d_2(y,x)\leq d_2(y,z_i)+d_2(z_i,x)\leq 2r$, so $y\in B_{2r}(x)\cap S$ and
indeed every contributing $y$ is examined in count $c_i$; thus $g_{r,S,x}(z_i) = \frac{1}{c_i}.$

\smallbreak
\noindent\textbf{Consistency and Asymptotic Unbiasedness of the  estimator.}
For a fixed $x \in S$, let $Z_1,\dots Z_k$ be $k$ i.i.d. random variables sampled from $B_r(x)$ , $C_i = \big\vert \{ y \in B_{2r}(x)\cap S \mid d_2(Z_i,y) \leq r \}\big\vert$ be the count associated with $Z_i$ for $i\in[k]$, and let $\widehat N_x^k = \frac{1}{k}\sum_{i=1}^k \frac{1}{C_i}$ be the  empirical average of the reciprocals $1/ C_i.$
Note that $\widehat N_x$ is an unbiased estimator of the unnormalized $g_r(S)(x)$, i.e.,
\begin{equation*}
    \begin{aligned}
    \mathbb{E}[\widehat N_x^k] &= \mathbb{E}[\tfrac{1}{ C_i}] 
    = \frac{1}{\mu(B_r(x))}\int_{B_r(x)} g_{r,S,x}(z) \, d\mu(z),\\
    &= \frac{\mu(B_r(S))}{\mu(B_r(x))} g_r(S)(x).
    \end{aligned}
\end{equation*}
Now consider the estimator $\widehat D^k = \sum_{x\in S} \widehat N_x^k$. By linearity of expectation, we have, i.e.,
\begin{equation*}
    \begin{aligned}
    \mathbb{E}[\widehat D^k] &= \sum_{x\in S} \tfrac{\mu(B_r(S))}{\mu(B_r(x))} g_r(S)(x)= \tfrac{\mu(B_r(S))}{\mu(B_r(x))}.
    \end{aligned}
\end{equation*}

Thus both $\widehat N_x^k$ and $\widehat D^k$ are unbiased estimators of their respective unnormalized quantities. Moreover, since $\widehat N_x$ and $\widehat D$ are averages of bounded i.i.d. random variables, the strong law of large numbers implies that
\begin{equation*}
    \widehat N_x^k \xrightarrow[k\to \infty]{\;\; a.s.\;\;} \tfrac{\mu(B_r(S))}{\mu(B_r(x))}\,g_r(S)(x),
\qquad 
\widehat D^k \xrightarrow[k\to \infty]{\;\;a.s.\;\;} \tfrac{\mu(B_r(S))}{\mu(B_r(x))} >0.
\end{equation*}
Slutsky’s theorem then ensures that $\widehat N_x^k/\widehat D^k$ is a \emph{consistent} estimator of $g_r(S)(x)$, i.e.,
\begin{equation*}
    \frac{\widehat N_x^k}{\widehat D^k} \xrightarrow[k\to \infty]{\;\;p\;\;}
\frac{ \tfrac{\mu(B_r(S))}{\mu(B_r(x))}\, g_r(S)(x) }
     { \tfrac{\mu(B_r(S))}{\mu(B_r(x))} }
= g_r(S)(x).
\end{equation*}
Moreover, the estimator $\widehat N_x^k / \widehat D^k$ is uniformly bounded, i.e., it holds for all $k \in \mathbb{N}$ that $0 \leq \widehat N_x^k / \widehat D^k \leq 1.$
Hence, the bounded convergence theorem ensures that the expectation commutes with the limit, and we have
\begin{equation*}
    \lim_{k\to\infty}\, \mathbb{E}\!\left[\tfrac{\widehat N_x^k}{\widehat D^k}\right]
= g_r(S)(x),
\end{equation*}
i.e., the estimator is also \emph{asymptotically unbiased}.

\smallbreak
\noindent\textbf{Concentration of $\widehat N_x^k$ and $\widehat D^k$.}
Note that $C_1, \dots, C_k$ are independent random variables which verify $\frac{1}{ \vert S\vert} \leq \frac{1}{ C_i} \leq 1$ almost surely. Hoeffding's inequality then gives the following concentration bound for $\widehat N_x^k$ and any $t>0$, i.e.,
\begin{equation}\label{equ:N_x_hoeffding_bound}
    \Pr\Big[ \big\vert \widehat N_x^k - \mathbb{E}[\widehat N_x^k] \big\vert \geq t\Big] 
    \leq 2 \exp\!\bigg(-\frac{2k t^2 |S|^2}{(|S|-1)^2}\bigg)
\end{equation}

Moreover, note that the event $\big\vert \widehat D^k - \mathbb{E}[\widehat D^k] \big\vert \geq t \vert S \vert$ requires that at least one of the terms $\big\vert \widehat N_y^k - \mathbb{E}[\widehat N_y^k] \big\vert $ for $y \in S$ exceeds $t$, hence Boole's inequality yields, i.e.,
\begin{equation}
\label{equ:hoeff_D^k}
\begin{aligned}
     \Pr\Big[ \big\vert \widehat D^k - \mathbb{E}[\widehat D^k] \big\vert \geq t \cdot\vert S \vert \Big] 
     &\leq \Pr\bigg[ \bigcup_{x\in S} \big\{\big\vert \widehat N_x^k - \mathbb{E}[\widehat N_x^k] \big\vert \geq t \big\}\bigg] ,\\
    &\leq 2 \vert S \vert \exp\!\bigg(-\frac{2k t^2 \vert S\vert^2}{(\vert S\vert-1)^2}\bigg).
\end{aligned}
\end{equation}

\smallbreak
\noindent\textbf{Propagating errors to the quotient $\widehat N_x^k / \widehat D^k$.}
% Assuming that all events $\big\vert \widehat N_y^k - \mathbb{E}[\widehat N_y^k] \big\vert \leq t $ hold simultaneously for $y \in S$
Assuming that the deviations 
$\big \vert\widehat N_y^k - \mathbb{E}[\widehat N_y^k]\big \vert \leq t$ hold simultaneously for all $y\in S$, we relate the ratio $\widehat N_x^k / \widehat D^k$ with the corresponding ratio of expectations and bound their difference.

\begin{equation}
\label{equ:final_gr_estimation}
    \begin{aligned}
        \bigg \vert \frac{\widehat N^k_x}{\widehat D^k} - g_r(S)(x) \bigg\vert
        &= \bigg \vert \frac{\widehat N^k_x}{\widehat D^k} - \frac{ \mathbb{E} [\widehat N^k_x] }{ \mathbb{E}[\widehat D^k]}  \bigg\vert, \\
        &\stackrel{(a)}{\leq}  \frac{ \big \vert \widehat N^k_x - \mathbb{E} [\widehat N^k_x] \big \vert }{\widehat D^k }  +  \frac{ \big \vert
         \mathbb{E}[\widehat D^k] - \widehat D^k  \big\vert \cdot \mathbb{E} [\widehat N^k_x]}
        {\widehat D^k \cdot \mathbb{E}[\widehat D^k]} ,\\
        &\stackrel{(b)}{\leq} t (\vert S \vert +1) := \varepsilon.
    \end{aligned}
\end{equation}
Inequality $(a)$ follows from  adding and substracting the terms $ \mathbb{E} [\widehat N^k_x] / \mathbb{E} [\widehat D^k]$, and using the triangle inequality with the non-negativity of $\widehat N^k_x$, $ \mathbb{E} [\widehat N^k_x]$, $ \widehat D^k$ and $ \mathbb{E} [\widehat D^k]$.
Inequality $(b)$ holds by noting that the estimators $\widehat N_x^k $ (and their expectation) are confined to the interval $[1/ \vert S \vert, 1].$ Consequently, $\widehat D^k$ and its expectation lie in $[1, \vert S \vert]$, and the denominator in the previous expression is greater than one. Finally, it uses the assumption that the event $\big\vert \widehat N_y^k - \mathbb{E}[\widehat N_y^k] \big\vert \leq t $ holds simultaneously for all $y \in S$, which guarantees that $\big\vert \mathbb{E}[\widehat D^k] - \widehat D^k  \big\vert \leq t \vert S \vert.$

Substituting $t = \frac{\varepsilon}{  \vert S \vert +1}$ in Equation~\ref{equ:hoeff_D^k} and identifying the right term as $\delta >0$, we finally conclude that Equation~\ref{equ:final_gr_estimation} holds if 
$$ k\geq \frac{( \vert S \vert ^2 - 1)^2}{2\varepsilon^2 \vert S \vert ^2} \ln \!\frac{2 \vert S \vert}{\delta}.$$

\end{proof}

We now prove the high-probability guarantees for Algorithm~\ref{alg:gr_approxunion}.

\begin{lemma}\label{lem:gr_approxunion}
Let $S \subseteq \mathbb{R}^n$, $x\in S$, $r>0$, and $\varepsilon, \delta \in (0,1)$. 
Then, Algorithm~\ref{alg:gr_approxunion} outputs an estimate $\widehat g_r(S)(x)$ that satisfies with probability at least  $1-\delta$
$$\big\vert \widehat g_r(S)(x) - g_r(S)(x) \big\vert \leq \varepsilon ,$$
provided that $t$, the number of independent runs of $\textsc{ApproxUnion}$ used for median amplification, and $k$ the number of samples used to estimate $N_x^k$, satisfy
$$ t \ge 24 \ln\frac{2}{\delta} 
\quad \text{and} \quad 
k \ge \frac{8 (\vert S\vert -1)^2}{\varepsilon^2 \vert S\vert^2} \ln\!\frac{4}{\delta}.$$
 
\end{lemma}
\begin{proof}
By Theorem 2 in~\cite{bringmann_approximating_2010}, the algorithm
$\textsc{ApproxUnion}(\{B_r(x) : x \in S\}, \varepsilon/4)$
returns, with probability at least $3/4$, an estimate of $\mu(B_r(S))$ with multiplicative error at most $\varepsilon/4$.

\smallskip
\noindent \textbf{Confidence Amplification.} 
To boost the success probability, we run $\textsc{ApproxUnion}$ independently $t$ times. Let
$$ X_j = 
\begin{cases} 
1 & \text{if } \vert \widehat U_j - \mu(B_r(S)) \vert \le (\varepsilon/4) \mu(B_r(S)),\\
0 & \text{otherwise,}
\end{cases} \quad j = 1,\dots,t,
$$
and define $X = \sum_{j=1}^t X_j$. Let $\widehat U = \operatorname{median}(\widehat U_1,\dots,\widehat U_t)$.

Since the median fails only if $X \leq t/2$, we have
\begin{equation*}
\begin{aligned}
    &\Pr \big[ \vert \widehat U - \mu(B_r(S)) \vert > \varepsilon \mu(B_r(S)) \big]
    \leq \Pr \Big [ X \leq t/2 \Big],\\
    &= \Pr \Big [ X \leq \Big(1 - \frac{1}{3}\Big) \frac{3}{4}t \leq \Big(1 - \frac{1}{3}\Big) \mathbb{E}[X] \Big],\\
    &\stackrel{(a)}{\leq} \exp\Big(- \frac{(1/3)^2 (3t/4)}{2} \Big) = \exp(-t/24),
\end{aligned}
\end{equation*}
 where inequality $(a)$ follows from Chernoff's lower tail bound  with parameter $0< 1/3 <1$, as well as the inequality $\mathbb{E}[X] \geq 3 t /4.$
 
Thus, to ensure failure probability at most $\delta/2$, it suffices to choose
\begin{equation*}
    t \geq 24 \ln\frac{2}{\delta}.
\end{equation*}

\smallskip
\noindent \textbf{Propagation to the Quotient.} 
Equation~\ref{equ:N_x_hoeffding_bound} implies that, for
$$
k \geq \frac{8 ( \vert S \vert-1)^2}{\varepsilon^2 \vert S\vert ^2} \ln\frac{4}{\delta},
$$
the estimator $N^k_x$ lies within additive error $\varepsilon/4$ of its expectation with probability at least $1 - \delta/2$.

By a union bound, both the amplification step for $\widehat U$ and the approximation of $N^k_x$ succeed simultaneously with probability at least $1 - \delta$. Then, similarly to Equation~\ref{equ:final_gr_estimation}, we obtain
\begin{equation*}
\begin{aligned}
    &\bigg\vert \frac{\widehat N_x \, \mu(B_r(x))}{\widehat U} - g_r(S)(x) \bigg\vert 
    = \bigg\vert \frac{\widehat N_x \, \mu(B_r(x))}{\widehat U} - \frac{\mathbb{E}[\widehat N_x] \, \mu(B_r(x))}{\mu(B_r(S))} \bigg\vert ,\\
    &\leq \frac{\mu(B_r(x))}{\widehat U} \left( \big\vert \widehat N_x - \mathbb{E}[\widehat N_x] \big\vert + \frac{\big\vert \mu(B_r(S)) - \widehat U \big\vert}{\mu(B_r(S))} \mathbb{E}[\widehat N_x] \right), \\
    &\leq \frac{\mu(B_r(x))}{\mu(B_r(S))(1 - \varepsilon/4)} \left( \frac{\varepsilon}{4} + \frac{\varepsilon}{4} \mathbb{E}[\widehat N_x] \right) ,\\
    &\leq 2 \left( \frac{\varepsilon}{4} + \frac{\varepsilon}{4} \right) = \varepsilon.
\end{aligned}
\end{equation*}

Here, the first inequality uses the triangle inequality, the second uses that, with probability at least $1 - \delta$, both
\begin{equation}
    \big\vert \widehat N_x - \mathbb{E}[\widehat N_x] \big\vert \le \varepsilon/4 
    \quad \text{and} 
    \quad \frac{\big\vert \mu(B_r(S)) - \widehat U \big\vert}{\mu(B_r(S))} \le \varepsilon/4
\end{equation}
hold simultaneously. 
The third inequality finally holds for $\varepsilon < 2$ since then $1/ (1 - \varepsilon/4) \leq 2$, and it uses  $\mathbb{E} [\widehat N_x] \leq 1$, as well as $\mu(B_r(x)) \leq \mu(B_r(S)).$

\end{proof}

We next turn to the proof of Theorem~\ref{thm:fnu_estimate}.

\begin{reptheorem}{thm:fnu_estimate}
    For any target accuracy $\varepsilon >0$ and confidence  level $\delta \in(0,1)$, setting the  outer and inner sample sizes in Algorithm~\ref{alg:fnu_estimate} such that 
    $$ M \;\geq\; \frac{8}{\varepsilon^2}\ln\!\frac{4|S|}{\delta},
    \qquad
    k \;\geq\; \frac{8(|S|-1)^2}{\varepsilon^2}\ln\!\frac{4|S|M}{\delta}, $$
    ensures with probability at least $1-\delta$ the following bound for every $x\in S$, i.e.,
    $$\big \vert \widehat f_\nu(S)(x) - f_\nu(S)(x)\big\vert \le \varepsilon,$$ and the estimator is consistent.
    Moreover, if $k=k(M)=\Omega(\log M)$ (e.g.\ $k(M)$ chosen as above), then Algorithm~\ref{alg:fnu_estimate} also yields an asymptotically unbiased estimator, i.e.,
    $\lim_{M\to\infty}\mathbb E[\widehat f_\nu(S)(x)] = f_\nu(S)(x).$ 
\end{reptheorem}

\begin{proof}
For any fixed $x\in S$, we decompose the estimation error as follows, i.e.,
\begin{equation*}
    \begin{aligned}
         \big\vert  f_\nu(S)(x) - \widehat f_\nu(S)(x) \big \vert
         &= \bigg\vert E_{r\sim\nu}[g_r(S)(x)] - \frac{1}{M}\sum_{j=1}^M \widehat g_{r_j}(S)(x) \bigg \vert,\\
         &\leq \underbrace{ \bigg\vert \mathbb E_{r\sim\nu}[g_r(S)(x)] - \frac{1}{M}\sum_{j=1}^M g_{r_j}(S)(x) \bigg\vert}_{(I)},\\
         &\;+\; \underbrace{ \bigg\vert \frac{1}{M}\sum_{j=1}^M \big(\widehat g_{r_j}(S)(x) - g_{r_j}(S)(x)\big) \bigg\vert}_{(II)}.
    \end{aligned}
\end{equation*}

We first bound $(I)$, the outer Monte-Carlo term. Since $r_1,\dots, r_M$ are independent random variables, so are the $g_{r_1}(S)(x), \dots, g_{r_M}(S)(x).$ Since each $g_{r_j}(S)(x)$ belongs in $[0,1]$, Hoeffding's inequality gives for any \(\varepsilon_1>0\),
\begin{equation*}
    \Pr\bigg(\bigg \vert \frac{1}{M}\sum_{j=1}^M g_{r_j}(S)(x) - \mathbb E_{r\sim\nu}[g_r(S)(x)]\bigg\vert \geq \varepsilon_1\bigg) 
    \leq 2\exp(-2 M \varepsilon_1^2).
\end{equation*}
Applying a union bound over the $\vert S \vert$ choices of $x$ shows that, with $M\geq \frac{1}{2\varepsilon_1^2}\ln\!\frac{2 \vert S\vert}{\delta_1}$, 
the following inequality holds for all $x\in S$ simultaneously with probability at least $1-\delta_1$, i.e.,
\begin{equation*}
    \bigg\vert \mathbb E_{r\sim\nu}[g_r(S)(x)] - \frac{1}{M}\sum_{j=1}^M g_{r_j}(S)(x) \bigg\vert \leq \varepsilon_1.
\end{equation*}

We next turn to bounding $(II)$, the inner per-radius errors. We apply Theorem~\ref{thm:gr_estimate} with accuracy $\varepsilon_2>0$ and confidence $\delta_2/M \in (0,1)$ to each of the terms in the average  $(II).$
Then, with probability greater than $1 - M \delta_2 / M = 1 - \delta_2$, the average is itself smaller than $\varepsilon_2$, and it holds simultaneously for all $x\in S$ that
\begin{equation*}
    \bigg\vert \frac{1}{M}\sum_{j=1}^M \big(\widehat g_{r_j}(S)(x) - g_{r_j}(S)(x)\big) \bigg\vert \leq \varepsilon_2,
\end{equation*}
provided that $k\geq \frac{( \vert S \vert ^2 - 1)^2}{2\varepsilon^2 \vert S \vert ^2} \ln\!\frac{2 \vert S \vert M}{\delta_2}.$

We now combine the two failure probabilities $\delta_1,\delta_2$ by a union bound:
with probability at least $1-(\delta_1 + \delta_2)$ both bounds hold simultaneously,
so for every $x\in S$, we have
$$ \Big\vert  f_\nu(S)(x) - \widehat f_\nu(S) (x)\Big\vert \leq \varepsilon_1+\varepsilon_2.$$

Choosing the symmetric split $\varepsilon_1=\varepsilon_2=\varepsilon/2$ and
$\delta_1=\delta_2=\delta/2$ yields the displayed sufficient sample sizes:
$$ M \geq \frac{2}{\varepsilon^2}\ln\!\frac{4|S|}{\delta},
\qquad
k \geq \frac{2( \vert S \vert ^2 - 1)^2}{\varepsilon^2 \vert S \vert ^2} \ln\!\frac{4 \vert S \vert M}{\delta}, $$
and the uniform guarantee with probability at least $1-\delta.$

Finally, in the joint limit where $M \to \infty$ and $k(M)=\Omega(\log M)$, we directly get for any $\varepsilon>0$ that
\begin{equation*}
    \Pr\Big[\Big\vert  f_\nu(S)(x) - \widehat f_\nu(S) (x)\Big\vert > \varepsilon \Big] = O\Big(\frac{1}{M}\Big),
\end{equation*}
thus $\widehat f_\nu^{(M,k(M))}(S) \xrightarrow[M\to \infty]{\;p\;} f_\nu(S)$ and the estimator $f_\nu(S)$ is consistent.

Moreover, the estimators $\widehat f_\nu^{(M,k(M))}(S)$ are bounded in $[0,1]$ for all $M\in\mathbb{N}$, hence the expectation commutes with the limit by the bounded convergence theorem and we have
\begin{equation*}
    \lim_{M\to\infty}\, \mathbb{E}\!\left[\widehat f_\nu^{(M,k(M))(S)(x)}\right] = f_\nu(S)(x),
\end{equation*}
i.e., the estimator is \emph{asymptotically unbiased}.

\end{proof}

We refine the previous sample complexity bound for Algorithm~\ref{alg:fnu_estimate}, and show it only requires $k = O(\varepsilon^{-1})$ inner samples to achieve additive error $\varepsilon$ --an improvement over the $k \propto \varepsilon^{-2}$ dependence in Theorem~\ref{thm:fnu_estimate}.

\begin{lemma}\label{lem:fnu_bias_bound}
    There exists a function $C:\mathbb{N}\to\mathbb{R}_{>0}$  such that the following holds. For any accuracy $\varepsilon\in(0,1)$ and confidence $\delta\in(0,1)$, if the outer and inner sample sizes $M,k$ satisfy
    $$
    M \;\ge\; \frac{9}{2\varepsilon^2}\ln\!\frac{4|S|}{\delta},
    \qquad
    k \;\ge\; \frac{3\big(1 + \sqrt{|S|} + C(\vert S\vert)\big)}{4\varepsilon},
    $$
    then Algorithm~\ref{alg:fnu_estimate} produces estimates $\widehat f_\nu(S)(x)$ that satisfy simultaneously for every $x \in S$
    $$\big\vert \widehat f_\nu(S)(x) - f_\nu(S)(x)\big\vert \leq \varepsilon, $$ 
    with probability at least $1-\delta$,
\end{lemma}
\begin{proof}

For any fixed $x\in S$, we decompose the estimation error as follows, i.e.,
\bigskip
\begin{equation*}
    \begin{aligned}
         \big\vert  f_\nu(S)(x) - \widehat f_\nu(S)(x) \big \vert
         &= \bigg\vert E_{r\sim\nu}[g_r(S)(x)] - \frac{1}{M}\sum_{j=1}^M \widehat g_{r_j}(S)(x) \bigg \vert,\\
         &\leq \underbrace{ \bigg\vert \mathbb E_{r\sim\nu}[g_r(S)(x)] - \frac{1}{M}\sum_{j=1}^M g_{r_j}(S)(x) \bigg\vert}_{(I)},\\
         &+\; \underbrace{ \bigg\vert \frac{1}{M}\sum_{j=1}^M \big( g_{r_j}(S)(x) - \mathbb{E}[\widehat g_{r_j}(S)(x) ]\big) \bigg\vert}_{(II)},\\
         &+\; \underbrace{ \bigg\vert \frac{1}{M}\sum_{j=1}^M \big(\mathbb{E}[\widehat g_{r_j}(S)(x) ] - \widehat g_{r_j}(S)(x)\big) \bigg\vert}_{(III)}.
    \end{aligned}
\end{equation*}

\smallskip
\noindent\textbf{Bound on the outer Monte-Carlo term $(I)$.} 
Since $r_1,\dots, r_M$ are independent random variables, so are the $g_{r_1}(S)(x), \dots, g_{r_M}(S)(x).$ Since each $g_{r_j}(S)(x)$ belongs in $[0,1]$, Hoeffding's inequality gives for any $\varepsilon_1>0$,
\begin{equation*}
    \Pr\bigg(\bigg \vert \mathbb E_{r\sim\nu}[g_r(S)(x)] - \frac{1}{M}\sum_{j=1}^M g_{r_j}(S)(x) \bigg\vert \geq \varepsilon_1\bigg) 
    \leq 2\exp(-2 M \varepsilon_1^2).
\end{equation*}
Applying a union bound over the $\vert S \vert$ choices of $x$ shows that, with $M\geq \frac{1}{2\varepsilon_1^2}\ln\!\frac{2 \vert S\vert}{\delta_1}$, 
the following inequality holds for all $x\in S$ simultaneously with probability at least $1-\delta_1$, i.e.,
%it holds with probability at least $1-\delta_1$ that the following holds simultaneously for all $x\in S$, i.e.,
\begin{equation*}
    \bigg\vert \mathbb E_{r\sim\nu}[g_r(S)(x)] - \frac{1}{M}\sum_{j=1}^M g_{r_j}(S)(x) \bigg\vert \leq \varepsilon_1.
\end{equation*}

\smallskip
\noindent\textbf{Bound on the bias term $(II)$.}

We next construct a bound on the bias 
$$b_{j,x} :=   g_{r_j}(S)(x) - \mathbb{E}[\widehat g_{r_j}(S)(x)] := \mathbb{E}[\widehat N_{x,j}] / \mathbb{E}[\widehat D_j] - \mathbb{E}[\widehat N_{x,j} / \widehat D_j]$$
that is uniform across all radii $r_1,\dots, r_M$ and $x\in S.$ 

Consider the function $f(a,b)=a/b$ and perform a second-order Taylor expansion of $f$ around the point $(\mathbb{E}[\widehat N_{x,j}],\mathbb{E}[\widehat D_j])$, treating the random pair $(\widehat N_{x,j}, \widehat D_j)$ as a perturbation of this mean point.
Denoting partial derivatives by subscripts and writing $\Delta N = \widehat N_{x,j} - \mathbb{E}[\widehat N_{x,j}]$ as well as $\Delta D = \widehat D_j - \mathbb{E}[\widehat D_j]$, we get
\begin{equation*}
    \begin{aligned}
        b_{j,x} &= f_N\,\Delta N + f_D\,\Delta D,\\
                &+ \tfrac12\big( f_{NN}\,\Delta N^2 + 2 f_{ND}\,\Delta N\Delta D + f_{DD}\,\Delta D^2\big)
                + R_3,
    \end{aligned}
\end{equation*}
where the derivatives evaluated at $(\mathbb{E}[\widehat N_x], \mathbb{E}[\widehat D])$ are
$$f_N = \frac{1}{\mathbb{E}[\widehat D_j]},\;
f_D = -\frac{\mathbb{E}[\widehat N_{x,j}]}{\mathbb{E}[\widehat D_j]^2},\;
f_{NN}=0,\;
f_{ND}=-\frac{1}{\mathbb{E}[\widehat D_j]^2},$$
as well as $
f_{DD}=\frac{2\mathbb{E}[\widehat N_{x,j}]}{\mathbb{E}[\widehat D_j]^3}, $ and $R_{x,j}$ denotes the third-order remainder.

Taking expectations and using that $\mathbb{E}[\Delta N]=\mathbb{E}[\Delta D]=0,$ the first-order terms vanish and we obtain 
\begin{equation}\label{equ:bias_second_order}
    b_{j,x} = -\frac{\operatorname{Cov}(\widehat N_{x,j},\widehat D_j)}{\mathbb{E}[\widehat D_j]^2}
    + \frac{\mathbb{E}[\widehat N_{x,j}]}{ \mathbb{E}[\widehat D_j]^3}\,\operatorname{Var}(\widehat D_j) 
    + \mathbb{E}[R_{x,j}].
\end{equation}

We now bound each term on the right-hand side. Since each  $\widehat N_{x,j}$ is an average of $k$ i.i.d. entries, each bounded in $(0,1)$, Popovicu's inequality directly gives
$$ \operatorname{Var}(\widehat N_{x,j}) \leq \frac{1}{4k}.$$

Since the $\widehat N_{x,j}$ are independent across $x\in S$, we similarly get for the variance of $\widehat D_j$
$$ \operatorname{Var}(\widehat D_j) = \sum_{x\in S} \operatorname{Var}(\widehat N_{x,j}) \leq \frac{ \vert S \vert}{4k} $$
We next bound the covariance with Cauchy-Schwartz, i.e.,
$$ \vert \operatorname{Cov} ( \widehat N_{x,j}, \widehat D_j ) \vert 
\leq \sqrt{ \operatorname{Var}(\widehat N_{x,j}) \operatorname{Var}(\widehat D_j) } 
\leq \frac{\sqrt{\vert S\vert}}{4k}$$

We finally control the remainder $\mathbb{E}[R_{x,j}].$ Since all $\widehat N_{x,j}$ and $\widehat D_j$ are deterministically bounded, all partial derivatives are also uniformly bounded on the neighborhood of $(\mathbb{E}[\widehat N_{x,j}],\mathbb{E}[\widehat D_j]).$
By Taylor's theorem, the third-order remainder can then be bounded via a finite linear combination of expectations of cubic monomials in $\Delta N$ and $\Delta D$ with coefficients depending only on $\mathbb{E}[\widehat N_{x,j}]$ and $\mathbb{E}[\widehat D_j]$, hence that can be bounded in terms of $\vert S\vert$ uniformly across all $j\in[M].$
Moreover, these moments can be uniformly bounded across radii via the standard Marcinkiewicz–Zygmund moment inequality, and there exists a constant $C(\vert S\vert) \in \mathbb{R}$ such that 
$$\vert \mathbb{E}[R_{x,j}]\vert \leq  C(\vert S\vert) k^{-3/2}.$$

Combining these bounds with Equation~\eqref{equ:bias_second_order}, using that $\mathbb{E}[\widehat D_j] \geq 1 \geq \mathbb{E}[\widehat N_{x,j}]$, we finally obtain
\begin{equation*}
    \vert b_{j,x} \vert \leq \frac{1 + \sqrt{\vert S\vert}}{4k} + C(\vert S\vert) k^{-3/2}
\end{equation*}

Since this bound is uniform across all radii $r_j$, we can finally bound the term $(II)$, i.e.,
\begin{equation*}
\begin{aligned}
    \bigg\vert \frac{1}{M}\sum_{j=1}^M \big( g_{r_j}(S)(x) - \mathbb{E}_{r\sim\nu}[\widehat g_{r_j}(S)(x) ]\big) \bigg\vert
    &\leq \frac{1}{M}\sum_{j=1}^M \vert b_{j,x} \vert,\\
    &\leq \frac{1 + \sqrt{\vert S\vert}}{4k} + \frac{C(\vert S\vert)}{ k^{3/2}}.
\end{aligned}
\end{equation*}

Hence setting $k\geq \frac{1 + \vert S \vert^{1/2} + C(\vert S\vert)}{4\varepsilon_2}$
guarantees an error smaller than $\varepsilon_2 >0.$

\smallskip
\noindent\textbf{Bound on the deviation term $(III)$.} 
For all $j \in[M]$, define the random variable $Y_{j,x} = \widehat g_{r_j}(S)(x) - \mathbb E[\widehat g_{r_j}(S)(x)] .$ Note that each $Y_{j,x}$ is zero-mean and verify $\vert Y_{j,x}\vert \leq 1$ since each $\widehat g_{r_j}(S)(x)$ belongs in $(0,1).$ Moreover, the $Y_{1,x}, \dots, Y_{M,x}$ are independent conditioned on the radii $r_1,\dots,r_M$, hence Hoeffding's inequality gives
\begin{equation*}
\begin{aligned}
    \Pr\bigg[\bigg\vert \frac{1}{M}\sum_{j=1}^M Y_{j,x}  \bigg\vert  \geq \varepsilon_3 \bigg]
    \leq 2 \exp(-2 M \varepsilon_3^2)
\end{aligned}
\end{equation*}

By a union bound over $x\in S$, choosing 
$$M \geq \frac{2}{\varepsilon_3^2} \ln\frac{2 \vert S \vert}{\delta_3}$$
guarantees that $\big\vert \frac{1}{M}\sum_{j=1}^M Y_{j,x}  \big\vert  \leq \varepsilon_3$ holds simultaneously for all $x\in S$ with probability at least $1-\delta_3.$

\smallskip
\noindent\textbf{Conclusion.}
Choosing a symmetric split $\varepsilon_1=\varepsilon_2= \varepsilon_3 = \varepsilon/3$ and
$\delta_1=\delta_3=\delta/2$, we finally get sufficient sample sizes:
$$ M \geq \frac{9}{2\varepsilon^2}\ln\!\frac{4|S|}{\delta},
\qquad
k \geq \frac{3(1 + \vert S \vert^{1/2} + C(\vert S\vert))}{4\varepsilon} .$$

Then, with total failure probability smaller than $\delta_1 + \delta_2 = \delta$, the following holds simultaneously for all $x\in S$, i.e.,
$$ \Big\vert  f_\nu(S)(x) - \widehat f_\nu(S) (x)\Big\vert \leq \varepsilon.$$

\end{proof}

We finally introduce Algorithm~\ref{alg:fnu_reuse}, already discussed in Section~\ref{sec:computational_cons}.
Algorithm~\ref{alg:fnu_reuse} provides a Monte Carlo estimator of $f_\nu(S)$ that aims to reduce the total number of samples by reusing points across multiple radii.
The procedure begins by drawing $M$ i.i.d.\ radii from the distribution $\nu$ and sorting them, i.e., $r_1 \leq \dots \leq r_M$.
For each $x \in S$, the algorithm maintains a counter $\mathrm{count}_x[i]$ to ensure that every ball  $B_{r_i}(x)$ ultimately receives exactly $k$ independent samples.
Sampling starts from the largest radius $r_M$: each point drawn uniformly from $B{r_M}(x)$ is also valid for all smaller radii $r_i$ such that $z \in B_{r_i}(x)$, and is therefore reused to update the corresponding counts and local depth estimates.
This reuse proceeds iteratively from larger to smaller radii---never in the reverse direction---since points sampled from smaller balls are not uniformly distributed within larger ones.
To efficiently handle the range updates induced by such reuse, operations of the form $\mathrm{count}x[j:i] !+!= 1$ or $c[j:i] !+!= 1$ can be implemented in amortized $O(1)$ time using a difference-array representation, thereby avoiding explicit $O(i-j)$ loops.
Finally, the empirical averages $\widehat N_x[r_i]$ and their normalizations yield per-radius estimates $\widehat g{r_i}(S)(x)$, which are combined across radii to form $\widehat f_\nu(S)(x)$.
Because samples are reused across radii, the resulting estimators for different $r_i$ are correlated; nonetheless, this parallelization strategy may yield substantial gains in practice.

\begin{algorithm}[!tbh] 
\caption{Monte Carlo Estimation of $f_\nu(S)$ with Sample Reuse Across Radii}
\label{alg:fnu_reuse}
\begin{algorithmic}[1]
\Require Finite set $S \subset \mathbb{R}^n$, sampling distribution $\nu$ on $[0,\alpha]$, 
number of radii $M$, samples per radius $k$
\Ensure Estimate $\widehat f_\nu(S)(x)$ for all $x \in S$
\vspace{0.5ex}

\State Sample $M$ radii i.i.d. $r_i \sim \nu$  and sort them so that $r_1 \leq \cdots \leq r_M.$
\For{each $x \in S$}
    \State Initialize vector $\mathrm{count}_x[1:M] \gets 0$
    \State Initialize  vector $\widehat N_x[1:M] \gets 0$
\vspace{0.5ex}
    \For{$i = M, M-1, \dots, 1$}
        \For{$\ell = 1, 2, \dots, k - count_x[i]$}
            \State Sample $z$ uniformly from $B_{r_j}(x).$
            \State Find smallest $j\leq i$ such that $z\in B_{r_j}(x).$
            \State Update $\mathrm{count}_x[j:i] \gets \mathrm{count}_x[j;i] + 1$.
            \State Initialize vector $c[j:i] \gets 1$
            \For{ $y \in S\setminus\{x\}$}
                \State Find smallest $j \leq m \leq i $ such that $z\in B_{r_m}(y)$.
                \State Update $c[m:i] \gets c[m:i] + 1$.
            \EndFor
            \State Update $\widehat N_x[j:i] \gets N_x[j:i] + 1 / c[j:i].$
        \EndFor
    \EndFor
    \State Compute averages $\widehat N_x[1:M] \gets \widehat N_x[1:M] /k.$ 
\EndFor
\State Compute $\widehat G_x[1:M] \gets \widehat N_x[1:M] \;/  \sum_{x\in S} \widehat N_x[1:M]$ for all $x\in S.$
\State \textbf{Output:} 
% Estimates $\operatorname{average}( \widehat G_x[1:M])$  for all $x\in S.$
 Final estimates $\widehat f_\nu(S)(x) := \frac{1}{M} \sum_{i=1}^M \widehat G_x[i]$ for all $x \in S.$
\end{algorithmic}
\end{algorithm}

\section{Definition of Continuity in Terms of Metric and Discussion of the Axioms}\label{sec:metric_cont&axioms_disc}

In this section, we show that Axioms~\ref{axi:indiv_cont} and~\ref{axi:class_cont} are simply an instance of the definition of continuity between two metric spaces, and we provide a more thorough discussion of the relationship between the different Axioms.

First, recall that we equipped the domain $\mathcal{P}(E)$ of a weighting function $f$ with the \emph{transport distance} $d_\Pi$, defined for two finite subsets $X, Y$ in $\mathcal{P}(E)$ with cardinality $\vert Y \vert \geq \vert X\vert$ as
$$d_\Pi(X,Y) = \min_{\pi \in \mathrm{Surj} (Y, X)} \max_{y \in Y} d(y, \pi(y)) = d_\Pi(Y,X).$$
We now turn to the codomain of $f.$
For $S$ a finite subset of $E$, recall that we denote by $\Delta(S) := \big\{p_S : S \mapsto [0,1] \mid \sum_{x\in S} p_S(x) =1 \big\}$ the set of probability distributions over the elements of $S.$ We further define $\Delta_{\mathcal{P}}(E) := \bigcup_{S\in \mathcal{P}(E)} \Delta(S)$ to be the set of probability distributions over all finite subsets of $E.$
For two finite subsets $X,Y$ of $E$ with $\vert X \vert \leq \vert Y \vert$ and respective probability distributions $p_X$ in $\Delta(X)$ and $p_Y$ in $\Delta(Y)$, we then define the map $d_\Delta : \Delta_{\mathcal{P}}(E) \times \Delta_{\mathcal{P}}(E) \mapsto \mathbb{R}_{\geq 0}$ as follows:

\begin{equation*}
\begin{aligned}
    d_\Delta(p_X, p_Y) 
    &= \min_{\pi \in \mathrm{Surj} (Y, X)} \max \bigg\{ \max_{ y \in Y} d(y, \pi(y)),\\
    & \sum_{x \in X}  \Big \vert p_X(x) - \sum_{y \in \pi^{-1}(x)} p_Y(y) \Big| \bigg\} 
    = d_\Delta(p_Y, p_X).
\end{aligned}
\end{equation*}

\begin{lemma}
     The maps $d_\Pi$ and $d_\Delta$ constitute metrics on $\mathcal{P}(E)$ and $\Delta_{\mathcal{P}}(E)$ respectively.
\end{lemma}
\begin{proof}

Note first that the set $\mathrm{Surj} (Y, X)$ is non-empty since  $\vert X \vert \leq \vert Y \vert$, and both $d_\Pi$ and $d_\Delta$ are well defined.
Moreover, \emph{symmetry} and \emph{non-negativity} clearly hold by definition. We hence focus on \emph{separability} and \emph{triangle inequality.}

\begin{itemize}
    \item \emph{Separability.} Let $X,Y$ be two finite subsets of $E$ satisfying $\vert X \vert \leq \vert Y \vert$, and let $p_X, p_Y$ be probability distribution on the respective sets.
    
    \smallskip
    On one hand, we verify that $d_\Pi(X,X) = d_\Delta(p_X, p_X)=0$: indeed, the choice $\pi =\text{Id}$ is surjective and it renders $d_\Pi(X,X)$ as well as both  terms of $d_\Delta(p_X, p_X)$ null since $d$ and $\Vert\cdot \Vert_1$ both satisfy distinguishability.

    \smallskip
    On the other hand, suppose that $d_\Delta(p_X, p_Y)=0$, and let $\pi: Y \mapsto X$ be the surjective map achieving the minimum. Since the first term is null and $d$ verifies distinguishability, we directly get that $X = Y.$ Since the second term is moreover null, we moreover obtain $p_X =p_Y$, and $d_\Delta$ also verifies distinguishability. A similar argument shows that $d_\Pi(X,Y) =0 \implies X=Y.$

    \item \emph{Triangle inequality.}
Let $X,Y,Z$ be three sets in $\mathcal{P}(E)$ satisfying $\vert X \vert \leq \vert Y \vert \leq \vert Z \vert$, and $p_X, p_Y, p_Z$ be probability distributions on the respective sets. Let $\pi_Y : Z \mapsto Y$ and $\pi_X: Y \mapsto X$ be the two surjective maps that achieve the minimum in $d_\Delta(p_Z,p_Y)$ and in $d_\Delta(p_Y,p_X)$ respectively; we then denote by $\pi$ the surjective map $\pi_X \circ \pi_Y : Z \mapsto X.$

\begin{equation*}
    \begin{aligned}
        d_\Delta(p_X, p_Z)
        &\stackrel{(a)}{\leq} \max \bigg\{ \max_{ z \in Z} d(z, \pi(z)),
        \: \sum_{x \in X}  \Big \vert p_X(x) - \sum_{z \in \pi^{-1}(x)} p_Z(z) \Big\vert \bigg\} ,\\ 
        &\stackrel{(b)}{\leq} \max \bigg\{ \max_{ z \in Z} d(z, \pi_Y(z)) + d(\pi_Y(z),  \pi(z)), \\
        &\sum_{x \in X}  \Big \vert p_X(x) - \sum_{y \in \pi_X^{-1}(x)} p_Y(y) \Big\vert 
        + \sum_{y \in \pi_X^{-1}(x)}  \Big \vert  p_Y(y) - \sum_{z \in \pi_Y^{-1}(y)} p_Z(z) \Big\vert \bigg\} ,\\ 
        &\stackrel{(c)}{\leq} \max \bigg\{ \max_{ z \in Z} d(z, \pi_Y(z)) + \max_{ y \in Y}d(y,  \pi_X(y)), \\
        &\sum_{x \in X}  \Big \vert p_X(x) - \sum_{y \in \pi_X^{-1}(x)} p_Y(y) \Big\vert 
        +  \sum_{y \in Y} \Big \vert  p_Y(y) - \sum_{z \in \pi_Y^{-1}(y)} p_Z(z) \Big\vert \bigg\} ,\\ 
        &\stackrel{(d)}{\leq} d_\Delta(p_X, p_Y) + d_\Delta(p_Y, p_Z)  .
    \end{aligned}
\end{equation*}
Inequality $(a)$ follows from the minimum in the definition of $d_\Delta(p_X, p_Z) $ being smaller than with the particular choice $\pi = \pi_X \circ \pi_Y$; 
inequality $(b)$ use the triangular inequality, as well as the partition $\pi^{-1}(x) = \bigcup_{y \in \pi_X^{-1}(x)} \pi_Y^{-1}(y).$
Inequality $(c)$ holds by taking the maximum over $Y$ in the first term and summing over the whole set $Y$ instead of simply $\pi^{-1}(x)$ in the second term;
inequality $(d)$ finally uses the inequality $\max\{a+b, u+v\} \leq \max \{a,u\} + \max \{b,v\}$, as well as the definitions of $\pi_Y$ and $\pi_X.$
A similar argument also shows that $d_\Pi$ satisfies the triangle inequality.

\end{itemize}
    
\end{proof}

Combining these results, we finally characterize the weighting functions that satisfy Axiom~\ref{axi:class_cont}.

\begin{lemma}
    A weighting function $f$ 
    %on $(E,d)$ 
    satisfying Axiom~\ref{axi:class_cont} is precisely a continuous map from $(\mathcal{P}(E), d_\Pi)$ to $(\Delta_{\mathcal{P}}(E), d_\Delta).$
\end{lemma}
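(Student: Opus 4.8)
The plan is to unfold the definition of continuity of $f\colon(\mathcal{P}(E),d_H)\to(\Delta_{\mathcal{P}}(E),d_\Delta)$ at an arbitrary point $X\in\mathcal{P}(E)$ and match it, in $\epsilon$–$\delta$ terms, with the statement of Axiom~\ref{axi:class_cont} at $X$. Since both ``$f$ is continuous'' and Axiom~\ref{axi:class_cont} are universally quantified over finite $X$, this pointwise equivalence suffices. So I would fix $X$ with $|X|=n$ and $\epsilon>0$, and work only with radii $\delta<\underline{d}(X)/2$, so that Lemma~\ref{lem:conv_hausdorff} is in force: for $d_H(X,Y)\le\delta$ it gives $|X|\le|Y|$ and identifies $\pi:=\restr{\pi_X}{Y}$ as the unique surjective map $Y\to X$ with $\max_{y\in Y}d(y,\pi(y))\le\delta$. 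Two elementary remarks bridge the two formulations. First, $\max_x|a_x|\le\sum_x|a_x|\le n\max_x|a_x|$, which reconciles the $\ell^\infty$-type quantity in Axiom~\ref{axi:class_cont} with the $\ell^1$-type second coordinate of $d_\Delta$. Second, if $d_H(X,Y)\le\delta$ and $\pi^*\colon Y\to X$ is a surjection with $\max_y d(y,\pi^*(y))<\underline{d}(X)-\delta$, then $\pi^*=\restr{\pi_X}{Y}$: otherwise some $y$ has $\pi^*(y)\ne\restr{\pi_X}{Y}(y)$, two distinct points of $X$, and the triangle inequality gives $d(y,\pi^*(y))\ge\underline{d}(X)-d(y,\restr{\pi_X}{Y}(y))\ge\underline{d}(X)-\delta$, a contradiction.

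For the implication ``Axiom~\ref{axi:class_cont} $\Rightarrow$ $f$ continuous'', I would apply the axiom at $X$ with tolerance $\epsilon/n$ to get some $\delta_1>0$, and then take $\delta:=\min\{\delta_1,\underline{d}(X)/3,\epsilon\}$. If $d_H(X,Y)\le\delta$, then for $\pi:=\restr{\pi_X}{Y}$ we have $\max_y d(y,\pi(y))\le\delta\le\epsilon$ and, by the first remark, $\sum_x\big|f(X)(x)-\sum_{y\in\pi^{-1}(x)}f(Y)(y)\big|\le n\cdot(\epsilon/n)=\epsilon$; bounding the minimum defining $d_\Delta(f(X),f(Y))$ by its value at this particular $\pi$ yields $d_\Delta(f(X),f(Y))\le\epsilon$, i.e.\ continuity of $f$ at $X$.

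For the converse, I would set $\epsilon_0:=\min\{\epsilon,\underline{d}(X)/3\}$, use continuity at $X$ to get $\delta_1$ with $d_H(X,Y)\le\delta_1\Rightarrow d_\Delta(f(X),f(Y))\le\epsilon_0$, and take $\delta:=\min\{\delta_1,\underline{d}(X)/3\}$. Assume $d_H(X,Y)\le\delta$. Since $Y$ is finite, there are only finitely many surjections $Y\to X$, so the minimum defining $d_\Delta$ is attained, say at $\pi^*$; both coordinates of that minimum are $\le d_\Delta(f(X),f(Y))\le\epsilon_0<\underline{d}(X)-\delta$, so the second remark forces $\pi^*=\restr{\pi_X}{Y}=:\pi$. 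Then, using the first remark once more, $\max_x\big|f(X)(x)-\sum_{y\in\pi^{-1}(x)}f(Y)(y)\big|\le\sum_x\big|f(X)(x)-\sum_{y\in\pi^{-1}(x)}f(Y)(y)\big|\le d_\Delta(f(X),f(Y))\le\epsilon$, which is precisely the conclusion of Axiom~\ref{axi:class_cont} at $X$.

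The step I expect to be the main obstacle is the second remark together with the bookkeeping of the various $\delta$'s: $d_\Delta$ minimizes over \emph{all} surjections $Y\to X$, whereas Axiom~\ref{axi:class_cont} only ever refers to the canonical projection, so one must show that on a small enough Hausdorff neighbourhood of $X$ the minimizer is \emph{forced} to coincide with $\restr{\pi_X}{Y}$, and one must pick the thresholds (here everything capped below $\underline{d}(X)/3$) in an order that avoids circularity. Everything else reduces to the routine $\ell^1$/$\ell^\infty$ estimate of the first remark.
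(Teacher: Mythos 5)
Your proposal is correct and follows essentially the same route as the paper: the forward direction bounds $d_\Delta$ by evaluating it at the canonical projection with axiom tolerance $\epsilon/|X|$, and the converse identifies the minimizing surjection with $\restr{\pi_X}{Y}$ and uses the $\ell^\infty\le\ell^1$ bound. The only cosmetic difference is that your ``second remark'' re-derives by a triangle-inequality argument what the paper obtains directly from the uniqueness clause of Lemma~\ref{lem:conv_hausdorff}.
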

\begin{proof}
Note that a weighting function $f$ is indeed a map from $\mathcal{P}(E)$ to $\Delta_{\mathcal{P}}(E)$, we hereafter focus on the relationship between Axiom~\ref{axi:class_cont} and continuity between metric spaces.

\smallbreak
We first prove the direction $\implies.$ Consider an arbitrary small $\varepsilon>0$, and let $X$ be a finite subset of $E.$ By Axiom~\ref{axi:indiv_cont}, there exists $\min\{\underline{d}(X)/2, \varepsilon\} > \delta>0$ such that, for all subset $Y$ satisfying $d_\Pi(X,Y) \leq \delta$, it holds that
$\max_{x\in X} \big\vert f(X)(x) -  \sum_{y \in \pi^{-1}(x)} f(Y)(y) \big\vert \leq \varepsilon / \vert X \vert$, where $\pi:Y\mapsto X \in \Pi(X,Y)$ is a surjection (as illustrated in Figure~\ref{fig:d_Pi}, note that $\delta < \underline{d}(X)/2$ implies $\vert Y \vert \geq \vert X\vert$).
For all such subset $Y$, we then get
\begin{equation*}
\begin{aligned}
    d_\Delta(f(X), f(Y)) 
        &\leq \max \bigg\{ \max_{ y \in Y} d(y, \pi(y)),\\
        & \sum_{x \in X}  \Big \vert f(X)(x) -  f(Y)(\pi^{-1}(x)) \Big\vert \bigg\}.
\end{aligned}
\end{equation*}
Note that the first term is bounded by $\delta \leq \varepsilon$ by definition of $\pi \in \Pi(X,Y)$ and $d_\Pi.$
Moreover, as shown above, each of the $\vert X \vert$ terms of the sum is bounded by $\varepsilon / \vert X \vert.$ We hence conclude that $d_\Delta(f(X), f(Y)) \leq \varepsilon$, and $f$ is indeed a continuous map between the two metric spaces $(\mathcal{P}(E), d_\Pi)$ and $(\Delta_{\mathcal{P}}(E), d_\Delta).$

\smallbreak
We next turn to the direction $\impliedby.$ 
Consider an arbitrary small $\varepsilon>0$, and let $X$ be a finite subset of $E.$ By definition of continuity for $\underline{d}(X)/2 > \varepsilon'>0$, there exists $\varepsilon' > \delta>0$ such that, for all subsets $Y$ in $\mathcal{P}(E)$ satisfying $d_\Pi(X,Y) \leq \delta$, it holds that $d_\Delta(f(X), f(Y)) \leq \varepsilon'.$
Let $\pi$ be the minimizer in $d_\Delta(f(X), f(Y))$; and note that $\pi:Y \mapsto X$ is  a surjective map satisfying $d(y, \pi(y)) \leq \varepsilon'$ for all $y \in Y$, and it is the only one since $\varepsilon'< \underline{d}(X)/2 .$  Hence $\pi$ also belongs to $\Pi(X,Y)$, and we have 
$$\max_{x \in X} \Big\vert f(X)(x) - \sum_{y \in \pi^{-1}(x)} f(Y)(y) \Big\vert \leq d_\Delta(f(X), f(Y)) \leq \varepsilon.$$
    
\end{proof}

We next show that Axiom~\ref{axi:indiv_cont} implies continuity in terms of the \emph{Wasserstein metric}, defined for two probability measures $\lambda $ and $\xi$ on $E$ by
$$ W_1(\lambda, \xi) = \inf_{\gamma \in \Gamma(\lambda, \xi)} \mathbb{E}_{(x,y) \sim \gamma} [ d(x,y)] ,$$
where $\Gamma(\lambda, \xi)$ is the set of transport plans between $\lambda$ and $\xi$. In our setting where $X$ and $Y$ are finite sets, this means that a $\gamma \in \Gamma(f(X), f(Y))$ is a function that associates with $x\in X$ and $y\in Y$ the mass $\gamma(x,y)$ to be moved from $x$ to $y$, under the constraints that $\sum_{x\in X} \gamma(x,y) = f(Y)(y)$ holds for all $y\in Y$, and similarly that $\sum_{y\in Y} \gamma(x,y) = f(X)(x)$ holds for all $x\in X.$

\begin{lemma}
    A weigthing function $f$ satisfying Axiom~\ref{axi:indiv_cont} is a continuous function from each $(\mathcal{P}_k(E), d_\Pi)$, with $k\in \mathbb{N}$, to $(\Delta_{\mathcal{P}}(E), W_1).$
\end{lemma}
\begin{proof}
    Let $\varepsilon >0$ be an arbitrary constant, let $k\in \mathbb{N}$ and let $X$ be a subset of $E$ of cardinality $\vert X \vert = k.$ 
    By Axiom~\ref{axi:indiv_cont}, there exists $\delta >0$ such that, for each subset $Y$ of cardinality $k$ satisfying $d_\Pi (X,Y) \leq \delta$, it holds that $ \max_{x\in X} \vert f(X)(x) - f(Y)(\pi^{-1}(x)) \vert \leq \varepsilon$, where $\pi \in \Pi(X,Y).$

    We then construct a particular transport plan $\gamma \in \Gamma(f(X), f(Y))$ as follows. First, we define $\gamma\big(x,\pi^{-1}(x)\big) = \min \big\{f(X)(x), f(Y)(\pi^{-1}(x) \big\}$ for all $x \in X$, and then we complete all other entries arbitrarily so as to verify the constraints $\sum_{x\in X} \gamma(x,y) = f(Y)(y)$ for all $y\in Y$ and $\sum_{y\in Y} \gamma(x,y) = f(X)(x)$ for all $x\in X.$ 

    Clearly, we have the following, i.e.,
    \begin{equation*}
        \begin{aligned}
            &W_1(f(X), f(Y)) 
            \leq \mathbb{E}_{(x,y)\sim \gamma}[d(x,y)],\\
            &= \sum_{x \in X} d\big(x, \pi^{-1}(x)\big) \gamma\big(x, \pi^{-1}(x)\big) 
            + \sum_{x\in X, y\neq \pi^{-1}(x)} d(x, y) \gamma(x, y), \\
            &\stackrel{(a)}{\leq} \delta   
            + \Big(\max_{z,z'\in X} d(z,z') + \delta \Big) \sum_{x\in X} \Big(\sum_{y \in Y} \gamma(x,y) \Big) - \gamma\big(x, \pi^{-1}(x)\big)\\\
            &\stackrel{(b)}{\leq} \delta   + \big(\overline{d}(X) + \delta \big) k \varepsilon .
        \end{aligned}
    \end{equation*}
    Inequality $a$ uses, for the first term, that $d(x, \pi^{-1}(x))\leq \delta$ by definition of $\pi$, and in the second term, that $d(x,y) \leq \max_{z,z'\in X} d(z,z') + \delta$ for all $y\neq \pi^{-1}(x)$ by triangle inequality. It moreover uses the fact that $\gamma$ is a joint probability distribution and sums up to one.
    Inequality $(b)$ finally uses the fact that, for each $x\in X$, we have $\sum_{y\in Y} \gamma(x,y) = f(X)(x)$ and that $ f(X)(x) - \gamma\big(x, \pi^{-1}(x)\big)  $ is smaller than $\varepsilon.$ We also defined $\overline{d}(X) := \max_{z,z'\in X} d(z,z'). $
    
    For $\varepsilon'>0$, we may then choose $\varepsilon >0$ such that $ \varepsilon' / (4 k \overline{d}(X))\geq \varepsilon $, as well as a corresponding $\min\{ \varepsilon' / 2, \overline{d}(X) \} > \delta> 0$, and we finally get $W_1(f(X), f(Y))  \leq \varepsilon'.$
\end{proof}

We conclude this section with a discussion of Axioms~\ref{axi:clone_fair_uni}, \ref{axi:indiv_cont} and~\ref{axi:alpha_clone_locality}.
First, note that all the axioms are in the spirit of uniform continuity, in the sense that for a desired $\varepsilon$, the choice of $\delta$-neighborhood where the property is satisfied is independent of subset $X.$
Note that this detail is of great importance as the non-uniform equivalent of Axiom~\ref{axi:clone_fair_uni} would always be trivially satisfied for a given finite subset $X$ by choosing $\delta$ strictly smaller than $\underline{d}(X).$
Even in the case of perfect clones in a pseudo-metric space (c.f. Section~\ref{sec:discu} and Appendix~\ref{sec:metric_id}), this weaker form of Axiom~\ref{axi:clone_fair_uni} would not be interesting since perfect clones, being in the same isomorphism class, would already obtain the exact same weighting under Axiom~\ref{axi:sym}.

Note also that Axiom~\ref{axi:indiv_cont} is close to imply Axiom~\ref{axi:clone_fair_uni}: it would be enough to modify Axiom~\ref{axi:indiv_cont} and ask that  $\max_{x\in X} \vert f(X)(x) - f(Y)(\pi^{-1}(x)) \vert \leq \varepsilon$ holds for all bijections $\pi:Y \mapsto X$ such that $\max_{y \in Y} d(y,pi(y))\leq \delta.$

One could also wonder whether stronger versions of our axioms could be considered, e.g., Lipschitz-continuous variants. Axiom~\ref{axi:alpha_clone_locality} could for example be strengthened as follows.

\begin{axiom}[Lipschitz Clone Fairness]\label{axi:clone_fair_lip}
Weighting is fair among $\delta$-clones, i.e.,
there exists $L >0$ such that, for all finite subset $S\in \mathcal{P}(E)$ and $x,y$ in $S$, it holds that $\vert f(S)(x) - f(S)(y) \vert \leq L \cdot d(x,y).$
\end{axiom}

Another possible direction would be to strengthen Axiom~\ref{axi:alpha_clone_locality} by requiring the same to hold also for $\varepsilon = 0.$

\begin{axiom}[Strict $\alpha$-Locality under Addition of Clones]\label{axi:strict_alpha_clone_locality}
The addition of a clone only changes the weighting locally, i.e.,
for a finite subset $S \in \mathcal{P}(E)$, there exists $\delta>0$ such that, for each element $x \in S$ and $\delta$-clone $x'$ satisfying $d(x,x') \leq \delta$, we have for all $z\in S$ such that $d(x,z)\geq \alpha$ that $f(S)(z) = f(S\cup\{x'\})(z).$
\end{axiom}

Note that the family of weighting function $f_\nu$ we introduced in Section~\ref{sec:local_voting} verifies neither of these strengthenings.

\section{Hausdorff Distance and Extension to Perfect Clones}\label{sec:metric_id}

In this section, we expand on the discussion in Section~\ref{sec:discu} regarding the extension of our framework to perfect clones.

\paragraph{Pseudo-Metric and Perfect Clones}

Let $(E,d)$ be a pseudo-metric space, that is an ordered pair where $E$ is a set and $d: E\times E \mapsto \mathbb{R}_{\geq0}$ is a pseudo-metric on $E$ satisfying, for all $x,y,z \in E$, i.e.,
\begin{enumerate}
    \item (\emph{Non-negativity}) $d(x,y) \geq 0$ ,
    \item (\emph{Symmetry}) $d(x,y) = d(y,x)$,
    \item (\emph{Triangle inequality}) $d(x,z) \leq d(x,y) + d(y,z)$ ;
    \item (\emph{Identity}) $d(x,x) =0 .$ 
\end{enumerate}

We next show that the pseudo-metric $d$ implicitly defines an equivalence relation $\sim$ on $E$, which we refer to as the \emph{metric identification.} We denote by $[x] = \{ y \in E \mid x\sim y\}$ the equivalence class of $x$ in $E.$

\begin{lemma}[Metric Identification in $(E,d)$]\label{lem:equiv_rel_E}
The binary relation defined for all $x,y \in E$ by $x\sim y$ if and only if $d(x,y) =0$ is an equivalence relation.
Moreover, for all $x\sim y$ and $z$ in $E$, we have $d(x,z) = d(y,z).$
\end{lemma}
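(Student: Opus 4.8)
The plan is to verify the three defining properties of an equivalence relation directly from the four pseudo-metric axioms, and then obtain the ``moreover'' statement from two applications of the triangle inequality together with the observation that a nonnegative quantity bounded above by $0$ must vanish.

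First I would check reflexivity, symmetry, and transitivity of $\sim$. Reflexivity is immediate from the \emph{identity} axiom, which gives $d(x,x)=0$, hence $x\sim x$. Symmetry follows from the \emph{symmetry} of $d$: if $d(x,y)=0$ then $d(y,x)=d(x,y)=0$, so $y\sim x$. For transitivity, suppose $x\sim y$ and $y\sim z$, i.e.\ $d(x,y)=0$ and $d(y,z)=0$; the \emph{triangle inequality} then yields $d(x,z)\leq d(x,y)+d(y,z)=0$, and combined with \emph{non-negativity} this forces $d(x,z)=0$, so $x\sim z$. Hence $\sim$ is an equivalence relation.

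For the final claim, assume $x\sim y$, that is $d(x,y)=0$, and fix an arbitrary $z\in E$. Applying the triangle inequality once as $d(x,z)\leq d(x,y)+d(y,z)=d(y,z)$ and once as $d(y,z)\leq d(y,x)+d(x,z)=d(x,z)$ (using symmetry to rewrite $d(y,x)=d(x,y)=0$) gives the two inequalities $d(x,z)\leq d(y,z)$ and $d(y,z)\leq d(x,z)$, whence $d(x,z)=d(y,z)$.

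I do not expect a genuine obstacle here: every step is an immediate consequence of one of the four pseudo-metric axioms, and the only point requiring a moment's thought is that the triangle inequality plus non-negativity pin a nonnegative quantity bounded above by $0$ to be exactly $0$. This is precisely the property that makes the induced map $d^{*}([x],[y]):=d(x,y)$ well-defined on the quotient $E^{*}=E/{\sim}$, which is why the lemma is stated separately.
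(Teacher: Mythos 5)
Your proof is correct and uses essentially the same ingredients as the paper's: reflexivity and symmetry from the identity and symmetry axioms, and the triangle inequality (with non-negativity) for transitivity and the distance-equality claim. The only cosmetic difference is ordering — the paper establishes $d(x,z)=d(y,z)$ first and deduces transitivity from it, whereas you prove transitivity directly and the ``moreover'' statement separately — which changes nothing of substance.
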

\begin{proof}
The symmetry and reflexivity of $\sim$ are directly implied by the symmetry and the identity of the pseudo-metric $d$; there only remains to verify that $\sim$ is transitive.

\smallbreak
Let $x,y$ be elements of $E$ such that $x \sim y$, i.e., $d(x,y)= d(y,x) =0$, and let $z$ be in $E.$
By triangle inequality, we have on one hand $d(x,z) \leq d(x,y) + d(y,z) = d(y,z).$ 
On the other hand, we also have $d(y,z) \leq d(y,x) + d(x,z) = d(x,z)$, hence we indeed get $d(x,z) = d(y,z).$ 

\smallbreak
Applying this to $z \in E$ such that $y\sim z$ finally implies $d(x,z) =0$, i.e., $x\sim z$, and we verify that $\sim$ is transitive.
\end{proof}

We finally consider the \emph{quotient space} of $E$ by the equivalence relation $\sim$, that is the set $E^* = E / {\mathrel{\sim}}$ of all equivalence classes induced by $\sim$ on $E.$ We may now define the metric $d^* : ([x], [y]) \in E^*\times E^* \mapsto d(x,y) $, and refer to the metric space $(E^*,d^*)$ as the metric space induced by the vanishing  of the pseudo-metric space $(E,d).$

Most of the Axioms in Section~\ref{sec:axioms} directly extend to a pseudo-metric space, with the intuition that perfect clones are a particular example of approximate clones.
Two subtleties are however worth being mentioned. 
First, note that introducing perfect clones may break some of the symmetry classes described in Axiom~\ref{axi:sym} --akin to the effect of introducing approximate clones though. Specifically, $k\in \mathbb{N}$ perfect clones in a set $S$ must remain at zero distance from each other under any self-isometry $\sigma_S$, and can only be symmetric with $k$ other perfect clones.
Second, the set of minimum transport maps $\Pi(Y,X)$ between two subsets $Y$ and $X$ at distance $d_\Pi(X,Y) \leq \delta$ won't be a singleton in the presence of perfect clones. 
However, this can also happen without perfect clones, and Axiom~\ref{axi:indiv_cont} only requires the existence of a
$\pi$ in $\Pi(Y,X)$ such that $\max_{x\in X} \vert f(X)(x) - f(Y)(\pi^{-1}(x)) \vert \leq \varepsilon$ holds.

\paragraph{Undesirability of Hausdorff Norm}

We now explain why the commonly used \emph{Hausdorff distance} is not a good fit in our setting.

First, let us recall its definition. The \emph{Hausdorff distance} $d_H$ is a metric on  $\mathcal{P}(E)$, defined for two finite subsets $X, Y \subseteq E$ by
\begin{equation*}
    d_H(X,Y) := \max\big\{ \max_{x\in X} d(x,Y), \max_{y\in Y} d(X,y) \big\},
\end{equation*}
where $d(a,B) = \min_{b \in B} d(a,b)$ denotes the minimal distance between a point $a \in E$ and the finite set $B \subseteq E.$

We next show that $d_H$ and $d_\Pi$ are tightly related in the absence of perfect clones.
 Specifically, for any $X \in\mathcal{P}$, there exists a $\delta>0$ small enough such that the $\delta$-neighborhoods of $X$ with respect to $d_H$ coincides with that defined by $d_\Pi.$ 
 
Formally, recall the definition of the  \emph{inner diameter of $X$} $\underline{d}(X) := \min_{x \neq x' \in X} d(x,x')$, and we define the \emph{canonical projection on $X$} as the operator $\pi_X : E \mapsto \mathcal{P}(X)$ verifying $\pi_X(y) =\argmin_{x \in X} d(x,y)$ for all $y$ in $E.$ We further associate singleton $\{x'\} = \argmin_{x \in X} d(x,y)$ with $x' \in X$ and denote by $\restr{\pi_X}{Y}$ its restriction to a finite subset $Y\subseteq E.$

\begin{lemma}\label{lem:conv_hausdorff}
     Let $X$ be a finite subset of $E$ and $\delta$ satisfy $\underline{d}(X) /2 >\delta>0.$ 
     A finite subset $Y \in \mathcal{P}(E)$ is at distance $d_H(X,Y) \leq \delta$ if and only if the canonical projection $\restr{\pi_X}{Y}$ is the unique surjective map $\pi: Y \mapsto X$ such that $\max_{y \in Y} d(y,\pi(y)) \leq \delta$ holds.
\end{lemma}
\begin{proof}
Let $X$ be a finite subset in $\mathcal{P}(E)$ and let $\delta$ satisfy $$\max_{x,x' \in X} d(x,x') / 2 >\delta>0.$$

\smallskip
We first show the converse $\impliedby.$ Consider a finite subset $Y$ such that $\pi = \restr{\pi_X}{Y}$ is surjective and verifies $d(y,\pi(y)) \leq \delta.$
We directly have $d(X,y) \leq d(\pi(y), y) \leq \delta$ for all $y\in Y.$
Moreover for $x\in X$, the set $\pi^{-1}(x)$ is non-empty by surjectivity of $\pi$, and we similarly get $d(x,Y) \leq \min_{y \in \pi^{-1}(x)} d(\pi(y), y) \leq \delta.$
Taking the maximum over $Y$ and $X$ respectively, we finally establish that $\max_{y \in Y} d(X,y) \leq \delta$ and $\max_{x \in X} d(x,Y) \leq \delta$, which, combined with the definition of the Hausdorff norm, gives the desired result $d_H(X,Y) \leq \delta.$

\smallskip
We now turn to the direction $\implies.$ Let $Y$ be a finite subset in  $\mathcal{P}(E)$ that verifies $d_H(X,Y) \leq \delta.$ 
We first show that $\pi = \restr{\pi_X}{Y}$ is well-defined and surjective. 
Indeed, consider $x$ in $X$ and let $y$ be an element of $Y$ closest to $x.$ We then directly get $d(x,y) = d(x,Y) \leq d_H(X,Y) \leq \delta < \max_{z,z' \in X} d(z,z') /2.$
Moreover, let $x'$ be an element of $X$ different from $x$:
%Now suppose that $\pi(y)$ is different from $x.$ 
by triangle inequality, we get
$d(y, x') \geq d(x, x') - d(x,y) \geq \max_{z,z' \in X} d(z,z') - \delta  >  \delta.$
This implies that $x$ is the only element in $X$ at distance at most $\delta$ of $y$, hence we have $\pi(y) =x$ and $\pi$ is well-defined and surjective.

Moreover, any other choice of $\pi(y) = x' \neq x$ will break the property $\max_{y\in Y} d(y, \pi(y))$, and $\restr{\pi_X}{Y}$ is the only choice.

\end{proof}

Note that this implies $\Pi(Y,X) = \{\restr{\pi_X}{Y}\}$ and the $\delta$-neighborhood of $X$ according to $d_\Pi$ coincides with that of $d_H.$

Importantly though, this only holds for a choice of $\delta$ that depends on the subset $X$, and the topologies of $d_H$ and $d_\Pi$ are not uniformly similar. This problem becomes even more apparent when we introduce perfect clones, c.f., Figure~\ref{fig:d_H_perfect_clones}.

\begin{figure}[!tbh]
    \centering
    \begin{tikzpicture}[scale=0.6]
    \def\radius{0.7}
    \definecolor{PaperBlue}{HTML}{1f77b4}
    \definecolor{PaperRed}{HTML}{d62728}

    % Define blue points (perfect clones)
    \coordinate (x1) at (0, 0);
    \coordinate (x2) at (0, 0);        % perfect clone of x1
    \coordinate (x3) at (0, 3);
    \coordinate (x4) at (0, 3);        % perfect clone of x3

    % Labels
    \node at (0, 2) {\(X = \{ \textcolor{PaperBlue}{x_1} , \textcolor{PaperRed}{ x_3}, \textcolor{PaperRed}{x_4} \}\)};
    \node at (0, 1) {\(X' = \{ \textcolor{PaperBlue}{x_1}, \textcolor{PaperBlue}{x_2}, \textcolor{PaperRed}{x_3} \}\)};

    % Draw blue points (with label offset to avoid overlap)
    \fill[PaperRed] (x1) circle (2pt);
    \node[PaperRed, below right] at (x1) {$x_3 \sim_d x_4$};

    \fill[PaperBlue] (x3) circle (2pt);
    \node[PaperBlue, above right] at (x3) {$x_1 \sim_d x_2$};

    \end{tikzpicture}
    \caption{Illustration of the difference of topologies between $d_H $and $d_\Pi$ in the presence of  perfect clones $x_1 \sim_d x_2$ and $x_3 \sim_d x_4.$ On one hand we have $d_H(X,X') =0$, on the other hand $d_\Pi(X,X') = \max_{y,z \in X} d(y,z) $ can be arbitrarily large.}
    \label{fig:d_H_perfect_clones}
\end{figure}

One might wonder whether Axiom~\ref{axi:indiv_cont} could be replaced by requiring that the weighting function $f$ be continuous from  $(\mathcal{P}(E), d_H)$ to $(\Delta_{\mathcal{P}}(E), W_1)$, even when $d$, and hence also $d_H$ and $W_1$, are pseudo-metrics -- that is, in the presence of perfect clones. We now show that any such weighting function would violate Axiom~\ref{axi:clone_fair_uni}.

\begin{lemma}
    Let $d$ be a pseudo-metric on $E$, and let $f$ be a weighting function that is continuous from  $(\mathcal{P}(E), d_H)$ to $(\Delta_{\mathcal{P}}(E), W_1)$, where $d_H$ and $W_1$ are defined with respect to $d$. If $f$ also satisfies Axiom~\ref{axi:sym}, then it necessarily violates Axiom~\ref{axi:clone_fair_uni}.
\end{lemma}
\begin{proof}
Let $(E,d)$ be a pseudo-metric space such that the metric space induced by the vanishing of the pseudo-metric is $(E^*, d^*) = (\mathbb{R}^3, d_2).$

We revisit the construction shown in Figure~\ref{fig:visualization_power_two}, now setting  $\gamma =0 $, i.e., we consider that $v_{\alpha,0}^+$ and $v_{\alpha,0}^-$ are perfect clones.
Accordingly, we examine the sets $S_{\alpha,\alpha,0} = \{o, u_\alpha, v^+_{\alpha, 0},  v^-_{\alpha, 0} \}$ and $S_\alpha = \{o, u_\alpha, u_{-\alpha}\}.$

On one hand, Axiom~\ref{axi:sym} implies that  $f(S_{\alpha,\alpha,0})(v^+_{\alpha, 0}) = f(S_{\alpha,\alpha,0})(v^-_{\alpha, 0})$ since $ v^+_{\alpha, 0}$ and $ v^-_{\alpha, 0}$ are symmetric in $S_{\alpha,\alpha,0}.$
Moreover, since $d_H(S_\alpha, S_{\alpha,\alpha,0}) =0$, the continuity of $f$ implies that $W_1(f(S_{\alpha,\alpha,0}),f(S_\alpha)) =0$, and we have in particular $f(S_\alpha)(u_{-\alpha}) = f(S_{\alpha,\alpha,0})(v^-_{\alpha, 0}) + f(S_{\alpha,\alpha,0})(v^-_{\alpha, 0})$, as well as $f(S_\alpha)(u_{\alpha}) = f(S_{\alpha,\alpha,0})(u_{\alpha}).$

On the other hand, the previous arguments still apply to $S_\alpha$ and imply  $\lim_{\alpha \to 0} f(S_\alpha)(u_\alpha) = \lim_{\alpha \to 0} f(S_\alpha)(u_{-\alpha}) = 1/4. $

Combining the two arguments, we finally get, i.e., 
\begin{equation*}
    \lim_{\alpha \to 0} f(S_{\alpha,\alpha,0})(u_\alpha) = \frac{1}{4} \quad \text{and} \quad  \lim_{\alpha \to 0} f(S_{\alpha,\alpha,0})(v^-_{\alpha, 0}) = \frac{1}{8}. 
\end{equation*}
This is a contradiction with Axiom~\ref{axi:clone_fair_uni} since $\lim_{\alpha \to 0} d(u_\alpha, v^-_{\alpha, 0} ) =0.$

\end{proof}

\section{Self-Isometries in Euclidean Space}\label{sec:self-iso_eucli}

In this section, we show that a self-isometry $\sigma_S$ on a finite subset $S \subseteq \mathbb{R}^n$ can be uplifted to a full-fledged isometry on $\mathbb{R}^n$, that is a rigid transformation.

\begin{lemma}
\label{lem:isometry_rigid_transformation}
Let $S=\{x_1, \dots, x_m\}$ be a finite subset of the Euclidean space $\mathbb{R}^n$ and  $\sigma_S:S\mapsto S$ be a self-isometry on $S.$
 There then exists an $n\times n$-orthogonal matrix $Q$ and an $n$-dimensional vector $t$ such that, for all $i\in[m]$, we have $\sigma(x_i) = Q x_i + t.$
\end{lemma}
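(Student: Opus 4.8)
The plan is to reduce the statement to a linear-algebraic fact about Gram matrices and then extend a partial linear isometry to a global orthogonal map. First I would translate the configuration to the origin: set $y_i = x_i - x_1$ and $z_i = \sigma_S(x_i) - \sigma_S(x_1)$ for $i \in [m]$, so that $y_1 = z_1 = 0$. Since $\sigma_S$ preserves distances on $S$, we have $\|y_i - y_j\| = \|x_i - x_j\| = \|\sigma_S(x_i) - \sigma_S(x_j)\| = \|z_i - z_j\|$ for all $i,j$, and taking $j = 1$ also $\|y_i\| = \|z_i\|$. Expanding $\|y_i - y_j\|^2 = \|y_i\|^2 + \|y_j\|^2 - 2\langle y_i, y_j\rangle$ and using these two identities yields $\langle y_i, y_j\rangle = \langle z_i, z_j\rangle$ for all $i,j \in [m]$; that is, the two families share the same Gram matrix.

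Next I would build a linear isometry between the two spans. Let $V = \operatorname{span}\{y_1,\dots,y_m\}$ and $W = \operatorname{span}\{z_1,\dots,z_m\}$, and choose indices $i_1,\dots,i_k$ such that $y_{i_1},\dots,y_{i_k}$ is a basis of $V$. Since a finite family of vectors is linearly independent if and only if its Gram matrix is invertible, and the corresponding Gram submatrices of the $y$'s and the $z$'s coincide, the vectors $z_{i_1},\dots,z_{i_k}$ form a basis of $W$; in particular $\dim V = \dim W = k$. Define $L : V \to W$ linearly by $L(y_{i_j}) = z_{i_j}$. The key verification is that $L(y_i) = z_i$ for \emph{every} $i \in [m]$, not only the basis indices: writing $y_i = \sum_j c_j y_{i_j}$ and $w = z_i - \sum_j c_j z_{i_j} \in W$, the Gram identity gives $\langle w, z_{i_l}\rangle = \big\langle y_i - \sum_j c_j y_{i_j},\, y_{i_l}\big\rangle = 0$ for each $l$, so $w \in W$ is orthogonal to a basis of $W$, hence $w = 0$. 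Consequently $\langle L y_i, L y_j\rangle = \langle z_i, z_j\rangle = \langle y_i, y_j\rangle$, and by bilinearity $L$ preserves inner products on all of $V$.

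Finally I would extend $L$ to an orthogonal endomorphism of $\mathbb{R}^n$. Since $\dim V = \dim W$, the orthogonal complements $V^\perp$ and $W^\perp$ have equal dimension $n-k$; pick any linear isometry $L' : V^\perp \to W^\perp$ (mapping one orthonormal basis to another) and let $Q = L \oplus L'$ relative to $\mathbb{R}^n = V \oplus V^\perp = W \oplus W^\perp$. Then $Q$ preserves inner products on all of $\mathbb{R}^n$, so its matrix is orthogonal, and $Q y_i = L y_i = z_i$. Unwinding the translation, $\sigma_S(x_i) - \sigma_S(x_1) = z_i = Q(x_i - x_1)$, i.e.\ $\sigma_S(x_i) = Q x_i + t$ with $t := \sigma_S(x_1) - Q x_1$, which is the claim. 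The only delicate point is the middle step --- establishing that $L$ reproduces $z_i$ at every point of the configuration rather than just at the chosen basis --- and this is exactly what the Gram-orthogonality argument above secures; the translation normalization and the extension to $\mathbb{R}^n$ are routine.
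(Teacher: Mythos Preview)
Your proof is correct and follows essentially the same route as the paper: translate by $x_1$, show via polarization that the Gram matrices of $\{y_i\}$ and $\{z_i\}$ coincide, and deduce the existence of an orthogonal $Q$ with $Qy_i=z_i$, then set $t=\sigma_S(x_1)-Qx_1$. The only difference is that the paper packages the step ``equal Gram matrices $\Rightarrow$ orthogonal $Q$'' as a citation to \cite[Theorem~3.7.11]{horn_matrix_2012}, whereas you unfold that lemma by hand via the span--basis--orthogonal-complement extension; your version is thus self-contained, but the underlying argument is the same.
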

\begin{proof}
For an index $2 \leq i \leq m$, we define $y_i = x_{i} - x_1$ and $z_i = \sigma(x_{i})  - \sigma(x_1).$ 
We then concatenate the $y_i$ (resp. $z_i$) and define the $n\times(m-1)$ matrix $Y = [y_2, \dots, y_{n}]$ (resp. $Z = [z_2, \dots, z_{m}]$).
For $i,j \in [m-1]$, note that the following holds:
\begin{equation*}
    \begin{aligned}
        (Y^\top Y)_{i,j} 
        &= y_{i+1} \cdot y_{j+1} ,\\
        &= \frac{1}{2} \Big(\Vert y_{i+1} + y_{j+1}\Vert^2 - \Vert y_{i+1} \Vert^2 - \Vert y_{j+1} \Vert^2 \Big),\\
        &= \frac{1}{2} \Big( d(x_{i+1}, x_{j+1})^2 - d(x_{i+1}, x_1)^2 
        - d(x_{j+1}, x_1)^2 \Big),  \\
        &\stackrel{(a)}{=} \frac{1}{2} \Big( d\big(\sigma(x_{i+1}), \sigma(x_{j+1})\big)^2 - d\big(\sigma(x_{i+1}), \sigma(x_1)\big)^2 ,\\
        &- d\big(\sigma(x_{j+1}),\sigma( x_1)\big)^2 \Big) ,\\
        &= z_{i+1} \cdot z_{j+1} = (Z^\top Z)_{i,j}.
    \end{aligned}
\end{equation*}
Equality $(a)$ holds since $\sigma$ is an isometry on $S$.

\smallbreak
By \cite[Theorem 3.7.11]{horn_matrix_2012}, there exists an orthogonal $n\times n$ matrix $Q$ such that $Z=Q Y$, and we obtain for all $i \in[m]$ that $\sigma(x_{i}) = Q (x_{i} - x_1) +  \sigma(x_1)$ (the case $i=1$ holds trivially). 
Rewriting $t =  \sigma(x_1) - Q x_1 $ gives the desired result.

\end{proof}

Note moreover that the Euclidean group $E(n)$, i.e., the group of isometries in Euclidean space,  is exactly the semi-direct product of the orthogonal group $O(n)$ extended by the translational group $T(n).$ In other words, Lemma~\ref{lem:isometry_rigid_transformation} ensures that all self-isometries $\sigma_S: S \mapsto S$ on a finite subset $S \in \mathbb{R}^n$ can be extended to a full-fledged isometry $\sigma:\mathbb{R}^n \mapsto \mathbb{R}^n.$

\section{Comparison with Voronoi Weighing Function}\label{sec:voronoi}

In \cite{procaccia_clone-robust_2025}, the authors propose to weigh elements of a set $S$ according to their cell's size in an associated Voronoi diagram.

Formally, let $E$ be a subset of $\mathbb{R}^n$ that is Lebesgue-measurable with finite measure $\mu(E)<\infty$, and let $S$ be a finite subset of $E.$ 
The Voronoi weighing function is then defined as 
\begin{equation}
    V(S): x\in S \mapsto \int_{z\in E} \frac{\mathbb{1}_{x \in \pi_S(z)}}{\mu(E)} d\mu(z).
\end{equation}

Note that this definition depends heavily on the arbitrary choice of integration subspace $E$, which introduces more subjectivity than the single parameter $r>0$ used to define the integration domain $B_r(S)$ in our method.

Which of the Axioms in Section~\ref{sec:axioms} does the weighting function $V$ verify?

\begin{figure*}[!tbh]
     \centering
     \begin{subfigure}[t]{0.45\textwidth}
         \centering
         \begin{tikzpicture}[scale=4]

        % Draw square boundary
        \draw[thick] (0,0) rectangle (1,1);
        
        % Fill Voronoi regions
        \fill[blue!30] (0,0) -- (0,1) -- (0.5,0.5) -- (0.5,0) -- cycle;      % Region for (0,0)
        \fill[green!30] (0.5,0) -- (1,0) -- (1,0.5) -- (0.5,0.5) -- cycle;   % Region for (1,0)
        \fill[red!30] (0,1) -- (1,1) -- (1,0.5) -- (0.5,0.5) -- cycle;       % Region for (1,1)
        
        % Draw Voronoi cell boundaries
        \draw[thick] (0.5,0) -- (0.5,0.5);
        \draw[thick] (0.5,0.5) -- (1,0.5);
        \draw[thick] (0.5,0.5) -- (0,1);
        
        % Draw the generator points
        \foreach \x/\y in {0/0, 1/0, 1/1}
            \fill[red] (\x,\y) circle (0.015);
        
        % Axes ticks
        \foreach \x in {0,0.2,0.4,0.6,0.8,1}
            \node[below] at (\x,0) {\small \x};
        
        \foreach \y in {0,0.2,0.4,0.6,0.8,1}
            \node[left] at (0,\y) {\small \y};
        
        \end{tikzpicture}
        \caption{Diagram for $S = \{(0,0), (1,0), (1,1)\}$.}
        \label{fig:voronoi_procaccia_1}
     \end{subfigure}
     \hfill
     \begin{subfigure}[t]{0.45\textwidth}
         \centering
        \begin{tikzpicture}[scale=4]
        
        % Draw square boundary
        \draw[thick] (0,0) rectangle (1,1);
        
        % Fill Voronoi regions manually
        \fill[blue!30] (0,0) -- (0,0.92) -- (0.5,0.45) -- (0.5,0) -- cycle;         % Region for (0,0)
        \fill[green!30] (0.5,0) -- (1,0) -- (1,0.5) -- (0.5,0.45) -- cycle;     % Region for (1,0)
        \fill[orange!30] (0, 0.92) -- (0, 1)-- (0.95,1) -- (0.95, 0.5) -- (0.5,0.45) -- cycle;   % Region for (1,1)
        \fill[red!30] (0.95,1) -- (1,1) -- (1,0.5) -- (0.95,0.5) -- cycle;         % Region for (0.9,1)
        
        % Draw Voronoi boundaries
        \draw[thick] (0.5,0) -- (0.5,0.45);
        \draw[thick] (0,0.92) -- (0.5,0.45);
        \draw[thick] (1,0.5) -- (0.95,0.5);
        \draw[thick] (0.95,0.5) -- (0.95,1);
        \draw[thick] (0.95,0.5) -- (0.5,0.45);
        
        % Draw generator points
        \foreach \x/\y in {0/0, 1/0, 1/1, 0.9/1}
            \fill[red] (\x,\y) circle (0.015);
        
        % Axis ticks
        \foreach \x in {0,0.2,0.4,0.6,0.8,1}
            \node[below] at (\x,0) {\small \x};
        
        \foreach \y in {0,0.2,0.4,0.6,0.8,1}
            \node[left] at (0,\y) {\small \y};
        
        \end{tikzpicture}
        \caption{Diagram for $S \cup \{(0.9,1)\}$.}
        \label{fig:voronoi_procaccia_2}
     \end{subfigure}
     
        \caption{Effect of adding approximate clones to  Voronoi diagrams with $E = [0,1] \times[0,1].$ Illustrations from \protect\cite{procaccia_clone-robust_2025}.}
        \label{fig:voronoi_procaccia}
\end{figure*}

\begin{figure*}[!tbh]
     \centering
     \begin{subfigure}[t]{0.45\textwidth}
         \centering
         \begin{tikzpicture}[scale=4]

            \definecolor{PaperBlue}{HTML}{1f77b4}
            \definecolor{PaperOrange}{HTML}{ff7f0e}
            \definecolor{PaperGreen}{HTML}{2ca02c}
            \definecolor{PaperRed}{HTML}{d62728}

            % Draw square boundary
            \draw[thick] (0,0) rectangle (1,1);
    
            % Fill Voronoi regions
            \fill[PaperBlue, opacity =0.2] (0,0) -- (0,1) -- (0.45,1) -- (0.45,0) -- cycle;      % Region for (0,0)
            \fill[PaperRed, opacity = 0.2] (0.45,0) -- (0.45,1) -- (0.55,1) -- (0.55,0) -- cycle;   % Region for (1,0)
            \fill[PaperGreen, opacity =0.2] (0.55,1) -- (0.55,0) -- (1,0) -- (1,1) -- cycle;       % Region for (1,1)

         % Draw the generator points
            \foreach \x/\y in {0.4/0, 0.5/0, 0.6/0}
                \fill[red] (\x,\y) circle (0.015);
        
        % Draw Voronoi cell boundaries
        \draw[thick] (0.45,0) -- (0.45,1);
        \draw[thick] (0.55,0) -- (0.55,1);
        
        % Axes ticks
        \foreach \x in {0,0.2,0.4,0.6,0.8,1}
            \node[below] at (\x,0) {\small \x};
        
        \foreach \y in {0,0.2,0.4,0.6,0.8,1}
            \node[left] at (0,\y) {\small \y};
        
        \end{tikzpicture}
        \caption{Diagram for the parametric set $S_\delta = \{ (0.5,0), (0.5 + \delta, 0), (0.5 - \delta, 0) \}.$}
        \label{fig:voronoi_non_unif_1}
     \end{subfigure}
     \hfill
     \begin{subfigure}[t]{0.45\textwidth}
         \centering
        \begin{tikzpicture}[scale=4]
        
            \definecolor{PaperBlue}{HTML}{1f77b4}
            \definecolor{PaperOrange}{HTML}{ff7f0e}
            \definecolor{PaperGreen}{HTML}{2ca02c}
            \definecolor{PaperRed}{HTML}{d62728}

            % Draw square boundary
            \draw[thick] (0,0) rectangle (1,1);

            % Fill Voronoi regions
            \fill[PaperBlue, opacity =0.2] (0,0) -- (0,0.5) -- (0.45,0.05) -- (0.45,0) -- cycle;      % Region for (0,0)
            \fill[PaperRed, opacity = 0.2] (0.45,0) -- (0.45,0.05) -- (0.55,0.05) -- (0.55,0) -- cycle;   % Region for (1,0)
            \fill[PaperGreen, opacity =0.2] (0.55,0) -- (0.55,0.05) -- (1,0.5) -- (1,0) -- cycle;       % Region for (1,1)
            \fill[PaperOrange, opacity =0.2] (0,1) -- (0,0.5) -- (0.45,0.05) -- (0.55,0.05) -- (1,0.5) -- (1,1) -- cycle;      % Region for (0,0)
            
             % Draw the generator points
            \foreach \x/\y in {0.4/0, 0.5/0, 0.6/0, 0.5/0.1}
                \fill[red] (\x,\y) circle (0.015);
            
            % Draw Voronoi cell boundaries
            \draw[thick] (0.45,0) -- (0.45,0.05);
            \draw[thick] (0.55,0) -- (0.55,0.05);
            \draw[thick] (0.45,0.05) -- (0.55,0.05);
            \draw[thick] (0,0.5) -- (0.45,0.05);
            \draw[thick] (1,0.5) -- (0.55,0.05);
            
            % Axes ticks
            \foreach \x in {0,0.2,0.4,0.6,0.8,1}
                \node[below] at (\x,0) {\small \x};
            
            \foreach \y in {0,0.2,0.4,0.6,0.8,1}
                \node[left] at (0,\y) {\small \y};
            
        \end{tikzpicture}
        \caption{Diagram for $S'_\delta = S_\delta \cup \{(0.5, \delta) \}.$}
        \label{fig:voronoi_non_unif_2}
     \end{subfigure}
     
        \caption{While the continuity of $V$ breaks exactly in the presence of perfect clones,
        it remains non-uniform even when limited to approximate clones: for any $\delta>0$, adding a $\delta$-clone to a set $S_\delta$ can cause individual weights to diverge.}
        \label{fig:voronoi_non_uniform}
\end{figure*}

\begin{itemize}
    \item While $V$ verifies Axiom~\ref{axi:pos}, Figure~\ref{fig:voronoi_procaccia_2} shows that an element may receive arbitrarily small weight. This is to be contrasted with our result in the proof of Theorem~\ref{thm:local_vote_rep_func} (c.f. Section~\ref{sec:proof}, where we show that the individual weights are all greater or equal to $1/\vert S \vert^2.$

    \item The Voronoi weighting function does not satisfy Axiom~\ref{axi:sym} per say, although it should be noted that, for a self-isometry $\sigma: E\mapsto E$ on the entire space and any finite subset $S\subseteq E$, we  have $V(S)(x) = V(S)(\sigma(x)) $ for all $x \in S.$
    Whether this weakening of Axiom~\ref{axi:sym} is sufficient is left to the discretion of the practitioner. 
    It is important to note, however, that this property is particularly fragile: the definition of $V$ depends on an arbitrary choice of the ambient space $E$, and the class of self-isometries --and thus the symmetry behavior of $V$-- varies significantly with this choice.

    \item Figure~\ref{fig:voronoi_procaccia_2} provides a direct illustration that the weighting function $V$ violates Axiom~\ref{axi:clone_fair_uni}.

    \item While Voronoi cells vary continuously as long as approximate clones remain separated by a fixed minimal distance, this continuity breaks down near perfect clones --discontinuous jumps in individual weights occur as soon as approximate clones converge to being exact duplicates (see once again Figure~\ref{fig:voronoi_procaccia}).
    Even without perfect clones, continuity is not uniform (c.f. Figure~\ref{fig:voronoi_non_uniform}), and the Voronoi weighting function $V$ fails Axiom~\ref{axi:indiv_cont}.
    
    \item Finally, the Voronoi weighting function seems to satisfy Axiom~\ref{axi:alpha_clone_locality}, although the extent to which it does so uniformly remains unclear.
\end{itemize}

\end{document}